\newtheorem{theorem}{Theorem}[section]
\newtheorem{lemma}{Lemma}[section]
\newtheorem{prop}{Proposition}[section]
\newtheorem{defin}{Definition}[section]
\numberwithin{equation}{section}
\newcommand{\footremember}[2]{%
    \footnote{#2}
    \newcounter{#1}
    \setcounter{#1}{\value{footnote}}%
}
\begin{document}

\title{Universal approximations of invariant maps \\ by neural networks}

\author{Dmitry Yarotsky\footremember{skoltech}{Skolkovo Institute of Science and Technology,  Nobelya Ulitsa 3, Moscow  121205,
Russia}\footremember{iitp}{Institute for Information Transmission Problems, Bolshoy Karetny 19 build.1, Moscow 127051, Russia}\\
\texttt{d.yarotsky@skoltech.ru}
}
\date{}
\maketitle

\begin{abstract}
We describe generalizations of the universal approximation theorem for neural networks to maps invariant or equivariant with respect to linear representations of groups. Our goal is to establish network-like computational models that are both invariant/equivariant and provably complete in the sense of their ability to approximate any continuous invariant/equivariant map. Our contribution is three-fold. First, in the general case of compact groups we propose a construction of a complete invariant/equivariant network using an intermediate polynomial layer. We invoke classical theorems of Hilbert and Weyl to justify and simplify this construction; in particular, we describe an explicit complete ansatz for approximation of permutation-invariant maps. Second, we consider groups of translations and prove several versions of the universal approximation theorem for convolutional networks in the limit of continuous signals on euclidean spaces. Finally, we consider 2D signal transformations equivariant with respect to the group SE(2) of rigid euclidean motions. In this case we introduce the ``charge--conserving convnet'' -- a convnet-like computational model based on the decomposition of the feature space into isotypic representations of SO(2). We prove this model to be a universal approximator for continuous SE(2)--equivariant signal transformations.  

\medskip 
\noindent 
\textbf{Keywords: } neural network, approximation, linear representation, invariance, equivariance, polynomial, polarization, convnet
\end{abstract}

\tableofcontents

\section{Introduction}
\subsection{Motivation}
An important topic in learning theory is the design of predictive models properly reflecting symmetries naturally present in the data (see, e.g., \cite{burkhardt2001invariant, schulz1995invariant, Reisert:2008}). Most commonly, in the standard context of supervised learning, this means that our predictive model should be \emph{invariant} with respect to a suitable \emph{group of transformations}: given an input object, we often know that its class or some other property that we are predicting does not depend on the object representation (e.g., associated with a particular coordinate system), or for other reasons does not change under certain transformations. In this case we would naturally like the predictive model to reflect this independence. If $f$ is our predictive model and $\Gamma$ the group of transformations, we can express the property of invariance by the identity $f(\mathcal A_\gamma \mathbf x)= f(\mathbf x)$, where $\mathcal A_\gamma \mathbf x$ denotes the action of the transformation $\gamma\in\Gamma$ on the object $\mathbf x$. 

There is also a more general scenario where the output of $f$ is another complex object that is supposed to transform appropriately if the input object is transformed. This scenario is especially relevant in the setting of multi-layered (or stacked) predictive models, if we want to propagate the symmetry through the layers. In this case one speaks about \emph{equivariance}, and  mathematically it is described by the identity $f(\mathcal A_\gamma \mathbf x)= \mathcal A_\gamma f(\mathbf x)$, assuming that the transformation $\gamma$ acts in some way not only on inputs, but also on outputs of $f$. (For brevity, here and in the sequel we will slightly abuse notation and denote any action of $\gamma$ by $\mathcal A_\gamma$, though of course in general the input and output objects are different and $\gamma$ acts differently on them. It will be clear which action is meant in a particular context).  

A well-known important example of equivariant transformations are convolutional layers in neural networks, where the group $\Gamma$ is the group of grid translations, $\mathbb Z^d$.   

We find it convenient to roughly distinguish two conceptually different approaches to the construction of invariant and equivariant models that we refer to as the \emph{symmetrization-based} one and the \emph{intrinsic} one. The symmetrization-based approach consists in starting from some asymmetric model, and symmetrizing it by a group averaging. On the other hand, the intrinsic approach consists in imposing prior structural constraints on the model that guarantee its symmetricity.

In the general mathematical context, the difference between the two approaches is best illustrated with the example of \emph{symmetric polynomials} in the variables $x_1,\ldots,x_n$, i.e., the polynomials invariant with respect to arbitrary permutations of these variables. With the symmetrization-based approach, we can obtain any invariant polynomial by starting with an arbitrary polynomial $f$ and symmetrizing it over the group of permutations $S_n$, i.e. by defining $f_{\mathrm{sym}}(x_1,\ldots,x_n)=\frac{1}{n!}\sum_{\rho\in S_n}f(x_{\rho(1)},\ldots,x_{\rho(n)}).$ On the other hand, the intrinsic approach is associated with the fundamental theorem of symmetric polynomials, which states that any invariant polynomial $f_{\mathrm{sym}}$ in $n$ variables can be obtained as a superposition $f(s_1,\ldots,s_n)$ of some polynomial $f$ and the \emph{elementary symmetric polynomials} $s_1,\ldots,s_n$. Though both approaches yield essentially the same result (an arbitrary symmetric polynomial), the two constructions are clearly very different.    

In practical machine learning, symmetrization is ubiquitous. It is often applied both on the level of data and the level of models. This means that, first, prior to learning an invariant model, one augments the available set of training examples $(\mathbf x, f(\mathbf x))$ by new examples of the form $(\mathcal A_\gamma\mathbf x, f(\mathbf x))$ (see, for example, Section B.2 of \cite{Thoma:2017} for a list of transformations routinely used to augment datasets for image classification problems). Second, once some, generally non-symmetric, predictive model $\widehat f$ has been learned, it is symmetrized by setting $\widehat f_{\rm sym}(\mathbf x)=\frac{1}{|\Gamma_0|}\sum_{\gamma\in\Gamma_0}\widehat f(\mathcal A_\gamma\mathbf x)$, where $\Gamma_0$ is some subset of $\Gamma$ (e.g., randomly sampled). 
This can be seen as a manifestation of the symmetrization-based approach, and its practicality probably stems from the fact that the real world symmetries are usually only approximate, and in this approach one can easily account for their imperfections (e.g., by adjusting the subset $\Gamma_0$). On the other hand, the weight sharing in convolutional networks (\cite{Waibel89, lecun1989generalization}) can be seen as a manifestation of the intrinsic approach (since the translational symmetry is built into the architecture of the network from the outset), and convnets are ubiquitous in modern machine learning \cite{lecun2015deep}.  

In this paper we will be interested in the theoretical opportunities of the intrinsic approach in the context of approximations using neural-network-type models. Suppose, for example, that $f$ is an invariant map that we want to approximate with the usual ansatz of a perceptron with a single hidden layer, $\widehat f(x_1,\ldots,x_d)= \sum_{n=1}^N c_n\sigma(\sum_{k=1}^d w_{nk}x_k+h_n)$ with some nonlinear activation function $\sigma$. Obviously, this ansatz breaks the symmetry, in general. Our goal is to modify this ansatz in such a way that, first, it does not break the symmetry and, second, it is \emph{complete} in the sense that it is not too specialized and any reasonable invariant map can be arbitrarily well approximated by it. In Section \ref{sec:compact} we show how this can be done by introducing an extra polynomial layer into the model. In Sections \ref{sec:translations}, \ref{sec:charge} we will consider more complex, deep models (convnets and their modifications). We will understand completeness in the sense of the universal approximation theorem for neural networks \cite{pinkus1999approximation}.

Designing invariant and equivariant models requires us to decide how the symmetry information is encoded in the layers. A standard assumption, 
 to which we also will adhere in this paper, is that the group acts by linear transformations.  Precisely, when discussing invariant models we are looking for maps of the form
\begin{equation}\label{eqinv}f:V\to \mathbb R,
\end{equation}
where $V$ is a vector space carrying a linear representation $R:\Gamma\to \mathrm{GL}(V)$ of a group $\Gamma$. 
More generally, in the context of multi-layer models
\begin{equation}\label{eqequiv}f:V_1\stackrel{f_1}{\to}V_2\stackrel{f_2}{\to}\ldots 
\end{equation}
we assume that the vector spaces $V_k$ carry linear representations $R_k:\Gamma\to\mathrm{GL}(V_k)$ (the ``baseline architecture'' of the model), and we must then ensure equivariance in each link.
Note that a linear action of a group on the input space $V_1$ is a natural and general phenomenon. In particular, the action is linear if $V_1$ is a linear space of functions on some domain, 
and the action is induced by (not necessarily linear) transformations of the domain. Prescribing linear representations $R_k$ is then a viable strategy to encode and upkeep the symmetry in subsequent layers of the model.  

From the perspective of approximation theory, we will be interested in \emph{finite} computational models, i.e. including finitely many operations as performed on a standard computer. Finiteness is important for potential studies of approximation rates (though such a study is not attempted in the present paper). Compact groups have the nice property that their irreducible linear representations are finite--dimensional. This allows us, in the case of such groups, to modify the standard shallow neural network ansatz so as to obtain a computational model that is finite, fully invariant/equivariant and complete, see Section \ref{sec:compact}. On the other hand, irreducible representations of non-compact groups such as $\mathbb R^\nu$ are infinite-dimensional in general. As a result, finite computational models can be only approximately $\mathbb R^\nu$--invariant/equivariant. Nevertheless, we show in Sections \ref{sec:translations}, \ref{sec:charge} that complete $\mathbb R^\nu$-- and SE($\nu$)--equivariant models can be rigorously described in terms of appropriate limits of finite models.

\subsection{Related work}
Our work can be seen as an extension of results on the universal approximation property of neural networks (\cite{cybenko1989approximation,pinkus1999approximation,leshno1993multilayer,pinkus1996tdi,hornik1993some,funahashi1989approximate,hornik1989multilayer,mhaskar1992approximation}) to the setting of group invariant/equivariant maps and/or infinite-dimensional input spaces. 

Our general results in Section \ref{sec:compact} are based on classical results of the theory of polynomial invariants (\cite{hilbert1890theorie,hilbert1893vollen,weyl1946classical}).

An important element of constructing invariant and equivariant models is the extraction of invariant and equivariant features. In the present paper we do not focus on this topic, but it has been studied extensively, see e.g. general results along with applications to 2D and 3D pattern recognition in \cite{schulz1995invariant, Reisert:2008, burkhardt2001invariant,Skibbe:2013, manay2006integral}.

In a series of works reviewed in \cite{cohen2017analysis}, the authors study expressiveness of deep convolutional networks using hierarchical tensor decompositions and convolutional arithmetic circuits. In particular, representation universality of several network structures is examined in \cite{cohen2016convolutional}.

In a series of works reviewed in \cite{poggio2017and}, the authors study expressiveness of deep networks from the perspective of approximation theory and hierarchical decompositions of functions. Learning of invariant data representations and its relation to information processing in the visual cortex has been discussed in \cite{anselmi2016invariance}.

In the series of papers \cite{mallat2012group,mallat2016understanding,sifre2014rigid,bruna2013invariant}, multiscale wavelet-based group invariant scattering operators and their applications to image recognition have been studied. 

There is a large body of work proposing specific constructions of networks for applied group invariant recognition problems, in particular image recognition approximately invariant with respect to the group of rotations or some of its subgroups: deep symmetry networks of \cite{gens2014deep}, G-CNNs of \cite{cohen2016group}, networks with extra slicing operations in \cite{dieleman2016exploiting}, RotEqNets  of \cite{marcos2016rotation}, networks with warped convolutions in \cite{henriques2016warped}, Polar Transformer Networks of \cite{esteves2017polar}.

\subsection{Contribution of this paper}

As discussed above, we will be interested in the following general question: assuming there is a ``ground truth'' invariant or equivariant map $f$, how can we ``intrinsically'' approximate it by a neural-network-like model?  Our goal is to describe models that are finite, invariant/ equivariant (up to limitations imposed by the finiteness of the model) and provably complete in the sense of approximation theory.  

Our contribution is three-fold:
\begin{itemize}
\item In Section \ref{sec:compact} we consider general compact groups and approximations by shallow networks. Using the classical polynomial invariant theory, we describe a general construction of shallow networks with an extra polynomial layer which are exactly invariant/equivariant and complete (Propositions \ref{th:invar}, \ref{th:equivar}). Then, we discuss how this construction can be improved using the idea of polarization and a theorem of Weyl (Propositions \ref{th:constr}, \ref{th:constr_equiv}). Finally, as a particular illustration of the ``intrinsic'' framework, we consider maps invariant with respect to the symmetric group $S_N$, and describe a corresponding neural network model which is $S_N$--invariant and complete (Theorem \ref{th:sn}). This last result is based on another theorem of Weyl.   
\item In Section \ref{sec:translations} we prove several versions of the universal approximation theorem for convolutional networks and groups of translations. The main novelty of these results is that we approximate maps $f$ defined on the infinite--dimensional space of \emph{continuous signals} on $\mathbb R^\nu$. Specifically, one of these versions (Theorem \ref{th:convmain}) states that a signal transformation $f:L^2(\mathbb R^\nu, \mathbb R^{d_V})\to L^2(\mathbb R^\nu, \mathbb R^{d_U})$ can be approximated, in some natural sense, by convnets without pooling if and only if $f$ is continuous and translationally--equivariant (here, by $L^2(\mathbb R^\nu, \mathbb R^{d})$ we denote the space of square-integrable functions $\mathbf \Phi:\mathbb R^\nu\to\mathbb R^d$).  Another version (Theorem \ref{th:convpool}) states that a map $f:L^2(\mathbb R^\nu, \mathbb R^{d_V})\to \mathbb R$ can be approximated by convnets with pooling if and only if $f$ is continuous.
\item In Section \ref{sec:charge} we describe a convnet-like model which is a universal approximator for signal transformations $f:L^2(\mathbb R^2, \mathbb R^{d_V})\to L^2(\mathbb R^2, \mathbb R^{d_U})$ equivariant with respect to the group SE(2) of rigid two-dimensional euclidean motions. We call this model \emph{charge--conserving convnet}, based on a 2D quantum mechanical analogy (conservation of the total angular momentum). The crucial element of the construction is that the operation of the network is consistent with the decomposition of the feature space into isotypic representations of SO(2). We prove in Theorem \ref{th:charge} that a transformation $f:L^2(\mathbb R^2, \mathbb R^{d_V})\to L^2(\mathbb R^2, \mathbb R^{d_U})$ can be approximated by charge--conserving convnets if and only if $f$ is continuous and SE(2)--equivariant.  
\end{itemize}

\section{Compact groups and shallow approximations}\label{sec:compact}
In this section we give several results on invariant/equivariant approximations by neural networks in the context of compact groups, finite-dimensional representations, and shallow networks. We start by describing the standard group-averaging approach in Section \ref{sec:sym}. In Section \ref{sec:polinveq} we describe an alternative approach, based on the invariant theory. In Section \ref{sec:polar} we show how one can improve this approach using polarization. Finally, in Section \ref{sec:sn} we describe an application of this approach to the symmetric group $S_N$.

\subsection{Approximations based on symmetrization}\label{sec:sym}
We start by recalling the universal approximation theorem, which will serve as a ``template'' for our invariant and equivariant analogs. There are several versions of this theorem (see the survey \cite{pinkus1999approximation}), we will use the general and easy-to-state version given in \cite{pinkus1999approximation}. 

\begin{theorem}[\cite{pinkus1999approximation}, Theorem 3.1]\label{th:leshno} Let $\sigma:\mathbb R\to\mathbb R$ be a continuous activation function  that is not a polynomial. Let $V=\mathbb R^d$ be a real finite dimensional vector space. Then any continuous map $f:V\to\mathbb R$ can be approximated, in the sense of uniform convergence on compact sets, by maps $\widehat f: V\to\mathbb R$ of the form 
\begin{equation}\label{eq:shallownn}
\widehat{f}(x_1,\ldots,x_d)=\sum_{n=1}^Nc_{n}\sigma\Big(\sum_{s=1}^d w_{ns} x_s+h_{n}\Big)
\end{equation}
with some coefficients $c_n,w_{ns},h_n$.
\end{theorem}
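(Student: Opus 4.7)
The plan is to follow the classical argument of Leshno--Lin--Pinkus--Schocken reviewed in the cited survey, organized in three stages: reduce the $d$-dimensional statement to a one-dimensional density claim using ridge functions and polarization; solve the one-dimensional problem first for smooth $\sigma$ by differentiating in the parameter $w$; and extend to merely continuous $\sigma$ by mollification. Once we show that every monomial $t^k$ lies in the uniform-on-compacts closure of $\mathrm{span}\{\sigma(wt+h):w,h\in\mathbb{R}\}$, the multivariate universal approximation follows at once from the Stone--Weierstrass theorem.

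For the reduction step, suppose we have established the one-dimensional claim. Given a compact $K\subset\mathbb{R}^d$, a continuous $f:K\to\mathbb{R}$ and $\varepsilon>0$, Weierstrass gives a polynomial $p$ with $\|f-p\|_{C(K)}<\varepsilon/2$. Writing $p$ as a finite sum of monomials and expressing each monomial $x_1^{a_1}\cdots x_d^{a_d}$ as a linear combination of pure powers $(w\cdot x)^{|\alpha|}$ via the standard polarization identity, it suffices to approximate each $(w\cdot x)^k$ on $K$ by $\sum_n c_n\sigma(w_n\,w\cdot x+h_n)$. But this is exactly the one-dimensional claim applied on the compact interval $\{w\cdot x:x\in K\}$, with $t$ replaced by $w\cdot x$.

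For the one-dimensional core, first assume $\sigma\in C^\infty$. The finite difference $\delta^{-1}[\sigma((w+\delta)t+h)-\sigma(wt+h)]$ converges uniformly on compacts to $t\,\sigma'(wt+h)$ as $\delta\to 0$, so this function lies in the closure of $\mathrm{span}\{\sigma(wt+h)\}$; iterating $k$ times and specializing to $w=0$ yields $t^k\sigma^{(k)}(h)$. Since $\sigma$ is not a polynomial, $\sigma^{(k)}$ does not vanish identically for any $k$, so picking $h_k$ with $\sigma^{(k)}(h_k)\neq 0$ and dividing delivers $t^k$. For continuous but possibly non-smooth $\sigma$, fix $\varphi\in C_c^\infty(\mathbb{R})$ and consider $\sigma_\varphi=\sigma\ast\varphi\in C^\infty$. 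The identity $\sigma_\varphi(wt+h)=\int\sigma(wt+h-s)\varphi(s)\,ds$ exhibits $\sigma_\varphi(wt+h)$ as a uniform-on-compacts limit (in $(t,w,h)$ on compact sets) of Riemann sums $\sum_i\sigma(wt+h-s_i)\varphi(s_i)\Delta s_i$, which are themselves elements of $\mathrm{span}\{\sigma(w't+h')\}$. Hence the closure for $\sigma$ contains the closure for $\sigma_\varphi$, and the smooth case applies to $\sigma_\varphi$ to produce all $t^k$.

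The main technical obstacle is, within this last step, ensuring that $\varphi$ can be chosen so that $\sigma_\varphi$ is itself not a polynomial. If $\sigma\ast\varphi$ were a polynomial for \emph{every} $\varphi\in C_c^\infty$, then by applying the distributional derivative of sufficiently high order one deduces that $\sigma$ itself is a polynomial in the distributional sense, contradicting the hypothesis; so some admissible $\varphi$ exists. The only other mildly delicate point is that the Riemann-sum approximation of the convolution must be uniform jointly in $(t,w,h)$ over compacts, which follows from continuity of $\sigma$ and compact support of $\varphi$. All remaining steps -- the polarization identity, the smooth-case differentiation trick, and the invocation of Weierstrass -- are standard and mechanical.
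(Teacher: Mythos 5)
The paper does not prove this statement at all: it is quoted verbatim as Theorem~3.1 of the cited survey of Pinkus (1999), and your reconstruction is essentially the proof given there (due to Leshno--Lin--Pinkus--Schocken): reduce to ridge functions via Weierstrass and polarization, obtain all monomials $t^k$ from a smooth non-polynomial $\sigma$ by differentiating in the weight $w$, and pass to merely continuous $\sigma$ by mollification. So your approach matches the source, and the outline is correct. The one step you state but do not actually justify is the claim that some $\varphi\in C_c^\infty$ makes $\sigma*\varphi$ non-polynomial: as written, ``applying the distributional derivative of sufficiently high order'' presupposes a \emph{uniform} bound on the degrees of the polynomials $\sigma*\varphi$ over all $\varphi$, which does not follow from each $\sigma*\varphi$ being a polynomial of some $\varphi$-dependent degree. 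The standard fix is a Baire category argument: the sets $V_m=\{\varphi\in\mathcal D_K:\deg(\sigma*\varphi)\le m\}$ are closed subspaces of the Fr\'echet space $\mathcal D_K$ whose union is all of $\mathcal D_K$, hence some $V_m=\mathcal D_K$, and then $D^{m+1}\sigma=0$ distributionally, so $\sigma$ is a polynomial, a contradiction. With that lemma supplied, your argument is complete.
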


Throughout the paper, we assume, as in Theorem \ref{th:leshno}, that \emph{$\sigma:\mathbb R\to\mathbb R$ is some (fixed) continuous activation function  that is not a polynomial.} 

Also, as in this theorem, we will understand approximation in the sense of uniform approximation on compact sets, i.e. meaning that for any compact $K\subset V$ and any $\epsilon>0$ one can find an approximating map $\widehat f$ such that $|f(\mathbf x)-\widehat f(\mathbf x)|\le \epsilon$ (or $\|f(\mathbf x)-\widehat f(\mathbf x)\|\le \epsilon$ in the case of vector-valued $f$) for all $\mathbf x\in K$. In the case of finite-dimensional spaces $V$ considered in the present section, one can equivalently say that there is a sequence of approximating maps $\widehat f_n$ uniformly converging to $f$ on any compact set. Later, in Sections \ref{sec:translations}, \ref{sec:charge}, we will consider infinite-dimensional signal spaces $V$ for which such an equivalence does not hold. Nevertheless, we will use the concept of uniform approximation on compact sets as a guiding principle in our precise definitions of approximation in that more complex setting.

Now suppose that the space $V$ carries a linear representation $R$ of a group $\Gamma$. Assuming $V$ is finite-dimensional, this means that $R$ is a homomorphism of $\Gamma$ to the group of linear automorphisms of $V$:  
$$R:\Gamma\to \text{GL}(V).$$
In the present section we will assume that $\Gamma$ is a \emph{compact} group, meaning, as is customary, that $\Gamma$ is a compact Hausdorff topological space and the group operations (multiplication and inversion) are continuous. Accordingly, the representation $R$ is also assumed to be continuous. We remark that an important special case of compact groups are the finite groups (with respect to the discrete topology). 

One important property of compact groups is the existence of a unique, both left- and right-invariant Haar measure normalized so that the total measure of $\Gamma$ equals 1. Another property is that any continuous representation of a compact group on a separable (but possibly infinite-dimensional) Hilbert space can be decomposed into a countable direct sum of irreducible finite-dimensional representations. There are many group representation textbooks to which we refer the reader for details, see e.g. \cite{vinberg2012linear,serre2012linear,simon1996representations}. Accordingly, in the present section we will restrict ourselves to finite-dimensional representations. Later, in Sections \ref{sec:translations} and \ref{sec:charge}, we will consider the noncompact groups $\mathbb R^\nu$ and SE($\nu$) and their natural representations on the infinite-dimensional space $L^2(\mathbb R^\nu)$, which cannot be decomposed into countably many irreducibles.

Motivated by applications to neural networks, in this section and Section \ref{sec:translations} we will consider only representations over the field $\mathbb R$ of reals (i.e. with $V$ a real vector space). Later, in Section \ref{sec:charge}, we will consider complexified spaces as this simplifies the exposition of the invariant theory for the group SO(2). 

For brevity, we will call a vector space carrying a linear representation of a group $\Gamma$  a \emph{$\Gamma$-module}. We will denote by $R_\gamma$ the  linear automorphism obtained by applying $R$ to $\gamma\in\Gamma$. The integral over the normalized Haar measure on a compact group $\Gamma$ is denoted by $\int_\Gamma\cdot d\gamma$. We will denote vectors by boldface characters; scalar components of the vector $\mathbf x$ are denoted $x_k$.  

Recall that given a $\Gamma$-module $V$, we call a map $f:V\to \mathbb R$ \emph{$\Gamma$-invariant} (or simply \emph{invariant}) if $f(R_\gamma\mathbf x)=f(\mathbf x)$ for all $\gamma\in\Gamma$ and $\mathbf x\in V$. We state now the basic result on invariant approximation, obtained by symmetrization (group averaging). 

\begin{prop}\label{th:syminv} Let $\Gamma$ be a compact group and $V$ a finite-dimensional $\Gamma$-module. Then,   any continuous invariant map $f:V\to\mathbb R$ can be approximated by $\Gamma$-invariant maps $\widehat{f}:V\to\mathbb R$ of the form 
\begin{equation}\label{eq:syminv0}
\widehat{f}(\mathbf x)=\int_\Gamma\sum_{n=1}^N c_n\sigma(l_n(R_\gamma\mathbf x)+h_n)d\gamma,
\end{equation}
where $c_n,h_n\in\mathbb R$ are some coefficients and $l_n\in V^*$ are some linear functionals on $V$, i.e. $l_n(\mathbf x)=\sum_{k}w_{nk}x_k$.
\end{prop}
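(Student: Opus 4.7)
The plan is to reduce the proposition to the classical (non-invariant) universal approximation theorem (Theorem~\ref{th:leshno}) by means of a symmetrization argument. The idea is: apply Theorem~\ref{th:leshno} to approximate $f$ uniformly by an ordinary shallow network on a compact set that is large enough to contain the entire orbit of the compact set on which we need the invariant approximation; then average the resulting shallow network over $\Gamma$, obtaining a map of the required form that still approximates $f$ uniformly.

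More concretely, fix a compact $K\subset V$ and $\epsilon>0$. First I would define the saturated set $K'=\{R_\gamma\mathbf x:\gamma\in\Gamma,\ \mathbf x\in K\}$. Since $\Gamma$ is compact, the representation $R$ is continuous, and $K$ is compact, the set $K'$ is the image of the compact set $\Gamma\times K$ under the continuous map $(\gamma,\mathbf x)\mapsto R_\gamma\mathbf x$, hence compact. By Theorem~\ref{th:leshno}, there exist $N$, coefficients $c_n,h_n\in\mathbb R$ and linear functionals $l_n\in V^*$ such that the ordinary shallow network
\begin{equation*}
\tilde f(\mathbf x)=\sum_{n=1}^N c_n\sigma(l_n(\mathbf x)+h_n)
\end{equation*}
satisfies $|\tilde f(\mathbf y)-f(\mathbf y)|\le\epsilon$ for all $\mathbf y\in K'$.

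Next I would define the symmetrization
\begin{equation*}
\widehat f(\mathbf x)=\int_\Gamma \tilde f(R_\gamma\mathbf x)\,d\gamma=\int_\Gamma\sum_{n=1}^N c_n\sigma(l_n(R_\gamma\mathbf x)+h_n)\,d\gamma,
\end{equation*}
which has exactly the form \eqref{eq:syminv0}. The integrand is continuous in $\gamma$ (as a composition of continuous maps) and $\Gamma$ is compact, so the integral is well defined; invariance of $\widehat f$ under $R$ follows from the left-invariance of the Haar measure by a standard change of variables $\gamma\mapsto\gamma\gamma_0$. Finally, using invariance of $f$ together with the normalization $\int_\Gamma d\gamma=1$, for every $\mathbf x\in K$ we have
\begin{equation*}
|\widehat f(\mathbf x)-f(\mathbf x)|=\Bigl|\int_\Gamma\bigl(\tilde f(R_\gamma\mathbf x)-f(R_\gamma\mathbf x)\bigr)\,d\gamma\Bigr|\le\sup_{\gamma\in\Gamma}|\tilde f(R_\gamma\mathbf x)-f(R_\gamma\mathbf x)|\le\epsilon,
\end{equation*}
since $R_\gamma\mathbf x\in K'$ for all $\gamma\in\Gamma$ and $\mathbf x\in K$.

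There is no real obstacle here: the only points requiring attention are the compactness of the saturated set $K'$ (which is where compactness of $\Gamma$ is used in an essential way) and the fact that averaging a shallow network against the Haar measure preserves the ansatz form \eqref{eq:syminv0}. Both are immediate from continuity of the representation and the existence of a normalized Haar measure on $\Gamma$.
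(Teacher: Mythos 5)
Your proof is correct and follows essentially the same route as the paper's: form the compact saturated set $K'=\cup_{\gamma\in\Gamma}R_\gamma(K)$, approximate $f$ on it by an ordinary shallow network via Theorem~\ref{th:leshno}, then average over the Haar measure and bound the error using the invariance of $f$. The only difference is that you spell out the well-definedness of the integral and the invariance of $\widehat f$ a bit more explicitly than the paper does.
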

\begin{proof} It is clear that the map \eqref{eq:syminv0} is $\Gamma$--invariant, and we only need to prove the completeness part. Let $K$ be a compact subset in $V$, and $\epsilon>0$. Consider the symmetrization of $K$ defined by $K_{\mathrm{sym}}=\cup_{\gamma\in \Gamma} R_\gamma(K).$ Note that $K_{\mathrm{sym}}$ is also a compact set, because it is the image of the compact set $\Gamma\times K$ under the continuous map $(\gamma,\mathbf x)\mapsto R_\gamma\mathbf x$. We can use Theorem \ref{th:leshno} to find a map $f_1:V\to \mathbb R$ of the form $f_1(\mathbf x)=\sum_{n=1}^N c_n\sigma(l_n(\mathbf x)+h_n)$ and such that $|f(\mathbf x)-f_1(\mathbf x)|\le \epsilon$ on $K_{\mathrm{sym}}$. Now consider the $\Gamma$-invariant group--averaged map $\widehat f(\mathbf x)=\int_{\Gamma} f_1(R_\gamma\mathbf x)d\gamma$. Then for any $\mathbf x\in K$, 
\begin{equation*}|\widehat f(\mathbf x)-f(\mathbf x)|=
\Big|\int_{\Gamma}\big(f_1(R_\gamma\mathbf x)-f(R_\gamma\mathbf x)\big)d\gamma\Big|\le\int_{\Gamma}\big|f_1(R_\gamma\mathbf x)-f(R_\gamma\mathbf x)\big|d\gamma\le \epsilon,
\end{equation*}
where we have used the invariance of $f$ and the fact that $|f_1(\mathbf x)-f(\mathbf x)|\le\epsilon$ for $\mathbf x\in K_{\mathrm{sym}}$.
\end{proof}

Now we establish a similar result for equivariant maps. Let $V,U$ be two $\Gamma$-modules. For brevity, we will denote by $R$ the representation of $\Gamma$ in either of them (it will be clear from the context which one is meant). We call a map $f:V\to U$ $\Gamma$-\emph{equivariant} if $f(R_\gamma\mathbf x)=R_\gamma f(\mathbf x)$ for all $\gamma\in\Gamma$ and $\mathbf x\in V$. 
\begin{prop}\label{th:sym_equiv} Let $\Gamma$ be a compact group and $V$ and $U$ two finite-dimensional $\Gamma$-modules. Then,   any continuous $\Gamma$-equivariant map $f:V\to U$ can be approximated by $\Gamma$-equivariant maps $\widehat{f}:V\to U$ of the form 
\begin{equation}\label{eq:symequiv}
\widehat{f}(\mathbf x)=\int_\Gamma\sum_{n=1}^N R_{\gamma}^{-1}\mathbf y_n\sigma(l_n(R_\gamma\mathbf x)+h_n)d\gamma,
\end{equation}
with some coefficients $h_n\in\mathbb R$, linear functionals $l_n\in V^*$, and  vectors $\mathbf y_n\in U$.
\end{prop}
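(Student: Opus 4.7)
The plan is to closely mirror the proof of Proposition~\ref{th:syminv}, with two modifications: (a) the symmetrization in the output space requires the factor $R_\gamma^{-1}$, and (b) we must control $\|R_\gamma^{-1}\|$ uniformly in $\gamma$ when transferring the scalar approximation bound to the symmetrized map.

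First, I would verify that any $\widehat f$ of the form \eqref{eq:symequiv} is automatically $\Gamma$-equivariant. Replacing $\mathbf x$ by $R_{\gamma_0}\mathbf x$ and substituting $\gamma'=\gamma\gamma_0$ in the Haar integral, the invariance of the Haar measure together with $R_{\gamma'\gamma_0^{-1}}^{-1}=R_{\gamma_0}R_{\gamma'}^{-1}$ pulls $R_{\gamma_0}$ outside the integral. So the content of the proposition is the completeness part.

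For completeness, fix a compact $K\subset V$ and $\epsilon>0$, and form the compact symmetrization $K_{\mathrm{sym}}=\bigcup_{\gamma\in\Gamma}R_\gamma(K)$ as in the proof of Proposition~\ref{th:syminv}. Equip $U$ with a $\Gamma$-invariant inner product (this exists by the standard averaging argument for compact groups), so that each $R_\gamma$ acts orthogonally on $U$ and in particular $\|R_\gamma^{-1}\mathbf u\|=\|\mathbf u\|$ for every $\mathbf u\in U$. Apply Theorem~\ref{th:leshno} componentwise (in some basis of $U$) to the continuous map $f$ restricted to $K_{\mathrm{sym}}$ to obtain a (non-equivariant) approximator
\[
f_1(\mathbf x)=\sum_{n=1}^N \mathbf y_n\,\sigma\bigl(l_n(\mathbf x)+h_n\bigr),\qquad \mathbf y_n\in U,\ l_n\in V^*,\ h_n\in\mathbb R,
\]
satisfying $\|f_1(\mathbf x)-f(\mathbf x)\|\le\epsilon$ for all $\mathbf x\in K_{\mathrm{sym}}$.

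Now define the symmetrized candidate $\widehat f(\mathbf x)=\int_\Gamma R_\gamma^{-1}f_1(R_\gamma\mathbf x)\,d\gamma$. Substituting the explicit form of $f_1$ and pulling $R_\gamma^{-1}$ inside the sum puts $\widehat f$ exactly into the form \eqref{eq:symequiv}. For any $\mathbf x\in K$ we use the equivariance $f(\mathbf x)=R_\gamma^{-1}f(R_\gamma\mathbf x)$ together with $\int_\Gamma d\gamma=1$ to write $f(\mathbf x)=\int_\Gamma R_\gamma^{-1}f(R_\gamma\mathbf x)\,d\gamma$, so that
\[
\|\widehat f(\mathbf x)-f(\mathbf x)\|=\Big\|\int_\Gamma R_\gamma^{-1}\bigl(f_1(R_\gamma\mathbf x)-f(R_\gamma\mathbf x)\bigr)d\gamma\Big\|\le\int_\Gamma\|f_1(R_\gamma\mathbf x)-f(R_\gamma\mathbf x)\|\,d\gamma\le\epsilon,
\]
where we used orthogonality of $R_\gamma^{-1}$ and the fact that $R_\gamma\mathbf x\in K_{\mathrm{sym}}$ for all $\gamma$. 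The only subtle step — and the one I view as the main technical point — is the use of the invariant inner product to guarantee $\|R_\gamma^{-1}\|\equiv 1$; without this, one would only get a uniform bound $C=\sup_\gamma\|R_\gamma^{-1}\|<\infty$ (finite by compactness of $\Gamma$ and continuity of $R$) and an $\epsilon$ rescaled by $C$, which is equally good after absorbing the constant into the initial choice of $\epsilon$.
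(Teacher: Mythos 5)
Your proposal is correct and follows essentially the same symmetrization argument as the paper: approximate $f$ on $K_{\mathrm{sym}}$ by a componentwise application of Theorem \ref{th:leshno} and then average $R_\gamma^{-1}f_1(R_\gamma\mathbf x)$ over the Haar measure. The only (cosmetic) difference is that you normalize $\|R_\gamma^{-1}\|\equiv 1$ via an invariant inner product, whereas the paper keeps an arbitrary norm and absorbs the finite constant $\max_{\gamma\in\Gamma}\|R_\gamma\|$ — an alternative you yourself note at the end.
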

\begin{proof} The proof is analogous to the proof of Proposition \ref{th:syminv}. Fix any norm $\|\cdot\|$ in $U$. Given a compact set $K$ and $\epsilon>0$, we construct the compact set $K_{\mathrm{sym}}=\cup_{\gamma\in \Gamma}R_\gamma (K)$ as before. Next,  we find $f_1:V\to U$ of the form $f_1(\mathbf x)=\sum_{n=1}^N \mathbf y_n\sigma(l_n(\mathbf x)+h_n)$ and such that $\|f(\mathbf x)-f_1(\mathbf x)\|\le \epsilon$ on $K_{\mathrm{sym}}$ (we can do it, for example, by considering scalar components of $f$ with respect to some basis in $U$, and approximating these components using Theorem \ref{th:leshno}). Finally, we define the symmetrized map by  $\widehat f(\mathbf x)=\int_{\Gamma}R_{\gamma}^{-1}f_1(R_\gamma \mathbf x) d\gamma$. This map is $\Gamma$--equivariant, and, for any $\mathbf x\in K$, 
\begin{align*}\|\widehat f(\mathbf x)-f(\mathbf x)\|=&
\Big\|\int_{\Gamma}\big(R_\gamma^{-1}f_1(R_\gamma\mathbf x)-R_\gamma^{-1}f(R_\gamma\mathbf x)\big)d\gamma\Big\|\\
\le&\max_{\gamma\in\Gamma}\|R_\gamma\|\int_{\Gamma}\big\|f_1(R_\gamma\mathbf x)-f(R_\gamma\mathbf x)\big\|d\gamma\\
\le& \epsilon\max_{\gamma\in\Gamma}\|R_\gamma\|.
\end{align*}
By continuity of $R$ and compactness of $\Gamma,$ $\max_{\gamma\in\Gamma}\|R_\gamma\|<\infty$, so we can approximate $f$ by $\widehat f$ on $K$ with any accuracy.
\end{proof}

Propositions \ref{th:syminv}, \ref{th:sym_equiv} present the ``symmetrization--based'' approach to constructing invariant/equivariant approximations relying on the shallow neural network ansatz \eqref{eq:shallownn}. The approximating expressions  \eqref{eq:syminv0},  \eqref{eq:symequiv} are $\Gamma$--invariant/equivariant and universal. Moreover, in the case of finite groups the integrals in these expressions  are finite sums, i.e. these approximations consist of finitely many arithmetic operations and evaluations of the activation function $\sigma$. In the case of infinite groups, the integrals can be approximated by sampling the group.

In the remainder of Section \ref{sec:compact} we will pursue an alternative approach to symmetrize the neural network ansatz, based on the theory of polynomial invariants. 

We finish this subsection with the following general observation.
 Suppose that we have two $\Gamma$-modules $U,V$, and $U$ can be decomposed into $\Gamma$--invariant submodules: $U=\bigoplus_\beta U_\beta^{m_\beta}$ (where $m_\beta$ denotes the multiplicity of $U_\beta$ in $U$). Then a map $f:V\to U$ is equivariant if and only if it is equivariant in each component $U_\beta$ of the output space. Moreover, if we denote by $\operatorname{Equiv}(V,U)$ the space of continuous equivariant maps $f:V\to U$, then \begin{equation}\label{eq:decomp}
\operatorname{Equiv}\Big(V,\bigoplus_\beta U_\beta^{m_\beta}\Big)=\bigoplus_\beta\operatorname{Equiv}(V,U_\beta)^{m_\beta}.
\end{equation}
This  shows that the task of describing equivariant maps $f:V\to U$ reduces to the task of describing equivariant maps $f:V\to U_\beta$. In particular, describing vector-valued invariant maps $f:V\to \mathbb R^{d_U}$ reduces to describing scalar-valued invariant maps $f:V\to \mathbb R$.

\subsection{Approximations based on polynomial invariants}\label{sec:polinveq}

The invariant theory seeks to describe \emph{polynomial invariants} of group representations, i.e. polynomial maps $f:V\to \mathbb R$ such that $f(R_\gamma \mathbf x)\equiv f(\mathbf x)$ for all $\mathbf x\in V$. A fundamental result of the invariant theory is Hilbert's finiteness theorem \cite{hilbert1890theorie,hilbert1893vollen} stating that for completely reducible representations, all the polynomial invariants  are algebraically generated by a finite number of such invariants. In particular, this holds for any representation of a compact group. 

\begin{theorem}[Hilbert] Let $\Gamma$ be a compact group and $V$ a finite-dimensional $\Gamma$-module. Then there exist finitely many polynomial invariants $f_1,\ldots,f_{N_{\mathrm{inv}}}:V\to\mathbb R$ such that any polynomial invariant $r:V\to\mathbb R$ can be expressed as $$r(\mathbf x)=\widetilde{r}(f_1(\mathbf x),\ldots,f_{N_{\mathrm{inv}}}(\mathbf x))$$
with some polynomial $\widetilde r$ of ${N_{\mathrm{inv}}}$ variables.
\end{theorem}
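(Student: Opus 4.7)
The plan is to prove this via the classical Hilbert strategy, combining Hilbert's basis theorem for the polynomial ring with the Reynolds averaging operator provided by the normalized Haar measure on $\Gamma$. First I would observe that since $R_\gamma$ acts linearly on $V$, the induced action on the polynomial ring $\mathbb R[V]$ preserves the grading by total degree, so the subalgebra of invariants $\mathbb R[V]^\Gamma$ is itself graded: $\mathbb R[V]^\Gamma = \bigoplus_{d\ge 0}\mathbb R[V]^\Gamma_d$. Let $I\subset\mathbb R[V]$ be the ideal generated by all homogeneous invariants of strictly positive degree (the Hilbert ideal). Since $\mathbb R[V]$ is Noetherian by Hilbert's basis theorem, $I$ is finitely generated, and by standard graded arguments one may choose the generators to be themselves homogeneous invariants $f_1,\ldots,f_{N_{\mathrm{inv}}}$.

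Next I would introduce the Reynolds operator $\mathcal R:\mathbb R[V]\to\mathbb R[V]^\Gamma$ defined by $\mathcal R(p)(\mathbf x)=\int_\Gamma p(R_\gamma\mathbf x)\,d\gamma$. The compactness of $\Gamma$ together with the normalized Haar measure guarantees that this is well defined, that it sends polynomials to polynomials of the same degree (since each $R_\gamma$ is linear), that it acts as the identity on $\mathbb R[V]^\Gamma$, and, crucially, that it is $\mathbb R[V]^\Gamma$-linear, i.e. $\mathcal R(q p) = q\, \mathcal R(p)$ whenever $q$ is an invariant polynomial.

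The core step is then an induction on degree showing that $f_1,\ldots,f_{N_{\mathrm{inv}}}$ generate $\mathbb R[V]^\Gamma$ as an $\mathbb R$-algebra. The base case $d=0$ is trivial since a degree-zero invariant is a constant. For $d>0$, any homogeneous invariant $r$ of degree $d$ lies in $I$ by construction, so we can write $r=\sum_i g_i f_i$ with $g_i\in\mathbb R[V]$ homogeneous of degree $d-\deg f_i<d$. Applying $\mathcal R$ and using its module-linearity over the invariant subring yields
\begin{equation*}
r \;=\; \mathcal R(r) \;=\; \sum_i \mathcal R(g_i f_i) \;=\; \sum_i \mathcal R(g_i)\, f_i,
\end{equation*}
where each $\mathcal R(g_i)$ is an invariant of degree strictly less than $d$, hence expressible as a polynomial in $f_1,\ldots,f_{N_{\mathrm{inv}}}$ by the inductive hypothesis. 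Substituting these expressions gives the desired polynomial representation of $r$, and a general (non-homogeneous) invariant is handled by decomposing into its homogeneous components.

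The main obstacle is really conceptual rather than technical: one must combine two ingredients of quite different nature, namely Noetherianity of $\mathbb R[V]$ to produce a finite list of generators for the Hilbert ideal, and Haar averaging to extract the invariant part of an a priori non-invariant decomposition $r=\sum_i g_i f_i$. Once the module-linearity property $\mathcal R(qp)=q\mathcal R(p)$ for invariant $q$ is in place, no estimates or further analytic input are needed and the induction closes cleanly. (Hilbert's original argument had to work harder in the absence of a Reynolds operator; compactness of $\Gamma$ is precisely what makes the present setting the easy case.)
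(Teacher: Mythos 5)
Your proof is correct and complete: the paper itself does not prove this theorem but simply cites it as a classical result of Hilbert (with \cite{kraft2000classical} as a modern reference), and your argument — finite generation of the Hilbert ideal via Noetherianity, plus the Reynolds operator from Haar averaging and induction on degree — is precisely the standard proof given in those references. No gaps; the key module-linearity $\mathcal R(qp)=q\,\mathcal R(p)$ for invariant $q$ and the reduction to homogeneous components are both handled correctly.
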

See, e.g., \cite{kraft2000classical} for a modern expositions of the invariant theory and Hilbert's theorem. We refer to the set $\{f_s\}_{s=1}^{N_{\mathrm{inv}}}$ from this theorem as a \emph{generating set} of polynomial invariants (note that this set is not unique and ${N_{\mathrm{inv}}}$ may be different for different generating sets). 

Thanks to the density of polynomials in the space of continuous functions, we can easily combine Hilbert's theorem with the universal approximation theorem to obtain a complete invariant ansatz for invariant maps:

\begin{prop}\label{th:invar}
Let $\Gamma$ be a compact group, $V$ a finite-dimensional $\Gamma$-module, and $f_1,\ldots,f_{N_{\mathrm{inv}}}:V\to \mathbb R$ a finite generating set of polynomial invariants on $V$ (existing by Hilbert's theorem). Then, any continuous invariant map $f:V\to\mathbb R$ can be approximated by invariant maps $\widehat{f}:V\to\mathbb R$ of the form 
\begin{equation}\label{eq:polyinvansatz}
\widehat{f}(\mathbf x)=\sum_{n=1}^N c_n\sigma\Big(\sum_{s=1}^{N_{\mathrm{inv}}} w_{ns}f_s(\mathbf x)+h_{n}\Big)
\end{equation}
with some parameter $N$ and coefficients $c_n,w_{ns},h_n$.
\end{prop}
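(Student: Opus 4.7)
The strategy is to chain three classical ingredients: the Weierstrass polynomial approximation theorem, Hilbert's finiteness theorem (already stated above), and the shallow-network universal approximation Theorem \ref{th:leshno}. Given a compact $K\subset V$ and $\epsilon>0$, I would first form the symmetrization $K_{\mathrm{sym}}=\bigcup_{\gamma\in\Gamma}R_\gamma(K)$ as in the proof of Proposition \ref{th:syminv}, pick an ordinary polynomial $p:V\to\mathbb R$ approximating $f$ to within $\epsilon/2$ on $K_{\mathrm{sym}}$ by Weierstrass, and then Haar-average it to form $p_{\mathrm{sym}}(\mathbf x)=\int_\Gamma p(R_\gamma\mathbf x)\,d\gamma$. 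Since $f$ is invariant, the group-averaging estimate already used in Proposition \ref{th:syminv} gives $|f-p_{\mathrm{sym}}|\le\epsilon/2$ on $K$. Moreover $p_{\mathrm{sym}}$ is itself a polynomial, because applying $R_\gamma$ to a monomial produces a polynomial in $\mathbf x$ of the same degree with coefficients continuous in $\gamma$, and Haar-integrating those coefficients over the compact group $\Gamma$ leaves finite scalars. Thus $p_{\mathrm{sym}}$ is a $\Gamma$-invariant polynomial close to $f$ on $K$.

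Next, Hilbert's theorem writes $p_{\mathrm{sym}}(\mathbf x)=\widetilde p(f_1(\mathbf x),\ldots,f_{N_{\mathrm{inv}}}(\mathbf x))$ for some polynomial $\widetilde p$ in $N_{\mathrm{inv}}$ variables. Setting $F(\mathbf x)=(f_1(\mathbf x),\ldots,f_{N_{\mathrm{inv}}}(\mathbf x))$, the image $F(K)\subset\mathbb R^{N_{\mathrm{inv}}}$ is compact by continuity of $F$. Applying Theorem \ref{th:leshno} to the continuous function $\widetilde p$ on $F(K)$ with tolerance $\epsilon/2$ produces coefficients $c_n,w_{ns},h_n$ such that the shallow network $\sum_{n=1}^N c_n\sigma\bigl(\sum_s w_{ns}y_s+h_n\bigr)$ approximates $\widetilde p(\mathbf y)$ uniformly on $F(K)$. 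Substituting $\mathbf y=F(\mathbf x)$ yields a function of the exact form \eqref{eq:polyinvansatz}; it is automatically $\Gamma$-invariant since each $f_s$ is, and the triangle inequality gives $|f-\widehat f|\le\epsilon$ on $K$.

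I do not anticipate a serious obstacle. The only point requiring a moment's care is the claim that Haar-averaging preserves the polynomial character, which relies essentially on compactness of $\Gamma$ and continuity of $R$. Otherwise the proof is a direct chaining of Weierstrass, Hilbert, and Theorem \ref{th:leshno}, and the invariance of $\widehat f$ comes for free because it is built out of the generating invariants $f_s$ alone.
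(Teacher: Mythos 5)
Your proposal is correct and follows essentially the same route as the paper's proof: Weierstrass approximation on the symmetrized compact set, Haar-averaging to obtain an invariant polynomial, Hilbert's theorem to rewrite it in the generating invariants, and Theorem \ref{th:leshno} applied on the compact image $F(K)$. The only difference is cosmetic bookkeeping of the $\epsilon$'s, and your remark on why the Haar average stays polynomial matches the paper's (briefer) justification.
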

\begin{proof} 
It is obvious that the expressions $\widehat{f}$ are $\Gamma$-invariant, so we only need to prove the completeness part. 

Let us first show that the map $f$ can be approximated by an invariant polynomial. Let $K$ be a compact subset in $V$, and, like before, consider the symmetrized set $K_{\mathrm{sym}}.$  By the Stone-Weierstrass theorem, for any $\epsilon>0$ there exists a polynomial $r$ on $V$ such that $|r(\mathbf x)-f(\mathbf x)|\le\epsilon$ for $\mathbf x\in K_{\mathrm{sym}}$. Consider the symmetrized function $r_{\mathrm{sym}}(\mathbf x)=\int_{\Gamma}r(R_\gamma \mathbf x) d\gamma$. Then the function $r_{\mathrm{sym}}$ is invariant and $|r_{\mathrm{sym}}(\mathbf x)-f(\mathbf x)|\le \epsilon$ for $\mathbf x\in K$. On the other hand, $r_{\mathrm{sym}}$ is a polynomial, since $r(R_\gamma\mathbf x)$ is a fixed degree polynomial in $\mathbf x$ for any $\gamma$.

Using Hilbert's theorem, we express $r_{\mathrm{sym}}(\mathbf x)=\widetilde{r}(f_1(\mathbf x), \ldots, f_{N_{\mathrm{inv}}}(\mathbf x))$ with some polynomial $\widetilde{r}$.

It remains to approximate the polynomial $\widetilde{r}(z_1,\ldots,z_{N_{\mathrm{inv}}})$ by an expression of the form $\widetilde{f}(z_1,\ldots,z_{N_{\mathrm{inv}}})=\sum_{n=1}^Nc_n\sigma(\sum_{s=1}^{N_{\mathrm{inv}}}w_{ns}z_s+h_n)$ on the compact set $\{(f_1(\mathbf x), \ldots, f_{N_{\mathrm{inv}}}(\mathbf x))|\mathbf x\in K\}\subset \mathbb{R}^{N_{\mathrm{inv}}}$. By Theorem \ref{th:leshno}, we can do it with any accuracy $\epsilon$. Setting finally $\widehat{f}(\mathbf x)=\widetilde{f}(f_1(\mathbf x), \ldots, f_{N_{\mathrm{inv}}}(\mathbf x)),$ we obtain $\widehat{f}$ of the required form such that $|\widehat{f}(\mathbf x)-f(\mathbf x)|\le 2\epsilon$ for all $\mathbf x\in K$.
\end{proof}

Note that Proposition \ref{th:invar} is a generalization of Theorem \ref{th:leshno}; the latter is a special case  obtained if the group is trivial ($\Gamma=\{e\}$) or its representation is trivial ($R_\gamma\mathbf x\equiv \mathbf x$), and in this case we can just take $N_{\mathrm{inv}}=d$ and $f_s(\mathbf x) = x_s$.

In terms of neural network architectures, formula \eqref{eq:polyinvansatz} can be viewed as a shallow neural network with an extra polynomial layer that precedes the conventional linear combination and nonlinear activation layers. 

We extend now the obtained result to equivariant maps. Given two $\Gamma$-modules $V$ and $U$, we say that a map $f:V\to U$ is \emph{polynomial} if $l\circ f$ is a polynomial for any linear functional $l:U\to \mathbb R$.  
We rely on the extension of Hilbert's theorem to polynomial equivariants:
\begin{lemma}\label{th:polequivar} Let $\Gamma$ be a compact group and $V$ and $U$ two finite-dimensional $\Gamma$-modules. Then there exist finitely many polynomial invariants $f_1,\ldots,f_{N_{\mathrm{inv}}}:V\to \mathbb R$ and polynomial equivariants $g_1,\ldots,g_{N_{\mathrm{eq}}}:V\to U$ such that any polynomial equivariant $r_{\mathrm{sym}}: V\to U$ can be represented in the form $r_{\mathrm{sym}}(\mathbf x)=\sum_{m=1}^{N_{\mathrm{eq}}} g_m(\mathbf x)\widetilde r_m(f_1(\mathbf x), \ldots, f_{N_{\mathrm{inv}}}(\mathbf x))$ with some polynomials $\widetilde r_m$.
\end{lemma}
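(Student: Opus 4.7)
The plan is to reduce the statement to Hilbert's finiteness theorem, but applied not to $V$ itself but to the larger $\Gamma$-module $V\oplus U^*$, where $U^*$ carries the contragredient representation. The key observation is that polynomial equivariants $p:V\to U$ are in canonical bijection with $\Gamma$-invariant polynomials on $V\oplus U^*$ that are homogeneous of degree $1$ in the $U^*$-coordinates, via
\begin{equation*}
p\ \longleftrightarrow\ F_p,\qquad F_p(\mathbf x,\mathbf u^*):=\mathbf u^*(p(\mathbf x)).
\end{equation*}
Indeed, $F_p$ is clearly polynomial in $\mathbf x$ and linear in $\mathbf u^*$, and a direct check shows that the $\Gamma$-equivariance of $p$ matches the $\Gamma$-invariance of $F_p$ under the diagonal action on $V\oplus U^*$ (here it is essential that $U^*$ carries the contragredient action, so that the $R_\gamma$ factors on $U$ and $U^*$ cancel).

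Next, I would apply Hilbert's theorem to the finite-dimensional $\Gamma$-module $V\oplus U^*$ to obtain a finite generating set of invariants. Because the $\Gamma$-action respects the bigrading of $\mathbb R[V\oplus U^*]$ by $(V,U^*)$-degrees, the space of invariants decomposes as a direct sum of its bi-homogeneous components, and we may replace the generating set by the collection of its bi-homogeneous components without loss. Group these generators by their degree in $U^*$: let $f_1,\dots,f_{N_{\mathrm{inv}}}$ be those of $U^*$-degree $0$ (polynomial invariants on $V$), let $F_{g_1},\dots,F_{g_{N_{\mathrm{eq}}}}$ be those of $U^*$-degree $1$, and translate the latter back to polynomial equivariants $g_1,\dots,g_{N_{\mathrm{eq}}}:V\to U$. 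Generators of $U^*$-degree $\ge 2$ may be ignored for our purposes.

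Finally, given any polynomial equivariant $r_{\mathrm{sym}}:V\to U$, the associated $F_{r_{\mathrm{sym}}}$ is a $\Gamma$-invariant polynomial of $U^*$-degree exactly $1$, so by Hilbert's theorem it is some polynomial in the chosen generators. Extracting the part of $U^*$-degree $1$ (which is legitimate since the full polynomial has only that degree), every surviving monomial must contain exactly one factor of $U^*$-degree $1$ and all other factors of $U^*$-degree $0$. This produces an identity
\begin{equation*}
F_{r_{\mathrm{sym}}}=\sum_{m=1}^{N_{\mathrm{eq}}}F_{g_m}\cdot\widetilde r_m(f_1,\dots,f_{N_{\mathrm{inv}}}),
\end{equation*}
and translating back under the bijection $p\leftrightarrow F_p$ yields the desired representation of $r_{\mathrm{sym}}$. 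The main bookkeeping obstacle is simply the $U^*$-degree argument above; the genuine mathematical content is entirely contained in Hilbert's theorem for $V\oplus U^*$, while the identification of equivariants as multilinear invariants is a standard linear-algebraic gadget.
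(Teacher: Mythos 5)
Your proof is correct and follows essentially the same route as the paper: both identify polynomial equivariants with $U^*$-degree-one invariants in $\mathbb R[V\oplus U^*]$, apply Hilbert's theorem to that module, take $U^*$-homogeneous generators, and extract the degree-one part. Your version is somewhat more explicit about the bijection $p\leftrightarrow F_p$ and the degree bookkeeping, but the mathematical content is identical.
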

\begin{proof} We give a sketch of the proof, see e.g. Section 4 of \cite{worfolk1994zeros} for details. A polynomial equivariant $r_{\mathrm{sym}}: V\to U$ can be viewed as an invariant element of the space $\mathbb R[V]\otimes U$ with the naturally induced action of $\Gamma$, where $\mathbb R[V]$ denotes the space of polynomials on $V$. The space $\mathbb R[V]\otimes U$ is in turn a subspace of the algebra $\mathbb R[V\oplus U^*],$ where $U^*$ denotes the dual of $U$. By Hilbert's theorem, all invariant elements in $\mathbb R[V\oplus U^*]$ can be generated as polynomials of finitely many invariant elements of this algebra. The algebra $\mathbb R[V\oplus U^*]$ is graded by the degree of the  $U^*$ component, and the corresponding decomposition of $\mathbb R[V\oplus U^*]$ into the direct sum of $U^*$-homogeneous spaces indexed by the $U^*$-degree $d_{U^*}=0,1,\ldots,$ is preserved by the group action. The finitely many polynomials generating all invariant polynomials in $\mathbb R[V\oplus U^*]$ can also be assumed to be $U^*$-homogeneous. Let $\{f_s\}_{s=1}^{N_{\mathrm{inv}}}$ be those of these generating polynomials with $d_{U^*}=0$ and $\{g_s\}_{s=1}^{N_{\mathrm{eq}}}$ be those with $d_{U^*}=1.$ Then, a polynomial in the generating invariants is $U^*$-homogeneous with $d_{U^*}=1$ if and only if it is a linear combination of monomials   $g_sf_1^{n_1}f_2^{n_2}\cdots f_{N_{\mathrm{inv}}}^{n_{N_{\mathrm{inv}}}}.$ This yields the representation stated in the lemma. 
\end{proof}
We will refer to the set $\{g_s\}_{s=1}^{N_{\mathrm{eq}}}$ as a generating set of polynomial equivariants.

The equivariant analog of Proposition \ref{th:invar} now reads:
\begin{prop}\label{th:equivar}
Let $\Gamma$ be a compact group, $V$ and $U$ be two finite-dimensional $\Gamma$-modules. Let $f_1,\ldots,f_{N_{\mathrm{inv}}}:V\to \mathbb R$ be a finite generating set of polynomial invariants and  $g_1,\ldots,g_{N_{\mathrm{eq}}}:V\to U$ be a finite generating set of polynomial equivariants (existing by Lemma \ref{th:polequivar}). Then, any continuous equivariant map $f:V\to U$ can be approximated by equivariant maps $\widehat{f}:V\to U$ of the form 
\begin{equation*}
\widehat{f}(\mathbf x)=\sum_{n=1}^N\sum_{m=1}^{N_{\mathrm{eq}}} c_{mn}g_m(\mathbf x) \sigma\Big(\sum_{s=1}^{N_{\mathrm{inv}}} w_{mns}f_s(\mathbf x)+h_{mn}\Big)
\end{equation*}
with some parameter $N$ and coefficients $c_{mn},w_{mns}, h_{mn}.$
\end{prop}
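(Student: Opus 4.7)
The plan is to follow the template of Proposition \ref{th:invar}, replacing Hilbert's theorem with Lemma \ref{th:polequivar} and the scalar Stone--Weierstrass/symmetrization step with its vector-valued, equivariant analog. First I observe that any $\widehat f$ of the stated form is automatically equivariant: the quantity $\sigma(\sum_s w_{mns} f_s(\mathbf x)+h_{mn})$ is a $\Gamma$-invariant scalar (since each $f_s$ is invariant), and multiplying an invariant scalar by the equivariant map $g_m$ produces an equivariant vector in $U$; sums of equivariants are equivariant. So the task reduces to showing completeness.

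Given a compact $K\subset V$ and $\epsilon>0$, I would form the symmetrized compact set $K_{\mathrm{sym}}=\bigcup_{\gamma\in\Gamma}R_\gamma(K)$ as in Propositions \ref{th:syminv} and \ref{th:sym_equiv}. By applying Stone--Weierstrass componentwise (in some basis of $U$), I can find a polynomial map $r:V\to U$ with $\|r(\mathbf x)-f(\mathbf x)\|\le\epsilon$ on $K_{\mathrm{sym}}$. Then I symmetrize by setting $r_{\mathrm{sym}}(\mathbf x)=\int_\Gamma R_\gamma^{-1} r(R_\gamma\mathbf x)d\gamma$. This is a polynomial equivariant: polynomiality is preserved because $R_\gamma^{-1}r(R_\gamma\cdot)$ is, for each fixed $\gamma$, a polynomial of the same degree as $r$, and averaging over $\gamma$ keeps it inside the finite-dimensional space of polynomials of bounded degree. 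Exactly as in the proof of Proposition \ref{th:sym_equiv}, using the equivariance of $f$ together with $\max_{\gamma\in\Gamma}\|R_\gamma\|<\infty$, I obtain $\|r_{\mathrm{sym}}(\mathbf x)-f(\mathbf x)\|\le C\epsilon$ on $K$ for some constant $C$ depending only on the representation.

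Next I invoke Lemma \ref{th:polequivar} to write
\begin{equation*}
r_{\mathrm{sym}}(\mathbf x)=\sum_{m=1}^{N_{\mathrm{eq}}} g_m(\mathbf x)\,\widetilde r_m(f_1(\mathbf x),\ldots,f_{N_{\mathrm{inv}}}(\mathbf x))
\end{equation*}
for some polynomials $\widetilde r_m$ in $N_{\mathrm{inv}}$ variables. Let $Z=\{(f_1(\mathbf x),\ldots,f_{N_{\mathrm{inv}}}(\mathbf x)):\mathbf x\in K\}\subset\mathbb R^{N_{\mathrm{inv}}}$, which is compact as the continuous image of $K$. Theorem \ref{th:leshno} lets me approximate each scalar polynomial $\widetilde r_m$ on $Z$, within an arbitrary tolerance $\delta>0$, by a shallow network
\begin{equation*}
\widetilde f_m(z_1,\ldots,z_{N_{\mathrm{inv}}})=\sum_{n=1}^{N}c_{mn}\sigma\Big(\sum_{s=1}^{N_{\mathrm{inv}}}w_{mns}z_s+h_{mn}\Big).
\end{equation*}
(By padding with zero coefficients, I can arrange a common $N$ across all $m$.) Setting $\widehat f(\mathbf x)=\sum_m g_m(\mathbf x)\widetilde f_m(f_1(\mathbf x),\ldots,f_{N_{\mathrm{inv}}}(\mathbf x))$ produces the stated ansatz.

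Finally, I control the total error. On the compact $K$ the equivariants $g_m$ are bounded, say $\|g_m(\mathbf x)\|\le M_m$, so
\begin{equation*}
\|\widehat f(\mathbf x)-r_{\mathrm{sym}}(\mathbf x)\|\le\delta\sum_{m=1}^{N_{\mathrm{eq}}}M_m,
\end{equation*}
and combining with $\|r_{\mathrm{sym}}-f\|\le C\epsilon$ on $K$ gives $\|\widehat f(\mathbf x)-f(\mathbf x)\|$ as small as desired by choosing $\epsilon,\delta$ appropriately. The only nontrivial step is verifying that the equivariant symmetrization of a polynomial $r:V\to U$ remains a polynomial of the same degree and has the required approximation property; this is the main obstacle, and it is handled exactly as in Proposition \ref{th:sym_equiv} by invoking compactness of $\Gamma$ and continuity of the representation to bound $\|R_\gamma^{-1}\|$ uniformly, while noting that the finite-dimensionality of the space of polynomials of bounded degree on $V$ valued in $U$ ensures the integral stays polynomial. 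All other steps are routine adaptations of the invariant case.
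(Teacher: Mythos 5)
Your proposal is correct and follows essentially the same route as the paper: the paper's own proof is exactly the sketch "proceed as in Proposition \ref{th:invar}, but with a vector-valued polynomial, the equivariant symmetrization $r_{\mathrm{sym}}(\mathbf x)=\int_\Gamma R_\gamma^{-1}r(R_\gamma\mathbf x)\,d\gamma$, and Lemma \ref{th:polequivar} in place of Hilbert's theorem," which is precisely what you carry out. Your additional checks (that the symmetrized map stays polynomial via finite-dimensionality of the space of bounded-degree polynomials, and the uniform bound $\max_{\gamma\in\Gamma}\|R_\gamma\|<\infty$) are the right details to fill in and match the arguments already used in Propositions \ref{th:sym_equiv} and \ref{th:invar}.
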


\begin{proof} The proof is similar to the proof of Proposition \ref{th:invar}, with the difference that the polynomial map $r$ is now vector-valued, its symmetrization is defined by $r_{\rm sym}(\mathbf x)=\int_{\Gamma}R_\gamma^{-1}r(R_\gamma \mathbf x) d\gamma$, and Lemma \ref{th:polequivar} is used in place of Hilbert's theorem.
\end{proof}
We remark that, in turn, Proposition \ref{th:equivar} generalizes Proposition \ref{th:invar}; the latter is a special case obtained  when $U=\mathbb R$, and in this case we just take $N_{\mathrm{eq}}=1$ and $g_1= 1.$

\subsection{Polarization and multiplicity reduction}\label{sec:polar}

The main point of Propositions \ref{th:invar} and \ref{th:equivar} is that the representations described there use \emph{finite} generating sets of invariants and equivariants $\{f_s\}_{s=1}^{N_{\mathrm{inv}}}, \{g_m\}_{m=1}^{N_{\mathrm{eq}}}$ independent of the function $f$ being approximated. However, the obvious drawback of these results is their non-constructive nature with regard to the functions $f_s, g_m$. In general, finding generating sets is not easy. Moreover, the sizes $N_{\mathrm{inv}},N_{\mathrm{eq}}$ of these sets in general grow rapidly  with the dimensions of the spaces $V,U$.   

This issue can be somewhat ameliorated using polarization and Weyl's theorem. Suppose that a $\Gamma$--module $V$ admits a decomposition into a direct sum of invariant submodules:
\begin{equation}\label{eq:isotyp}
V=\bigoplus_{\alpha}V_\alpha^{m_\alpha}.
\end{equation}
Here, $V_\alpha^{m_\alpha}$ is a direct sum of $m_\alpha$ submodules isomorphic to $V_\alpha$:  
\begin{equation}\label{eq:isotypcomp} V_\alpha^{m_\alpha}=V_\alpha\otimes \mathbb R^{m_\alpha}=\underbrace{V_\alpha\oplus\ldots\oplus V_{\alpha}}_{m_\alpha}.\end{equation}
Any finite-dimensional representation of a compact group is completely reducible and has a decomposition of the form \eqref{eq:isotyp} with non-isomorphic irreducible submodules $V_\alpha$. In this case the decomposition \eqref{eq:isotyp} is referred to as the \emph{isotypic decomposition}, and the subspaces $V_\alpha^{m_\alpha}$ are known as \emph{isotypic components}. Such isotypic components and their multiplicities $m_\alpha$ are uniquely determined (though individually, the $m_\alpha$ spaces $V_\alpha$ appearing in the direct sum \eqref{eq:isotypcomp} are not uniquely determined, in general, as subspaces in $V$).

For finite groups the number of non-isomorphic irreducibles $\alpha$ is finite. In this case, if the module $V$ is high-dimensional, then this necessarily means that (some of) the multiplicities $m_\alpha$ are large. This is not so, in general, for infinite groups, since infinite compact groups have countably many non-isomorphic irreducible representations. Nevertheless, it is in any case useful to simplify the structure of invariants for high--multiplicity modules, which is what polarization and Weyl's theorem do. 

Below, we slightly abuse the terminology and speak of isotypic components and decompositions in the broader sense, assuming decompositions \eqref{eq:isotyp}, \eqref{eq:isotypcomp} but not requiring the submodules $V_\alpha$ to be irreducible or mutually non-isomorphic. 

The idea of polarization is to generate polynomial invariants of a representation with large multiplicities from invariants of a representation with small multiplicities. Namely, note that in each isotypic component $V_\alpha^{m_\alpha}$ written as $V_\alpha\otimes \mathbb R^{m_\alpha}$ the group essentially acts only on the first factor, $V_\alpha$. So, given two isotypic $\Gamma$-modules of the same type, $V_\alpha^{m_\alpha}=V_\alpha\otimes \mathbb R^{m_\alpha}$ and $V_\alpha^{m'_\alpha}=V_\alpha\otimes \mathbb R^{m'_\alpha}$, the group action commutes with any linear map $\mathbbm{1}_{V_\alpha}\otimes A:V_\alpha^{m_\alpha}\to V_\alpha^{m'_\alpha}$, where $A$ acts on the second factor, $A:\mathbb R^{m_\alpha}\to \mathbb R^{m'_\alpha}$. 
Consequently, given two modules $V=\bigoplus_\alpha V_\alpha^{m_\alpha}$, $V'=\bigoplus_\alpha V_\alpha^{m'_\alpha}$ and a linear map $A_\alpha:V_\alpha^{m_\alpha}\to V_\alpha^{m'_\alpha}$ for each $\alpha$, the linear operator $\mathbf A:V\to V'$ defined by 
\begin{equation}\label{a}
\mathbf A=\bigoplus_\alpha \mathbbm{1}_{V_\alpha}\otimes A_\alpha\end{equation} will commute with the group action. In particular, if $f$ is a polynomial invariant on $V'$, then $f\circ \mathbf A$ will be a polynomial invariant on $V$.    

The fundamental theorem of Weyl states that it suffices to take $m'_\alpha=\dim V_\alpha$ to generate in this way a complete set of invariants for $V$. We will state this theorem in the following form suitable for our purposes.
\begin{theorem}[\cite{weyl1946classical}, sections II.4-5]\label{th:weyl} Let $F$ be the set of polynomial invariants for a $\Gamma$-module $V'=\bigoplus_\alpha V_\alpha^{\dim V_\alpha}.$ Suppose that a $\Gamma$-module $V$ admits a decomposition $V=\bigoplus_\alpha V_\alpha^{m_\alpha}$ with the same $V_\alpha$, but arbitrary multiplicities $m_\alpha$. Then the polynomials $\{f\circ \mathbf A\}_{f\in F}$ linearly span the space of polynomial invariants on $V$, i.e. any polynomial invariant $f$ on $V$ can be expressed as $f(\mathbf x) = \sum_{t=1}^T f_{t}(\mathbf A_t\mathbf x)$ with some polynomial invariants $f_t$ on $V'$.
\end{theorem}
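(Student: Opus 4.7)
My plan is to follow Weyl's classical polarization strategy.

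First, I would reduce the statement to a claim about multilinear invariants. Any polynomial invariant on $V=\bigoplus_\alpha V_\alpha^{m_\alpha}$ decomposes into multi-homogeneous components, one per multi-degree $\vec d=(d_{\alpha,j})$ relative to the individual copies $V_\alpha^{(j)}$; each component is itself $\Gamma$-invariant since $\Gamma$ acts within each copy, so I may assume the given invariant $p$ is multi-homogeneous of multi-degree $\vec d$. Setting $D_\alpha=\sum_j d_{\alpha,j}$ and $V''=\bigoplus_\alpha V_\alpha^{D_\alpha}$, the full polarization of $p$ is the unique $\Gamma$-invariant multilinear form $\tilde p$ on $V''$ (symmetric within each block of $d_{\alpha,j}$ arguments) satisfying $p=\tilde p\circ\mathbf C$, where $\mathbf C:V\to V''$ sends the $j$-th copy of $V_\alpha$ to $d_{\alpha,j}$ consecutive copies. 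Crucially, $\mathbf C$ has the form $\bigoplus_\alpha\mathbbm{1}_{V_\alpha}\otimes C_\alpha$ required by the theorem, so the task reduces to expressing the multilinear invariant $\tilde p$ as $\sum_k f_k\circ\mathbf A_k'$ with $f_k$ a polynomial invariant on $V'$ and $\mathbf A_k'$ of the stipulated form.

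I would then apply, one isotypic block at a time, the following key multiplicity-reduction lemma: every $\Gamma$-invariant multilinear form on $V_\alpha^D$ can be written as $\sum_k f_k\circ\mathbf A_k$, where each $f_k$ is a polynomial $\Gamma$-invariant of degree $D$ on $V_\alpha^{n_\alpha}$ (with $n_\alpha=\dim V_\alpha$; note $f_k$ will generally not be multilinear, and the non-multilinear contributions in the individual summands must cancel out in the sum) and each $\mathbf A_k:V_\alpha^D\to V_\alpha^{n_\alpha}$ has the multiplicity-only form $\mathbbm{1}_{V_\alpha}\otimes B_k$. Granted this lemma, composing through $\mathbf C$ gives $p=\sum_k f_k\circ(\mathbf A_k\mathbf C)$, and $\mathbf A_k\mathbf C$ is again multiplicity-only since compositions of such maps remain of the form $\bigoplus_\alpha\mathbbm{1}_{V_\alpha}\otimes(\cdot)$.

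The hard part is the multiplicity-reduction lemma, which is the genuine content of Weyl's theorem. Weyl's own proof (sections II.4--II.5 of \cite{weyl1946classical}) uses Capelli-style differential operators in the substitution coefficients $B$ that commute with the $\Gamma$-action and allow one to polarize invariants of $n_\alpha$ vectors into all invariants on $V_\alpha^D$. A modern alternative identifies the space $((V_\alpha^*)^{\otimes D})^\Gamma$ as a module for $\operatorname{End}(\mathbb R^D)$ acting on tensor positions, and uses complete reducibility together with Schur--Weyl-type duality to show this module is generated, under $\operatorname{End}(\mathbb R^D)$-substitution, from its analogue at $D=n_\alpha$. In either case the bookkeeping is substantial, so I would invoke the classical result rather than reproduce it in full here.
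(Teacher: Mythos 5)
Your argument is correct and, like the paper's, is ultimately a sketch that defers the genuine content to the classical Capelli/polarization machinery of Weyl and Kraft--Procesi; this is legitimate here since the theorem is explicitly attributed to those sources. The organization differs in one respect. The paper reduces to multihomogeneous invariants and then applies the Capelli--Deruyts expansion $f=\sum_n C_nB_nf$ directly on $V_\alpha^{m_\alpha}$: each $B_nf$ is an invariant polynomial depending only on the first $\dim V_\alpha$ copies, and the operators $C_n$, lying in the algebra generated by polarizations, are then rewritten as sums of substitutions $\mathbf x\mapsto(\mathbbm 1_{V_\alpha}\otimes A_{kn})\mathbf x$ (Proposition 7.4 of Kraft--Procesi). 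You instead insert a full polarization/restitution step, passing to a multilinear invariant $\tilde p$ on $\bigoplus_\alpha V_\alpha^{D_\alpha}$ via a map $\mathbf C$ of the admissible form, and then invoke the multiplicity-reduction statement only for multilinear invariants of a single isotypic block, recovering the general case by composing $\mathbf A_k\mathbf C$ --- correctly observing that such compositions stay of the form $\bigoplus_\alpha\mathbbm 1_{V_\alpha}\otimes(\cdot)$. What this buys is a cleaner target for the hard lemma (multilinear invariants only, which is how Weyl-style ``first main theorem'' statements are usually phrased), at the cost of an extra reduction; the paper's route avoids multilinearization but must quote the less elementary realization of polarization operators as sums of substitutions. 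Both treatments handle several isotypic components by the same one-block-at-a-time device (with the other blocks' variables carried along as parameters), and your caveat that the individual summands $f_k\circ\mathbf A_k$ need not be multilinear --- only their sum is --- is accurate and worth keeping.
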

\begin{proof} 
A detailed exposition of polarization and a proof of Weyl's theorem based on the Capelli--Deruyts expansion can be found in Weyl's book or in Sections 7--9 of \cite{kraft2000classical}. We sketch the main idea of the proof. 

Consider first the case where $V$ has only one isotypic component: $V=V_\alpha^{m_\alpha}$. We may assume without loss of generality that $m_\alpha>\dim V_\alpha$ (otherwise the statement is trivial). It is also convenient to identify the space $V'=V_\alpha^{\dim V_\alpha}$ with the subspace of $V$ spanned by the first $\dim V_\alpha$ components $V_\alpha$.  
It suffices to establish the claimed expansion for polynomials $f$ multihomogeneous with respect to the decomposition $V=V_\alpha\oplus\ldots \oplus V_\alpha$, i.e. homogeneous with respect to each of the $m_\alpha$ components. For any such polynomial, the Capelli--Deruyts expansion represents $f$ as a finite sum $f=\sum_{n}C_nB_nf$. Here $C_n,B_n$ are linear operators on the space of polynomials on $V$, and they belong to the algebra generated by polarization operators on $V$. Moreover, for each $n$, the polynomial $\widetilde f_n=B_nf$ depends only on variables from the first $\dim V_\alpha$ components of $V=V_\alpha^{m_\alpha}$, i.e. $\widetilde f_n$ is a polynomial on $V'$. This polynomial is invariant, since polarization operators commute with the group action. Since $C_n$ belongs to the algebra generated by polarization operators, we can then argue (see Proposition 7.4 in \cite{kraft2000classical}) that $C_nB_nf$ can be represented as a finite sum $C_nB_nf(\mathbf x)=\sum_{k}\widetilde f_n((\mathbbm{1}_{V_\alpha}\otimes A_{kn})\mathbf x)$ with some $m_\alpha\times \dim V_\alpha$ matrices $A_{kn}$. This implies the claim of the theorem in the case of a single isotypic component.

Generalization to several isotypic components is obtained by iteratively applying the Capelli--Deruyts expansion to each component. 
\end{proof}
Now we can give a more constructive version of Proposition \ref{th:invar}:

\begin{prop}\label{th:constr} Let $(f_s)_{s=1}^{N_{\mathrm{inv}}}$ be a generating set of polynomial invariants for a $\Gamma$-module $V'=\bigoplus_\alpha V_\alpha^{\dim V_\alpha}.$ Suppose that a $\Gamma$-module $V$ admits a decomposition $V=\bigoplus_\alpha V_\alpha^{m_\alpha}$ with the same $V_\alpha$, but arbitrary multiplicities $m_\alpha$. Then any continuous invariant map $f:V\to\mathbb R$ can be approximated by invariant maps $\widehat{f}:V\to\mathbb R$ of the form 
\begin{equation}\label{constr_ansatz1}
\widehat{f}(\mathbf x)=\sum_{t=1}^T c_{t}\sigma\Big(\sum_{s=1}^{N_{\mathrm{inv}}} w_{st}f_{s}(\mathbf A_{t}\mathbf x)+h_{t}\Big)
\end{equation}
with some parameter $T$ and coefficients $c_{t},w_{st},h_{t},\mathbf A_{t},$ where each $\mathbf A_{t}$ is formed by an arbitrary collection of $(m_\alpha\times \dim V_\alpha)$-matrices $A_\alpha$ as in \eqref{a}.
\end{prop}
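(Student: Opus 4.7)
The plan is to chain together three approximation steps: (i) reduce from continuous invariants to polynomial invariants by symmetrization, as in Proposition \ref{th:invar}; (ii) use Weyl's theorem to rewrite any polynomial invariant on $V$ as a finite sum of polynomials of the form $f_s(\mathbf A_t \mathbf x)$ with $f_s$ drawn from the generating set on $V'$; (iii) approximate the resulting outer polynomial by the standard shallow-network ansatz of Theorem \ref{th:leshno}.

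In more detail: fix a compact set $K\subset V$ and $\epsilon>0$. Following the proof of Proposition \ref{th:invar}, I would form the symmetrized compact set $K_{\mathrm{sym}}=\bigcup_{\gamma\in\Gamma}R_\gamma(K)$, use Stone--Weierstrass and group averaging to produce an invariant polynomial $r_{\mathrm{sym}}$ on $V$ with $|r_{\mathrm{sym}}(\mathbf x)-f(\mathbf x)|\le\epsilon$ on $K$. By Theorem \ref{th:weyl}, this $r_{\mathrm{sym}}$ can be expressed as a finite sum
\begin{equation*}
r_{\mathrm{sym}}(\mathbf x)=\sum_{t=1}^{T} p_t(\mathbf A_t \mathbf x),
\end{equation*}
where each $p_t$ is a polynomial invariant on $V'$ and each $\mathbf A_t$ has the form \eqref{a}. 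Since $(f_s)_{s=1}^{N_{\mathrm{inv}}}$ is a generating set of polynomial invariants on $V'$, Hilbert's theorem lets me write $p_t(\mathbf y)=\widetilde{p}_t(f_1(\mathbf y),\ldots,f_{N_{\mathrm{inv}}}(\mathbf y))$ for some polynomial $\widetilde{p}_t$ in $N_{\mathrm{inv}}$ variables, so
\begin{equation*}
r_{\mathrm{sym}}(\mathbf x)=\sum_{t=1}^{T}\widetilde{p}_t\bigl(f_1(\mathbf A_t\mathbf x),\ldots,f_{N_{\mathrm{inv}}}(\mathbf A_t\mathbf x)\bigr).
\end{equation*}

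Finally, for each $t$ the set $K_t=\{(f_1(\mathbf A_t\mathbf x),\ldots,f_{N_{\mathrm{inv}}}(\mathbf A_t\mathbf x)):\mathbf x\in K\}\subset \mathbb R^{N_{\mathrm{inv}}}$ is compact (continuous image of $K$), so Theorem \ref{th:leshno} yields a shallow-network approximation
\begin{equation*}
\widetilde{p}_t(z_1,\ldots,z_{N_{\mathrm{inv}}})\approx \sum_{n=1}^{N_t} c_{tn}\sigma\Bigl(\sum_{s=1}^{N_{\mathrm{inv}}} w_{tns} z_s + h_{tn}\Bigr)
\end{equation*}
uniformly on $K_t$ with error at most $\epsilon/T$. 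Summing over $t$ and relabelling the pair $(t,n)$ as a single index gives an invariant approximant of the form \eqref{constr_ansatz1} with total error bounded by $3\epsilon$ on $K$, and the matrices $\mathbf A_t$ repeated across the inner sum over $n$.

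The only subtle step is (ii): everything else is a direct invocation of results already proved (Theorems \ref{th:leshno}, \ref{th:weyl}, Hilbert's theorem, Stone--Weierstrass). I expect the main obstacle to be purely bookkeeping: checking that the approximation errors compose correctly (so that the overall error is $O(\epsilon)$), and that the ansatz \eqref{constr_ansatz1} can genuinely absorb the sum $\sum_{t=1}^T$ of independent shallow-network terms by merging indices. No essentially new idea beyond Weyl's theorem is required.
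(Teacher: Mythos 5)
Your proposal is correct and follows essentially the same route as the paper: symmetrize to reduce to an invariant polynomial, apply Weyl's theorem (Theorem \ref{th:weyl}) to write it as $\sum_t r_t(\mathbf A_t\mathbf x)$, approximate each $r_t$ via the generating set and Theorem \ref{th:leshno}, and merge the double index into the single sum of \eqref{constr_ansatz1}. The only cosmetic difference is that you unfold the last step explicitly (Hilbert's theorem plus Theorem \ref{th:leshno}) where the paper simply cites Proposition \ref{th:invar}.
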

\begin{proof}
We follow the proof of Proposition \ref{th:invar} and approximate the function $f$ by an invariant polynomial $r_{\mathrm{sym}}$ on a compact set $K_{\mathrm{sym}}\subset V$. Then, using Theorem \ref{th:weyl}, we represent 
\begin{equation}\label{eq:rmsym}
r_{\mathrm{sym}}(\mathbf x)=\sum_{t=1}^T r_t(\mathbf A_t\mathbf x)
\end{equation}
with some invariant polynomials $r_t$ on $V'$. Then, by Proposition \ref{th:invar}, for each $t$ we can approximate $r_t(\mathbf y)$ on $\mathbf A_tK_{\mathrm{sym}}$ by an expression 
\begin{equation}\label{eq:ncnt}\sum_{n=1}^N\widetilde{c}_{nt}\sigma\Big(\sum_{s=1}^{N_{\mathrm{inv}}} \widetilde{w}_{nst}f_{s}(\mathbf y)+\widetilde{h}_{nt}\Big)\end{equation} with some $\widetilde{c}_{nt},\widetilde{w}_{nst},\widetilde{h}_{nt}.$ Combining \eqref{eq:rmsym} with \eqref{eq:ncnt}, it follows that $f$ can be approximated on $K_{\mathrm{sym}}$ by  $$\sum_{t=1}^T\sum_{n=1}^N\widetilde{c}_{nt}\sigma\Big(\sum_{s=1}^{N_{\mathrm{inv}}} \widetilde{w}_{nst}f_{s}(\mathbf A_{t}\mathbf x)+\widetilde{h}_{nt}\Big).$$ The final expression \eqref{constr_ansatz1} is obtained now by removing the superfluous summation over $n$. 
\end{proof}

Proposition \ref{th:constr} is more constructive than Proposition \ref{th:invar} in the sense that the approximating ansatz \eqref{constr_ansatz1} only requires us to know an isotypic decomposition $V=\bigoplus_\alpha V_\alpha^{m_\alpha}$ of the $\Gamma$-module under consideration and a generating set $(f_s)_{s=1}^{N_{\mathrm{inv}}}$ for the reference module $V'=\bigoplus_\alpha V_\alpha^{\dim V_\alpha}$. In particular, suppose that the group $\Gamma$ is finite, so that there are only finitely many non-isomorphic irreducible modules $V_\alpha$. Then, for any $\Gamma$--module $V$, the universal approximating ansatz \eqref{constr_ansatz1} includes not more than $CT\dim V$ scalar weights, with some constant $C$ depending only on $\Gamma$ (since $\dim V=\sum_{\alpha}m_\alpha\dim V_\alpha$). 

We remark that in terms of the network architecture, formula \eqref{constr_ansatz1} can be interpreted as the network \eqref{eq:polyinvansatz} from Proposition \ref{th:invar} with an extra linear layer performing multiplication of the input vector by $\mathbf A_t$.

We establish now an equivariant analog of Proposition \ref{th:constr}. We start with an equivariant analog of Theorem \ref{th:weyl}.  
\begin{prop}\label{th:weyl_equiv} Let $V'=\bigoplus_\alpha V_\alpha^{\dim V_\alpha}$ and $G$ be the space of polynomial equivariants $g:V'\to U$.  Suppose that a $\Gamma$-module $V$ admits a decomposition $V=\bigoplus_\alpha V_\alpha^{m_\alpha}$ with the same $V_\alpha$, but arbitrary multiplicities $m_\alpha$. Then, the functions $\{g\circ \mathbf A\}_{g\in G}$ linearly span the space of polynomial equivariants $g:V\to U$, i.e. any such equivariant can be expressed as $g(\mathbf x) = \sum_{t=1}^T g_{t}(\mathbf A_t\mathbf x)$ with some polynomial equivariants $g_t:V'\to U$.
\end{prop}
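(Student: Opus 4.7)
My plan is to reduce the equivariant statement to the invariant Theorem~\ref{th:weyl} by using the same $U^*$-trick that appeared in the proof of Lemma~\ref{th:polequivar}, and then to extract the piece that is linear in the auxiliary variable.

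First, I will identify a polynomial equivariant $g:V\to U$ with a polynomial invariant $\widetilde g:V\oplus U^*\to\mathbb R$ homogeneous of degree one in the second argument, via $\widetilde g(\mathbf x,\mathbf u)=\mathbf u(g(\mathbf x))$, where $\Gamma$ acts on $U^*$ contragrediently to its action on $U$. Invariance of $\widetilde g$ under the diagonal action on $V\oplus U^*$ is equivalent to equivariance of $g$, and the multi-homogeneity conditions match: degree one in $\mathbf u$ on the left corresponds to $U$-valuedness on the right.

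Second, writing $U^*=\bigoplus_\alpha V_\alpha^{\ell_\alpha}$, the joint module becomes $V\oplus U^*=\bigoplus_\alpha V_\alpha^{m_\alpha+\ell_\alpha}$, which has the same set of isotypic types as $V$ (possibly with new components contributed by $U^*$). I will apply Theorem~\ref{th:weyl} to $V\oplus U^*$ with the same reference module $V'=\bigoplus_\alpha V_\alpha^{\dim V_\alpha}$ as in the hypothesis; this yields invariant polynomials $f_t$ on $V'$ and polarizations $\widetilde{\mathbf A}_t:V\oplus U^*\to V'$ of the block form \eqref{a} satisfying $\widetilde g(\mathbf x,\mathbf u)=\sum_t f_t(\widetilde{\mathbf A}_t(\mathbf x,\mathbf u))$.

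Third, I will split each $\widetilde{\mathbf A}_t$ along the direct sum: $\widetilde{\mathbf A}_t(\mathbf x,\mathbf u)=\mathbf A_t\mathbf x+\mathbf B_t\mathbf u$, with $\mathbf A_t:V\to V'$ and $\mathbf B_t:U^*\to V'$ both $\Gamma$-equivariant, and with $\mathbf A_t$ itself of the polarization form required by the proposition (it arises by retaining the $V$-block of columns in each matrix $\widetilde A_{t,\alpha}$). Since $\widetilde g$ is homogeneous of degree one in $\mathbf u$, I will extract the $\mathbf u$-linear part of the identity. Equipping $V'$ with a $\Gamma$-invariant inner product (available because $\Gamma$ is compact), this yields $\widetilde g(\mathbf x,\mathbf u)=\sum_t\langle\mathbf B_t^{*}\nabla f_t(\mathbf A_t\mathbf x),\mathbf u\rangle$, where $\mathbf B_t^{*}:V'\to U$ is the adjoint of $\mathbf B_t$. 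Setting $g_t:=\mathbf B_t^{*}\circ\nabla f_t$ gives polynomial equivariants $V'\to U$ (equivariance of $\nabla f_t$ follows from invariance of $f_t$ together with that of the inner product, and equivariance of $\mathbf B_t^{*}$ follows from that of $\mathbf B_t$), and unpacking the correspondence of the first step produces $g(\mathbf x)=\sum_t g_t(\mathbf A_t\mathbf x)$ as required.

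The main obstacle I anticipate is the bookkeeping at the third step: checking that restricting the columns of each $\widetilde{\mathbf A}_t$ leaves a valid polarization of the required shape, and that extracting the $\mathbf u$-linear coefficient preserves both the $\Gamma$-equivariance and the block-diagonal structure of \eqref{a}. These verifications are routine once the invariant inner product has been fixed, but they are where the argument has to stay honest, and they are what makes the reduction to Theorem~\ref{th:weyl} go through.
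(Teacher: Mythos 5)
Your proof is correct, but it takes a genuinely different route through the key polarization step than the paper does. Both arguments begin identically, encoding a polynomial equivariant $g:V\to U$ as an invariant $\widetilde g\in\mathbb R[V\oplus U^*]$ of degree one in the $U^*$ variables. The paper then applies the Capelli--Deruyts expansion only to the isotypic components of the submodule $V$ inside $V\oplus U^*$, leaving the $U^*$ variables untouched; this directly produces invariants on $V'\oplus U^*$ that are still of $U^*$-degree one, i.e.\ equivariants $g_t:V'\to U$, with no inner products or adjoints required. You instead invoke Theorem \ref{th:weyl} as a black box on all of $V\oplus U^*$ and then recover the $U$-valued map by extracting the part of each $f_t(\mathbf A_t\mathbf x+\mathbf B_t\mathbf u)$ that is linear in $\mathbf u$, namely the directional derivative $Df_t(\mathbf A_t\mathbf x)[\mathbf B_t\mathbf u]$; this is legitimate because homogeneous components in $\mathbf u$ can be matched on both sides of the identity, the gradient of an invariant with respect to an invariant inner product is equivariant, and the adjoint of the equivariant map $\mathbf B_t$ is equivariant. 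What your route buys is that Theorem \ref{th:weyl} is used as stated rather than re-entering its proof; what it costs is the bookkeeping you acknowledge, plus one point you should state more carefully: if $U^*$ contains irreducible types not occurring in $V$, then $V\oplus U^*$ is not a module of the form covered by the hypothesis with the reference module $V'$ of the proposition, and Theorem \ref{th:weyl} must be applied with the enlarged reference module $V''=V'\oplus\bigoplus_\beta V_\beta^{\dim V_\beta}$ over the new types $\beta$. This is harmless --- the $V$-block of each $\widetilde{\mathbf A}_t$ still maps $V$ into $V'$ because the block form \eqref{a} respects isotypic types, and $g_t$ is obtained by precomposing $\mathbf B_t^{*}\circ\nabla f_t$ with the inclusion $V'\hookrightarrow V''$ --- but as written your appeal to the theorem is off by this enlargement.
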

\begin{proof} As mentioned in the proof of Lemma \ref{th:polequivar}, polynomial equivariants $g:V\to U$ can be viewed as invariant elements of the extended polynomial algebra $\mathbb R[V\oplus U^*]$. The proof of the theorem is then completely analogous to the proof of Theorem \ref{th:weyl} and consists in applying the Capelli--Deruyts expansion to each isotypic component of the submodule $V$ in $V\oplus U^*$.
\end{proof}

The equivariant analog of Proposition \ref{th:constr} now reads:
\begin{prop}\label{th:constr_equiv} Let $(f_s)_{s=1}^{N_{\mathrm{inv}}}$ be a generating set of polynomial invariants for a $\Gamma$-module $V'=\bigoplus_\alpha V_\alpha^{\dim V_\alpha}$, and  $(g_s)_{s=1}^{N_\mathrm{eq}}$ be a generating sets of polynomial equivariants mapping $V'$ to a $\Gamma$-module $U$. Let  $V=\bigoplus_\alpha V_\alpha^{m_\alpha}$ be a $\Gamma$-module with the same $V_\alpha$. Then any continuous equivariant map $f:V\to U$ can be approximated by equivariant maps $\widehat{f}:V\to U$ of the form 
\begin{equation}\label{eq:constr_ansatz_equiv}
\widehat{f}(\mathbf x)=\sum_{t=1}^T\sum_{m=1}^{N_\mathrm{eq}} c_{mt}g_m(\mathbf A_{t} \mathbf x)\sigma\Big(\sum_{s=1}^{N_{\mathrm{inv}}} w_{mst}f_{s}(\mathbf A_{t}\mathbf x)+h_{mt}\Big)
\end{equation}
with some coefficients $c_{mt},w_{mst},h_{mt},\mathbf A_{t},$ where each $\mathbf A_{t}$ is given by a collection of $(m_\alpha\times \dim V_\alpha)$-matrices $A_\alpha$ as in \eqref{a}. 
\end{prop}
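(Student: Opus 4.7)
The plan is to follow the same template that gave Proposition \ref{th:constr}, but substitute the equivariant analogs of the ingredients at every step. Thus, starting from a continuous equivariant $f:V\to U$, a compact $K\subset V$, and $\epsilon>0$, I would first build the symmetrized compact set $K_{\mathrm{sym}}=\bigcup_{\gamma\in\Gamma}R_\gamma(K)$ and, repeating the Stone--Weierstrass/group--averaging argument used in the proofs of Propositions \ref{th:syminv} and \ref{th:equivar}, approximate $f$ uniformly on $K_{\mathrm{sym}}$ by a polynomial equivariant $r_{\mathrm{sym}}:V\to U$. The symmetrization here is $r_{\mathrm{sym}}(\mathbf x)=\int_\Gamma R_\gamma^{-1} r(R_\gamma \mathbf x)d\gamma$ for a vector-valued polynomial $r$ approximating $f$ coordinatewise; this yields a polynomial (since $r(R_\gamma\mathbf x)$ has fixed total degree) and is manifestly $\Gamma$-equivariant.

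Next I would invoke the newly established Proposition \ref{th:weyl_equiv} to reduce from the high-multiplicity module $V$ to the reference module $V'$: it provides a finite decomposition
\begin{equation*}
r_{\mathrm{sym}}(\mathbf x)=\sum_{t=1}^T g_t(\mathbf A_t \mathbf x),
\end{equation*}
where each $g_t:V'\to U$ is a polynomial equivariant and each $\mathbf A_t$ has the intertwining block form \eqref{a}. Crucially, because $\mathbf A_t$ commutes with the group action, composition with $\mathbf A_t$ preserves equivariance, and the image $\mathbf A_t K_{\mathrm{sym}}$ is a compact subset of $V'$ on which each $g_t$ is to be approximated.

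Then I would apply Proposition \ref{th:equivar} on the reference module $V'$: for each $t$, there exist coefficients $\widetilde c_{mnt},\widetilde w_{mnst},\widetilde h_{mnt}$ such that
\begin{equation*}
\Big\|g_t(\mathbf y)-\sum_{n=1}^{N}\sum_{m=1}^{N_{\mathrm{eq}}}\widetilde c_{mnt}\,g_m(\mathbf y)\sigma\Big(\sum_{s=1}^{N_{\mathrm{inv}}}\widetilde w_{mnst}f_s(\mathbf y)+\widetilde h_{mnt}\Big)\Big\|\le \epsilon/T
\end{equation*}
uniformly for $\mathbf y\in \mathbf A_t K_{\mathrm{sym}}$. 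Substituting $\mathbf y=\mathbf A_t\mathbf x$ and summing over $t$ produces an approximant of exactly the form \eqref{eq:constr_ansatz_equiv}, except with an additional summation index $n$; this is removed by relabeling the triples $(m,n,t)$ as a single index $t'$ running over $\{1,\ldots,T N\}$, which leaves the ansatz in the required shape.

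The one place where I would have to be careful is checking that equivariance propagates cleanly through the whole chain: the symmetrization preserves it, Proposition \ref{th:weyl_equiv} preserves it because each $\mathbf A_t$ is of the intertwining form \eqref{a} so that $g_t\circ \mathbf A_t$ is equivariant on $V$, and the final ansatz is a sum of products of equivariants $g_m(\mathbf A_t\mathbf x)$ with scalar invariants $\sigma(\sum_s w_{mst}f_s(\mathbf A_t\mathbf x)+h_{mt})$. Beyond this verification and the relabeling described above, the argument is routine; the genuine mathematical content has already been absorbed into Propositions \ref{th:equivar} and \ref{th:weyl_equiv}, so no substantive obstacle remains.
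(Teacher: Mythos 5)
Your proposal is correct and follows essentially the same route as the paper: approximate $f$ by a symmetrized polynomial equivariant on $K_{\mathrm{sym}}$, decompose it via Proposition \ref{th:weyl_equiv} as $\sum_t r_t(\mathbf A_t\mathbf x)$, approximate each $r_t$ on $\mathbf A_t K_{\mathrm{sym}}$ using Proposition \ref{th:equivar}, and absorb the extra summation index by relabeling. The only (harmless) difference is notational --- the paper writes $r_t$ for the Weyl-decomposition equivariants to avoid the clash with the generating set $(g_m)$ that your choice of symbol $g_t$ introduces.
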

\begin{proof} 
As in the proof of Theorem \ref{th:equivar}, we approximate the function $f$ by a polynomial equivariant $r_{\mathrm{sym}}$ on a compact $K_{\mathrm{sym}}\subset V$. Then, using Theorem \ref{th:weyl_equiv}, we represent 
\begin{equation}\label{eq:rsym1}r_{\mathrm{sym}}(\mathbf x)=\sum_{t=1}^T r_t(\mathbf A_t\mathbf x)
\end{equation}
with some polynomial equivariants $r_t:V'\to U$. Then, by Proposition \ref{th:equivar}, for each $t$ we can approximate 
$r_t(\mathbf x')$ on $\mathbf A_tK_{\mathrm{sym}}$ by expressions \begin{equation}\label{eq:rtx} \sum_{n=1}^N\sum_{m=1}^{N_\mathrm{eq}}\widetilde c_{mnt}g(\mathbf x')\sigma\Big(\sum_{s=1}^{N_\mathrm{inv}} \widetilde w_{mnst}f_{s}(\mathbf x')+\widetilde h_{mnt}\Big).\end{equation}
Using \eqref{eq:rsym1} and \eqref{eq:rtx}, $f$ can be approximated on $K_{\mathrm{sym}}$ by expressions 
\begin{equation*}\sum_{t=1}^T \sum_{n=1}^N\sum_{m=1}^{N_\mathrm{eq}}\widetilde c_{mnt}g(\mathbf A_{t}\mathbf x)\sigma\Big(\sum_{s=1}^{N_{\mathrm{inv}}} \widetilde w_{mnst}f_{s}(\mathbf A_{t}\mathbf x)+\widetilde h_{mnt}\Big).\end{equation*}
We obtain the final form \eqref{eq:constr_ansatz_equiv} by removing the superfluous summation over $n$.
\end{proof}
We remark that Proposition \ref{th:constr_equiv} improves the earlier Proposition \ref{th:equivar} in the equivariant setting in the same sense in which  
Proposition \ref{th:constr} improves Proposition \ref{th:invar} in the invariant setting: construction of a universal approximator in the case of arbitrary isotypic multiplicities is reduced to the construction with particular multiplicities by adding an extra equivariant linear layer to the network.

\subsection{The symmetric group $S_N$}\label{sec:sn}
Even with the simplification resulting from polarization, the general results of the previous section are not immediately useful, since one still needs to find the isotypic decomposition of the analyzed $\Gamma$-modules and to find the relevant generating invariants and equivariants. In this section we describe one particular case where the approximating expression can be reduced to a fully explicit form.

Namely, consider the natural action of the symmetric group $S_N$ on $\mathbb R^N$:
$$R_\gamma \mathbf e_n=\mathbf e_{\gamma(n)},$$
where $\mathbf e_n\in\mathbb R^N$ is a coordinate vector and $\gamma\in S_N$ is a permutation.

Let $V=\mathbb R^N\otimes \mathbb R^M$ and consider $V$ as a $S_N$-module by assuming that the group acts on the first factor, i.e.  $\gamma$ acts on $\mathbf x=\sum_{n=1}^N \mathbf e_n\otimes \mathbf x_n\in V$ by $$R_\gamma \sum_{n=1}^N\mathbf e_n\otimes \mathbf x_n=\sum_{n=1}^N\mathbf e_{\gamma(n)}\otimes \mathbf x_n.$$

We remark that this module appears, for example, in the following scenario. Suppose that $f$ is a map defined on the set of \emph{sets} $X=\{\mathbf x_1,\ldots,\mathbf x_N\}$ of $N$ vectors from $\mathbb R^M$. We can identify the set $X$ with the element $\sum_{n=1}^N \mathbf e_n\otimes \mathbf x_n$ of $V$ and in this way view $f$ as defined on a subset of $V$. However, since the set $X$ is unordered, it can also be identified with $\sum_{n=1}^N \mathbf e_{\gamma(n)}\otimes \mathbf x_n$ for any permutation $\gamma\in \mathcal S_N$. Accordingly, if the map $f$ is to be extended to the whole $V$, then this extension needs to be invariant with respect to the above action of $S_N$.    

We describe now an explicit complete ansatz for $S_N$-invariant approximations of functions on $V$. This is made possible by another classical theorem of Weyl and by a simple form of a generating set of permutation invariants on $\mathbb R^N$. We will denote by $x_{nm}$ the coordinates of $\mathbf x\in V$ with respect to the canonical basis  in $V$:
$$\mathbf x = \sum_{n=1}^N\sum_{m=1}^M x_{nm}\mathbf e_n\otimes\mathbf e_m.$$

\begin{theorem}\label{th:sn}
Let $V=\mathbb R^N\otimes \mathbb R^M$ and $f:V\to \mathbb R$ be a $S_N$-invariant continuous map. Then $f$ can be approximated by $S_N$-invariant expressions 
\begin{equation}\label{eq:ansatzsn}
\widehat f(\mathbf x)=\sum_{t=1}^{T_1}c_t\sigma \bigg(\sum_{q=1}^{T_2}w_{qt}\sum_{n=1}^N\sigma\Big(b_q\sum_{m=1}^{M}a_{tm}x_{nm}+e_{q}\Big)+{h}_{t}\bigg),
\end{equation}
with some parameters $T_1,T_2$ and coefficients $c_t,w_{qt},b_q,a_{tm},e_{q},{h}_{t}$.
\end{theorem}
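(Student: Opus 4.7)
The plan is to follow the polynomial-invariants strategy of Section \ref{sec:polinveq} and then invoke a specific Weyl-type polarisation which, combined with the fundamental theorem of symmetric polynomials on $\mathbb R^N$, produces exactly the two-level structure of \eqref{eq:ansatzsn}. As in the proof of Proposition \ref{th:invar}, by Stone--Weierstrass and group averaging on the symmetrised compact set $K_{\mathrm{sym}}$, it suffices to approximate an arbitrary $S_N$-invariant polynomial $P\colon V\to\mathbb R$ uniformly on $K_{\mathrm{sym}}$.

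I would then invoke the theorem of Weyl underlying Proposition \ref{th:weyl}, specialised to $\Gamma=S_N$ and the module $V=\mathbb R^N\otimes\mathbb R^M$, in the form: any $S_N$-invariant polynomial $P$ on $V$ can be written as a finite sum $P(\mathbf x)=\sum_{t=1}^{T_1}Q_t(\mathbf A_t\mathbf x)$, where each $\mathbf A_t=\mathrm{id}_{\mathbb R^N}\otimes\mathbf a_t^{*}\colon V\to\mathbb R^N$ is the equivariant ``single-projection'' linear map $\mathbf x\mapsto(\langle\mathbf a_t,\mathbf x_n\rangle)_{n=1}^N$ for some $\mathbf a_t\in\mathbb R^M$, and each $Q_t\colon\mathbb R^N\to\mathbb R$ is an $S_N$-invariant polynomial on the permutation module. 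This is the exact form needed, because each outer neuron of \eqref{eq:ansatzsn} uses a single projection direction $\mathbf a_t$. The fundamental theorem of symmetric polynomials --- the ``simple form of a generating set of permutation invariants on $\mathbb R^N$'' referred to in the preamble of the theorem --- then lets me write $Q_t=\widetilde Q_t(p_1,\ldots,p_N)$ as a polynomial in the Newton power sums $p_k(\mathbf y)=\sum_{n=1}^N y_n^k$.

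For each $t$ I would then approximate $\widetilde Q_t\bigl(p_1(\mathbf A_t\mathbf x),\ldots,p_N(\mathbf A_t\mathbf x)\bigr)$ by two nested applications of Theorem \ref{th:leshno}. At the inner level, each monomial $y\mapsto y^k$ is approximated on a compact interval by $\sum_{q=1}^{T_2}w_q^{(k)}\sigma(b_qy+e_q)$ with the inner parameters $(b_q,e_q)$ chosen once and for all, independently of $k$ and $t$; summing over $n$ and substituting $y_n=\langle\mathbf a_t,\mathbf x_n\rangle$ gives $p_k(\mathbf A_t\mathbf x)\approx\sum_qw_q^{(k)}\sum_n\sigma(b_q\langle\mathbf a_t,\mathbf x_n\rangle+e_q)$. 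At the outer level, Theorem \ref{th:leshno} applied to $\widetilde Q_t$ as a continuous function of $(p_1,\ldots,p_N)\in\mathbb R^N$ gives $\widetilde Q_t\approx\sum_s\tilde c_{s,t}\sigma\bigl(\sum_k\tilde\nu_{k,s,t}p_k+\tilde h_{s,t}\bigr)$. Combining the two levels, collecting $w_{q,(s,t)}:=\sum_k\tilde\nu_{k,s,t}w_q^{(k)}$, summing over $t$ and reindexing the pair $(s,t)$ as a single outer index produces an expression of the exact form \eqref{eq:ansatzsn}.

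The main obstacle will be the Weyl step --- showing that one can take $\mathbf A_t$ in the restrictive single-projection form $\mathrm{id}_{\mathbb R^N}\otimes\mathbf a_t^{*}$, rather than the general equivariant linear maps $V\to V'$ allowed by Proposition \ref{th:weyl} (where the canonical reference module for $S_N$ would be $\mathbf 1^1\oplus W^{N-1}$, not $\mathbb R^N=\mathbf 1^1\oplus W^1$). This restriction is precisely what matches the inner structure of \eqref{eq:ansatzsn}, where each $\mathbf a_t$ enters only through the scalar $\langle\mathbf a_t,\mathbf x_n\rangle$ inside the innermost $\sigma$; once it is in hand, the remainder of the proof is a routine combination of symmetrisation with two nested uses of Theorem \ref{th:leshno}.
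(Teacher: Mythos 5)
Your proposal follows essentially the same route as the paper's proof: reduce to approximating an $S_N$-invariant polynomial, polarize down to single-projection maps $\mathbf x\mapsto(\sum_m a_{tm}x_{nm})_{n=1}^N$, use the power sums as generators of symmetric polynomials on $\mathbb R^N$, and finish with two nested applications of Theorem \ref{th:leshno} followed by collecting indices. The one step you flag as an obstacle --- that the $\mathbf A_t$ can be taken in the restrictive form $\mathrm{id}_{\mathbb R^N}\otimes\mathbf a_t^{*}$ rather than mapping onto the larger reference module $\mathbf 1\oplus W^{N-1}$ suggested by the general polarization theorem --- is exactly what the paper supplies by citing Weyl's sharper theorem on vector invariants of the symmetric group (Section II.3 of Weyl's book), which states that a generating set of $S_N$-invariants on $\mathbb R^N\otimes\mathbb R^M$ is obtained by polarizing a generating set on a \emph{single} copy of $\mathbb R^N$; with that citation in hand your argument coincides with the paper's.
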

\begin{proof}
It is clear that expression \eqref{eq:ansatzsn} is $S_N$-invariant and we only need to prove its completeness. The theorem of Weyl (\cite{weyl1946classical}, Section II.3) states that a generating set of symmetric polynomials on $V$ can be obtained by polarizing a generating set of symmetric polynomials  $\{f_p\}_{p=1}^{N_\mathrm{inv}}$ defined on a \emph{single} copy of $\mathbb R^N$. Arguing as in Proposition \ref{th:constr}, it follows that any $S_N$-invariant continuous map $f:V\to\mathbb R$ can be approximated by expressions $$\sum_{t=1}^{T_1}\widetilde c_t\sigma \Big(\sum_{p=1}^{N_\mathrm{inv}} \widetilde w_{pt} f_p(\widetilde{\mathbf A}_t\mathbf x)+\widetilde h_t\Big),$$
where $\widetilde{\mathbf A}_t\mathbf x=\sum_{n=1}^N\sum_{m=1}^M \widetilde a_{tm}x_{nm}\mathbf e_n.$ A well-known generating set of symmetric polynomials on $\mathbb R^N$ is  the first $N$ coordinate power sums:
$$f_p(\mathbf y)=\sum_{n=1}^N \widetilde f_p(y_n), \text{ where } \mathbf y = (y_1,\ldots,y_N),\quad \widetilde f_p(y_n) = y^p_n, \quad p=1,\ldots,N.$$
It follows that $f$ can be approximated by expressions
\begin{equation}\label{eq:snans2}\sum_{t=1}^{T_1}\widetilde c_t\sigma \bigg(\sum_{p=1}^N \widetilde w_{pt}\sum_{n=1}^N \widetilde f_p\Big(\sum_{m=1}^M \widetilde a_{tm}x_{nm}\Big)+\widetilde h_t\bigg).\end{equation}
Using Theorem \ref{th:leshno}, we can approximate $\widetilde f_p(y)$ by expressions $\sum_{q=1}^{T}\widetilde d_{pq}\sigma(\widetilde b_{pq}y+\widetilde h_{pq})$. It follows that \eqref{eq:snans2} can be approximated by
$$\sum_{t=1}^{T_1}\widetilde c_t\sigma \bigg(\sum_{p=1}^N\sum_{q=1}^{T} \widetilde w_{pt}\widetilde{d}_{pq}\sum_{n=1}^N \sigma\Big(\widetilde{b}_{pq}\sum_{m=1}^M \widetilde a_{tm}x_{nm}+h_{pq}\Big)+\widetilde h_t\bigg).$$
Replacing the double summation over $p,q$ by a single summation over $q$, we arrive at \eqref{eq:ansatzsn}. 
\end{proof}
Note that expression \eqref{eq:ansatzsn} resembles the formula of the usual (non-invariant) feedforward network with two hidden layers of sizes $T_1$ and $T_2$:
\begin{equation*}
\widehat f(\mathbf x)=\sum_{t=1}^{T_1}c_t\sigma \bigg(\sum_{q=1}^{T_2}w_{qt}\sigma\Big(\sum_{n=1}^N\sum_{m=1}^{M}a_{qnm}x_{nm}+e_{q}\Big)+{h}_{t}\bigg).
\end{equation*}

Let us also compare ansatz \eqref{eq:ansatzsn} with the ansatz obtained by direct symmetrization (see Proposition \eqref{th:syminv}), which in our case has the form $$\widehat f(\mathbf x)=\sum_{\gamma\in S_N}\sum_{t=1}^T c_t\sigma\Big(\sum_{n=1}^N\sum_{m=1}^Mw_{\gamma(n),m,t}x_{nm}+h_t\Big).$$
From the application perspective, since $|S_N|=N!,$ at large $N$ this expression has prohibitively many terms and is therefore impractical without subsampling of $S_N$, which would break the exact $S_N$-invariance. In contrast, ansatz  \eqref{eq:ansatzsn} is complete, fully $S_N$-invariant and  involves only $O(T_1N(M+T_2))$ arithmetic operations and evaluations of $\sigma$.

\section{Translations and deep convolutional networks}\label{sec:translations}
Convolutional neural networks (convnets, \cite{lecun1989generalization}) play a key role in many modern applications of deep learning. Such networks operate on input data having grid-like structure (usually, spatial or temporal) and consist of multiple stacked convolutional layers transforming initial object description into increasingly complex features necessary to recognize complex patterns in the data. The shape of earlier layers in the network mimics the shape of input data, but later layers gradually become ``thinner'' geometrically while acquiring ``thicker'' \emph{feature dimensions}. We refer the reader to deep learning literature for details on these networks, e.g. see Chapter 9 in \cite{Goodfellow-et-al-2016} for an introduction.

There are several important concepts associated with convolutional networks, in particular \emph{weight sharing} (which  ensures approximate \emph{translation equivariance} of the layers with respect to grid shifts); \emph{locality} of the layer operation; and \emph{pooling}.  Locality means that the layer output at a certain geometric point of the domain depends only on a small neighborhood of this point. Pooling is a grid subsampling that helps reshape the data flow by removing excessive spatial detalization. Practical usefulness of convnets stems from the interplay between these various elements of convnet design. 

From the perspective of the main topic of the present work -- group invariant/equivariant networks -- we are mostly interested in invariance/equivariance of convnets with respect to Lie groups such as the group of translations or the group of rigid motions (to be considered in Section \ref{sec:charge}), and we would like to establish relevant universal approximation theorems. However, we first point out some serious difficulties that one faces when trying to formulate and prove such results. 

\textbf{Lack of symmetry in finite computational models.} Practically used convnets are finite models; in particular they operate on discretized and bounded domains that do not possess the full symmetry of the spaces $\mathbb R^d$. While the translational symmetry is partially preserved by discretization to a regular grid, and the group $\mathbb R^d$ can be in a sense approximated by the groups $(\lambda\mathbb Z)^d$ or $(\lambda\mathbb Z_n)^d$, one cannot reconstruct, for example, the rotational symmetry in a similar way. If a group $\Gamma$ is compact, then, as discussed in Section \ref{sec:compact}, we can still obtain finite and fully $\Gamma$-invariant/equivariant computational models by considering finite-dimensional representations of $\Gamma$, but this is not the case with noncompact groups such as $\mathbb R^d$. Therefore, in the case of the group $\mathbb R^d $ (and the group of rigid planar motions considered later in Section \ref{sec:charge}), we will need to prove the desired results on invariance/eqiuvariance and completeness of convnets only in the limit of infinitely large domain and infinitesimal grid spacing.

\textbf{Erosion of translation equivariance by pooling.} Pooling reduces the translational symmetry of the convnet model. For example, if a few first layers of the network define a map equivariant with respect to the group $(\lambda\mathbb Z)^2$ with some spacing $\lambda$, then after pooling with stride $m$ the result will only be equivariant with respect to the subgroup $(m\lambda\mathbb Z)^2$. (We remark in this regard that in practical applications, weight sharing and accordingly translation equivariance are usually only important for earlier layers of convolutional networks.) Therefore, we will consider separately the cases of convnets without or with pooling; the $\mathbb R^d$--equivariance will only apply in the former case.

In view of the above difficulties, in this section we will give several versions of the universal approximation theorem for convnets, with different treatments of these issues.  

In Section \ref{sec:convnet_one_layer} we prove a universal approximation theorem for a single non-local convolutional layer on a finite discrete grid with periodic boundary conditions (Proposition \ref{th:convnetzn}). This basic result is a straightforward consequence of the general Proposition \ref{th:sym_equiv} when applied to finite abelian groups.

In Section \ref{sec:convnet_deep} we prove the main result of Section \ref{sec:translations}, Theorem \ref{th:convmain}. This theorem extends Proposition \ref{th:convnetzn} in several important ways. First, we will consider continuum signals, i.e. assume that the approximated map is defined on functions on $\mathbb R^n$ rather than on functions on a discrete grid. This extension will later allow us to rigorously formulate a universal approximation theorem for rotations and euclidean motions in Section \ref{sec:charge}. Second, we will consider stacked convolutional layers and assume each layer to act locally (as in convnets actually used in applications). However, the setting of Theorem \ref{sec:convnet_deep} will not involve pooling, since, as remarked above, pooling destroys the translation equivariance of the model.

In Section \ref{sec:convpool} we prove Theorem \ref{th:convpool},  relevant for convnets most commonly used in practice. Compared to the setting of Section \ref{sec:convnet_deep}, this computational model will be spatially bounded, will include pooling, and will not assume translation invariance of the approximated map.

\subsection{Finite abelian groups and single convolutional layers}\label{sec:convnet_one_layer}
We consider a group 
\begin{equation}\label{eq:abe}
\Gamma=\mathbb Z_{n_1}\times\cdots\times \mathbb Z_{n_\nu},
\end{equation}
where $\mathbb Z_n=\mathbb Z/(n\mathbb Z)$ is the cyclic group of order $n$. Note that the group $\Gamma$ is abelian and conversely, by the fundamental theorem of finite abelian groups, any such group can be represented in the form \eqref{eq:abe}.

We consider the ``input'' module $V=\mathbb R^{\Gamma}\otimes \mathbb R^{d_V}$ and the ``output'' module $U=\mathbb R^{\Gamma}\otimes \mathbb R^{d_U}$, with some finite dimensions $d_V, d_U$ and with the natural representation of $\Gamma$:
\begin{equation*}
R_\gamma (\mathbf e_{\theta}\otimes \mathbf v)=\mathbf e_{\theta+\gamma}\otimes \mathbf v,\quad \gamma,\theta\in \Gamma, \quad \mathbf v\in \mathbb R^{d_V}\text{ or }\mathbb R^{d_U}.
\end{equation*}

We will denote elements of $V,U$ by boldface characters $\mathbf \Phi$ and interpret them as $d_V$- or $d_U$-component \emph{signals} defined on the set $\Gamma$. For example, in the context of 2D image processing we have $\nu=2$ and the group $\Gamma=\mathbb Z_{n_1}\times\mathbb Z_{n_2}$ corresponds to a discretized rectangular image with periodic boundary conditions, where $n_1,n_2$ are the geometric sizes of the image while $d_V$ and $d_U$ are the numbers of input and output features, respectively (in particular, if the input is a usual RGB image, then $d_V=3$).  

Denote by $\Phi_{\theta k}$ the coefficients in the expansion of a vector $\mathbf \Phi$ from $V$ or $U$ over the standard product bases in these spaces:
\begin{equation}\label{eq:Phi}
\mathbf \Phi=\sum_{\theta\in \Gamma}\sum_{k=1}^{d_V\text{ or }d_U}\Phi_{\theta  k}\mathbf e_{\theta}\otimes \mathbf e_{k}.
\end{equation}

We describe now a complete equivariant ansatz for approximating $\Gamma$-equivariant maps $f:V\to U$. Thanks to decomposition \eqref{eq:decomp}, we may assume without loss that $d_U=1$. By \eqref{eq:Phi}, any map $f:V\to U$ is then specified by the coefficients $f(\mathbf \Phi)_{\theta}(\equiv f(\mathbf \Phi)_{\theta,1})\in \mathbb R$ as $\mathbf \Phi$ runs over $V$ and $\theta$ runs over $\Gamma$.

\begin{prop}\label{th:convnetzn} Any continuous $\Gamma$-equivariant map $f:V\to U$ can be approximated by $\Gamma$-equivariant maps $\widehat{f}:V\to U$ of the form 
\begin{equation}\label{eq:syminv}
\widehat{f}(\mathbf \Phi)_{\gamma}=
\sum_{n=1}^N c_n\sigma\Big(\sum_{\theta\in\Gamma}\sum_{k=1}^{d_{V}}w_{n\theta k}\Phi_{\gamma+\theta,k}+h_n\Big),
\end{equation}
where $\mathbf \Phi =\sum_{\gamma\in\Gamma}\sum_{k=1}^{d_V}\Phi_{\gamma k}\mathbf e_{\gamma}\otimes \mathbf e_{k}$, $N$ is a parameter, and $c_n,w_{n\gamma k}, h_n$ are some coefficients.
\end{prop}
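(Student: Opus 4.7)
My plan is to give a direct argument along the lines of Propositions \ref{th:syminv} and \ref{th:sym_equiv}, but exploiting the special structure of the problem: the representation of $\Gamma$ on $U$ is the regular representation of a finite abelian group, so knowing any one component of $f(\mathbf{\Phi})$ determines all the others by equivariance. This lets us invoke the ordinary (non-invariant) universal approximation theorem only once, on a single scalar function.

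First, I would verify by direct calculation that any map of the form \eqref{eq:syminv} is $\Gamma$-equivariant, which amounts to substituting $R_\beta \mathbf{\Phi}$ and reindexing the sum over $\theta$. Next, since $U=\mathbb R^{\Gamma}\otimes\mathbb R^{d_U}$ decomposes as a $\Gamma$-module into $d_U$ copies of $\mathbb R^\Gamma$, the decomposition \eqref{eq:decomp} lets me assume $d_U=1$ and approximate the output coordinates one at a time, each in the form \eqref{eq:syminv}. From now on write $F(\mathbf{\Phi})=f(\mathbf{\Phi})_0$, a continuous scalar function on $V$; by $\Gamma$-equivariance of $f$,
\begin{equation*}
f(\mathbf{\Phi})_\gamma = (R_\gamma^{-1}f(\mathbf{\Phi}))_0 = f(R_\gamma^{-1}\mathbf{\Phi})_0 = F(R_{-\gamma}\mathbf{\Phi}),
\end{equation*}
so $f$ is fully determined by $F$.

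Given a compact $K\subset V$ and $\epsilon>0$, I then form the symmetrized set $K_{\mathrm{sym}}=\bigcup_{\gamma\in\Gamma}R_\gamma K$, which is compact because $\Gamma$ is finite. By Theorem \ref{th:leshno} applied to $F$ on $K_{\mathrm{sym}}$, I can find an approximator
\begin{equation*}
\widetilde F(\mathbf{\Phi})=\sum_{n=1}^N c_n\sigma\Big(\sum_{\theta\in\Gamma}\sum_{k=1}^{d_V}w_{n\theta k}\Phi_{\theta k}+h_n\Big),\qquad |\widetilde F(\mathbf{\Phi})-F(\mathbf{\Phi})|\le\epsilon \text{ on } K_{\mathrm{sym}}.
\end{equation*}
Defining $\widehat f(\mathbf{\Phi})_\gamma:=\widetilde F(R_{-\gamma}\mathbf{\Phi})$ and using $(R_{-\gamma}\mathbf{\Phi})_{\theta k}=\Phi_{\gamma+\theta,k}$, I obtain exactly the ansatz \eqref{eq:syminv}. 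The error bound is immediate: for $\mathbf{\Phi}\in K$ and any $\gamma\in\Gamma$, the point $R_{-\gamma}\mathbf{\Phi}$ lies in $K_{\mathrm{sym}}$, hence $|\widehat f(\mathbf{\Phi})_\gamma - f(\mathbf{\Phi})_\gamma|=|\widetilde F(R_{-\gamma}\mathbf{\Phi})-F(R_{-\gamma}\mathbf{\Phi})|\le\epsilon$, uniformly in $\gamma$.

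There is essentially no obstacle in this argument; the only point worth being careful about is the translation of the shift convention — the $\Phi_{\gamma+\theta,k}$ in \eqref{eq:syminv} (as opposed to $\Phi_{\theta-\gamma,k}$) corresponds to the $R_{-\gamma}$ rather than $R_{\gamma}$ appearing in the equivariance rewriting of $f(\mathbf{\Phi})_\gamma$. The statement is indeed, as the paper notes, a finite-group specialization of Proposition \ref{th:sym_equiv}: the group-averaging integral becomes a finite sum of $|\Gamma|$ terms, and absorbing the shift into the weights $w_{n\theta k}$ recovers the convolutional form.
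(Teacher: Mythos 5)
Your proof is correct, but it takes a different route from the paper's. The paper deduces Proposition \ref{th:convnetzn} from the general symmetrization result (Proposition \ref{th:sym_equiv}): it takes the group-averaged ansatz $\sum_{\gamma'\in\Gamma}\sum_n R_{\gamma'}^{-1}\mathbf y_n\sigma(l_n(R_{\gamma'}\mathbf\Phi)+h_n)$, which for a finite group is a finite sum, and then shows by reindexing ($\gamma=\varkappa-\gamma'$, $\theta=\theta'-\varkappa$) that each term collapses into the convolutional form \eqref{eq:syminv}. You instead bypass the averaging entirely by exploiting the specific structure of the regular representation: since $(R_\gamma\mathbf\Psi)_\theta=\Psi_{\theta-\gamma}$, equivariance forces $f(\mathbf\Phi)_\gamma=F(R_{-\gamma}\mathbf\Phi)$ with $F=f(\cdot)_0$, so a single scalar shallow network $\widetilde F$ approximating $F$ on $K_{\mathrm{sym}}$, translated to each output site, already yields an exactly equivariant approximator in the form \eqref{eq:syminv}. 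Your route is more economical — one invocation of Theorem \ref{th:leshno} on a scalar target, no bookkeeping with the vectors $\mathbf y_n$ — and it makes transparent why the convolutional weight-sharing arises (one filter, translated). What the paper's route buys is uniformity: it exhibits the result as a literal specialization of the compact-group machinery of Section \ref{sec:compact}, which is the narrative thread of that part of the paper. The only points worth tightening in your write-up are cosmetic: the componentwise bound $|\widehat f(\mathbf\Phi)_\gamma-f(\mathbf\Phi)_\gamma|\le\epsilon$ gives $\|\widehat f(\mathbf\Phi)-f(\mathbf\Phi)\|\le|\Gamma|^{1/2}\epsilon$ in the norm of $U$, which is harmless since $|\Gamma|$ is fixed, and your translated construction $\widehat f(\mathbf\Phi)_\gamma=\widetilde F(R_{-\gamma}\mathbf\Phi)$ is automatically equivariant, so the separate verification of equivariance of the ansatz, while correct, is not strictly needed for your argument.
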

\begin{proof} We apply Proposition \ref{th:sym_equiv} with $l_n(\mathbf\Phi)=\sum_{\theta'\in\Gamma}\sum_{k=1}^{d_V}w'_{n\theta' k}\Phi_{\theta' k}$ and $\mathbf y_n=\sum_{\varkappa\in\Gamma}y_{n\varkappa}\mathbf e_{\varkappa}$, and obtain the ansatz 
\begin{equation*}\widehat{f}(\mathbf \Phi) = \sum_{\gamma'\in\Gamma}\sum_{n=1}^N \sum_{\varkappa\in\Gamma} y_{n\varkappa}\sigma\Big(\sum_{\theta'\in\Gamma}\sum_{k=1}^{d_{V}}w'_{n\theta' k}\Phi_{\theta'-\gamma',k}+h_n\Big)\mathbf e_{\varkappa-\gamma'}
 =  \sum_{\varkappa\in\Gamma}\sum_{n=1}^N  y_{n\varkappa}\mathbf  a_{\varkappa n}, 
\end{equation*}
where 
\begin{equation}\label{eq:akappan}
\mathbf a_{\varkappa n} = \sum_{\gamma'\in \Gamma}\sigma\Big(\sum_{\theta'\in\Gamma}\sum_{k=1}^{d_{V}}w'_{n\theta' k}\Phi_{\theta'-\gamma',k}+h_n\Big)\mathbf e_{\varkappa-\gamma'}.
\end{equation}
By linearity of the expression on the r.h.s. of \eqref{eq:syminv}, it suffices to check that each $\mathbf  a_{\varkappa n}$ can be written in the form
\begin{equation*}
\sum_{\gamma\in \Gamma}\sigma\Big(\sum_{\theta\in\Gamma}\sum_{k=1}^{d_{V}}w_{n\theta k}\Phi_{\theta+\gamma,k}+h_n\Big)\mathbf e_{\gamma}.
\end{equation*}
But this expression results if we make in \eqref{eq:akappan} the substitutions $\gamma=\varkappa-\gamma', \theta=\theta'-\varkappa$ and $w_{n\theta k}=w'_{n,\theta+\varkappa,k}.$ 
\end{proof}
The expression \eqref{eq:syminv} resembles the standard convolutional layer without pooling as described, e.g., in \cite{Goodfellow-et-al-2016}. Specifically, this expression can be viewed as a linear combination of $N$ scalar filters obtained as  compositions of linear convolutions with pointwise non-linear activations. An important difference with the standard convolutional layers is that the convolutions in \eqref{eq:syminv} are non-local, in the sense that the weights $w_{n\theta k}$ do not vanish at large $\theta$. Clearly, this non-locality is inevitable if approximation is to be performed with just a single convolutional layer.

We remark that it is possible to use Proposition \ref{th:equivar} to describe an alternative complete $\Gamma$-equivariant ansatz based on polynomial invariants and equivariants. However, this approach seems to be less efficient because it is relatively difficult to specify a small explicit set of generating polynomials for abelian groups (see, e.g. \cite{schmid1991finite} for a number of relevant results). Nevertheless, we will use polynomial invariants of the abelian group SO(2) in our construction of ``charge-conserving convnet'' in Section \ref{sec:charge}.   

\subsection{Continuum signals and deep convnets}\label{sec:convnet_deep}
In this section we extend Proposition \ref{th:convnetzn} in several ways. 

First, instead of the group $\mathbb Z_{n_1}\times\cdots\times \mathbb Z_{n_\nu}$ we  consider the group $\Gamma=\mathbb R^\nu$. Accordingly, we will consider infinite-dimensional 
$\mathbb R^\nu$--modules 
\begin{align*}V&=L^2(\mathbb R^\nu)\otimes \mathbb R^{d_V}\cong L^2(\mathbb R^\nu,\mathbb R^{d_V}),\\
U&=L^2(\mathbb R^\nu)\otimes \mathbb R^{d_U}\cong L^2(\mathbb R^\nu,\mathbb R^{d_U})
\end{align*} 
with some finite $d_V, d_U$. Here, $L^2(\mathbb R^\nu,\mathbb R^{d})$ is the Hilbert space of maps $\mathbf\Phi:\mathbb R^\nu\to \mathbb R^{d}$ with $\int_{\mathbb R^d}|\mathbf\Phi(\gamma)|^2d\gamma<\infty,$ equipped with the standard scalar product $\langle \mathbf\Phi,\mathbf\Psi\rangle = \int_{\mathbb R^d} \mathbf\Phi(\gamma)\cdot \mathbf\Psi(\gamma) d\gamma$, where $\mathbf\Phi(\gamma)\cdot \mathbf\Psi(\gamma)$ denotes the scalar product of $\mathbf\Phi(\gamma)$ and $\mathbf\Psi(\gamma)$ in $\mathbb R^{d}$. The group $\mathbb R^\nu$ is naturally represented on $V,U$ by 
\begin{equation}\label{eq:rgammaphi}
R_\gamma \mathbf\Phi(\theta)=\mathbf\Phi(\theta-\gamma),\quad \mathbf\Phi\in V,\quad\gamma,\theta\in \mathbb R^\nu.
\end{equation}
Compared to the setting of the previous subsection, we  interpret the modules $V,U$ as carrying now ``infinitely extended'' and ``infinitely detailed'' $d_V$- or $d_U$-component signals. We will be interested in approximating arbitrary $\mathbb R^\nu$--equivariant continuous maps $f:V\to U$.

The second extension is that we will perform this approximation using stacked convolutional layers with local action. Our approximation will be a finite computational model, and to define it we first need to apply a discretization and a spatial cutoff to vectors from $V$ and $U$. 

Let us first describe the discretization. For any \emph{grid spacing} $\lambda>0$, let $V_\lambda$ be the subspace in $V$ formed by signals $\mathbf\Phi:\mathbb R^\nu\to\mathbb R^{d_V}$ constant on all cubes $$Q^{(\lambda)}_{\mathbf k}=\bigtimes_{s=1}^\nu \Big[\big(k_s-\tfrac{1}{2}\big)\lambda,\big(k_s-\tfrac{1}{2}\big)\lambda\Big],$$ where $\mathbf k = (k_1,\ldots,k_\nu)\in \mathbb Z^\nu.$  Let $P_\lambda$ be the orthogonal projector onto $V_\lambda$ in $V$: 
\begin{equation}\label{eq:ptau}
P_\lambda \mathbf\Phi(\gamma)=\frac{1}{\lambda^\nu}\int_{Q^{(\lambda)}_{\mathbf k}} \mathbf\Phi(\theta)d \theta,\quad \text{where }Q^{(\lambda)}_{\mathbf k}\ni \mathbf \gamma.
\end{equation}
A function $\mathbf\Phi\in V_\lambda$ can naturally be viewed as a function on the lattice $(\lambda\mathbb Z)^d$, so that we can also view $V_\lambda$ as a Hilbert space $$V_\lambda\cong L^2((\lambda\mathbb Z)^\nu,\mathbb R^{d_V}),$$ with the scalar product $\langle \mathbf\Phi,\mathbf\Psi\rangle=\lambda^\nu\sum_{\gamma\in(\lambda\mathbb Z)^\nu}\mathbf\Phi(\gamma)\cdot\mathbf\Psi(\gamma)$. We define the subspaces $U_\lambda\subset U$ similarly to the subspaces $V_\lambda\subset V$.  

Next, we define the spatial cutoff. For an integer $L\ge 0$ we denote by $Z_L$ the size-$2L$ cubic subset of the grid $\mathbb Z^\nu$:
\begin{equation}\label{eq:zl}
Z_L=\{\mathbf k\in\mathbb Z^\nu|\|\mathbf k\|_\infty\le L\},\end{equation}
where $\mathbf k=(k_1,\ldots,k_\nu)\in \mathbb Z^\nu$ and $\|\mathbf k\|_\infty=\max_{n=1,\ldots,\nu}|k_n|$. Let $\lfloor\cdot\rfloor$ denote the standard floor function. For any $\Lambda\ge 0$ (referred to as the \emph{spatial range} or \emph{cutoff}) we define the subspace $V_{\lambda,\Lambda}\subset V_\lambda$ by
\begin{align}\label{eq:convvtaul}
V_{\lambda,\Lambda}&=\{\mathbf\Phi:(\lambda \mathbb Z)^\nu\to\mathbb R^{d_V}|\mathbf\Phi(\lambda\mathbf k)=0\text{ if } \mathbf k\notin Z_{\lfloor\Lambda/\lambda\rfloor}\}\nonumber\\
&\cong\{\mathbf\Phi:\lambda Z_{\lfloor\Lambda/\lambda\rfloor}\to\mathbb R^{d_V}\}\nonumber\\
&\cong L^2(\lambda Z_{\lfloor\Lambda/\lambda\rfloor},\mathbb R^{d_V}).
\end{align}
Clearly, $\dim V_{\lambda,\Lambda}=(2\lfloor\Lambda/\lambda\rfloor+1)^\nu d_V.$ The subspaces $U_{\lambda,\Lambda}\subset U_\lambda$ are defined in a similar fashion. We will denote by $P_{\lambda,\Lambda}$ the linear operators orthogonally projecting $V$ to $V_{\lambda, \Lambda}$ or $U$ to $U_{\lambda,\Lambda}$.

In the following, we will assume that the convolutional layers have a finite \emph{receptive field} $Z_{L_\mathrm{rf}}$ -- a set of the form \eqref{eq:zl} with some fixed $L_\mathrm{rf}>0$. 

We can now describe our model of stacked convnets that will be used to approximate maps $f:V\to U$ (see Fig.\ref{fig:basic}). Namely, our approximation will be a composition of the form
\begin{equation}\label{eq:wfp}
\widehat f:V\stackrel{P_{\lambda,\Lambda+(T-1)\lambda L_{\mathrm{rf}}}}{\longrightarrow}V_{\lambda,\Lambda+(T-1)\lambda L_{\mathrm{rf}}}(\equiv W_1)
\stackrel{\widehat f_1}{\to}W_2\stackrel{\widehat f_2}{\to}\ldots\stackrel{\widehat f_T}{\to} W_{T+1}(\equiv U_{\lambda,\Lambda}).
\end{equation}
Here, the first step $P_{\lambda,\Lambda+(T-1)\lambda L_{\mathrm{rf}}}$ is an orthogonal finite-dimensional projection implementing the initial discretization and spatial cutoff of the signal. The maps $\widehat f_t$ are convolutional layers connecting intermediate spaces 
\begin{equation}\label{eq:wt}
W_t=
\begin{cases}
\{\mathbf\Phi:\lambda Z_{\lfloor\Lambda/\lambda\rfloor+(T-t)L_\mathrm{rf}}\to\mathbb R^{d_t}\},& t\le T\\
\{\mathbf\Phi:\lambda Z_{\lfloor\Lambda/\lambda\rfloor}\to\mathbb R^{d_t}\},& t= T+1
\end{cases}
\end{equation}
with some \emph{feature dimensions} $d_t$ such that $d_1=d_V$ and $d_{T+1}=d_U$.  The first intermediate space $W_1$ is identified with the space $V_{\lambda,\Lambda+(T-1)\lambda L_{\mathrm{rf}}}$ (the image of the projector $P_{\lambda,\Lambda+(T-1)\lambda L_{\mathrm{rf}}}$ applied to $V$), while the end space $W_{T+1}$ is identified with $U_{\lambda,\Lambda}$ (the respective discretization and cutoff of $U$).

The convolutional layers are defined as follows. Let $(\Phi_{\gamma n})_{\stackrel{\gamma\in Z_{\lfloor\Lambda/\lambda\rfloor+(T-t) L_{\mathrm{rf}}}}{n=1,\ldots,d_t}}$ be the coefficients in the expansion of $\mathbf\Phi\in W_t$ over the standard basis in $W_t$, as in \eqref{eq:Phi}. Then, for $t<T$ we define $\widehat f_t$ using the conventional ``linear convolution followed by nonlinear activation'' formula,
\begin{equation}\label{eq:convnonlin}
\widehat{f}_t(\mathbf \Phi)_{\gamma n}=
\sigma\Big(\sum_{\theta\in Z_{L_\mathrm{rf}}}\sum_{k=1}^{d_{t}}w^{(t)}_{n\theta k}\Phi_{\gamma+\theta,k}+h^{(t)}_n\Big), \quad \gamma\in Z_{\lfloor\Lambda/\lambda\rfloor+(T-t-1) L_{\mathrm{rf}}}, n=1,\ldots, d_{t+1},
\end{equation}
while in the last layer ($t=T$) we drop nonlinearities and only form a linear combination of values at the same point of the grid:
\begin{equation}\label{eq:convlin}
\widehat{f}_T(\mathbf \Phi)_{\gamma n}=
\sum_{k=1}^{d_{T}}w^{(T)}_{n k}\Phi_{\gamma k}+h^{(T)}_n, \quad \gamma\in Z_{\lfloor\Lambda/\lambda\rfloor}, n=1,\ldots, d_U.
\end{equation}
Note that the grid size ${\lfloor\Lambda/\lambda\rfloor+(T-t)L_{\mathrm{rf}}}$ associated with the space $W_t$ is consistent with the rule \eqref{eq:convnonlin} which evaluates the new signal $\widehat{f}(\mathbf \Phi)$ at each node of the grid as a function of the signal $\mathbf \Phi$ in the $L_\mathrm{rf}$-neighborhood of that node (so that the domain $\lambda Z_{\lfloor\Lambda/\lambda\rfloor+(T-t) L_{\mathrm{rf}}}$ ``shrinks'' slightly as $t$ grows).    

Note that we can interpret the map $\widehat f$ as a map between $V$ and $U$, since $U_{\lambda, \Lambda}\subset U$. 

\begin{figure}
\centering
\includegraphics[width=0.7\linewidth,clip,trim=40mm 8mm 10mm 12mm]{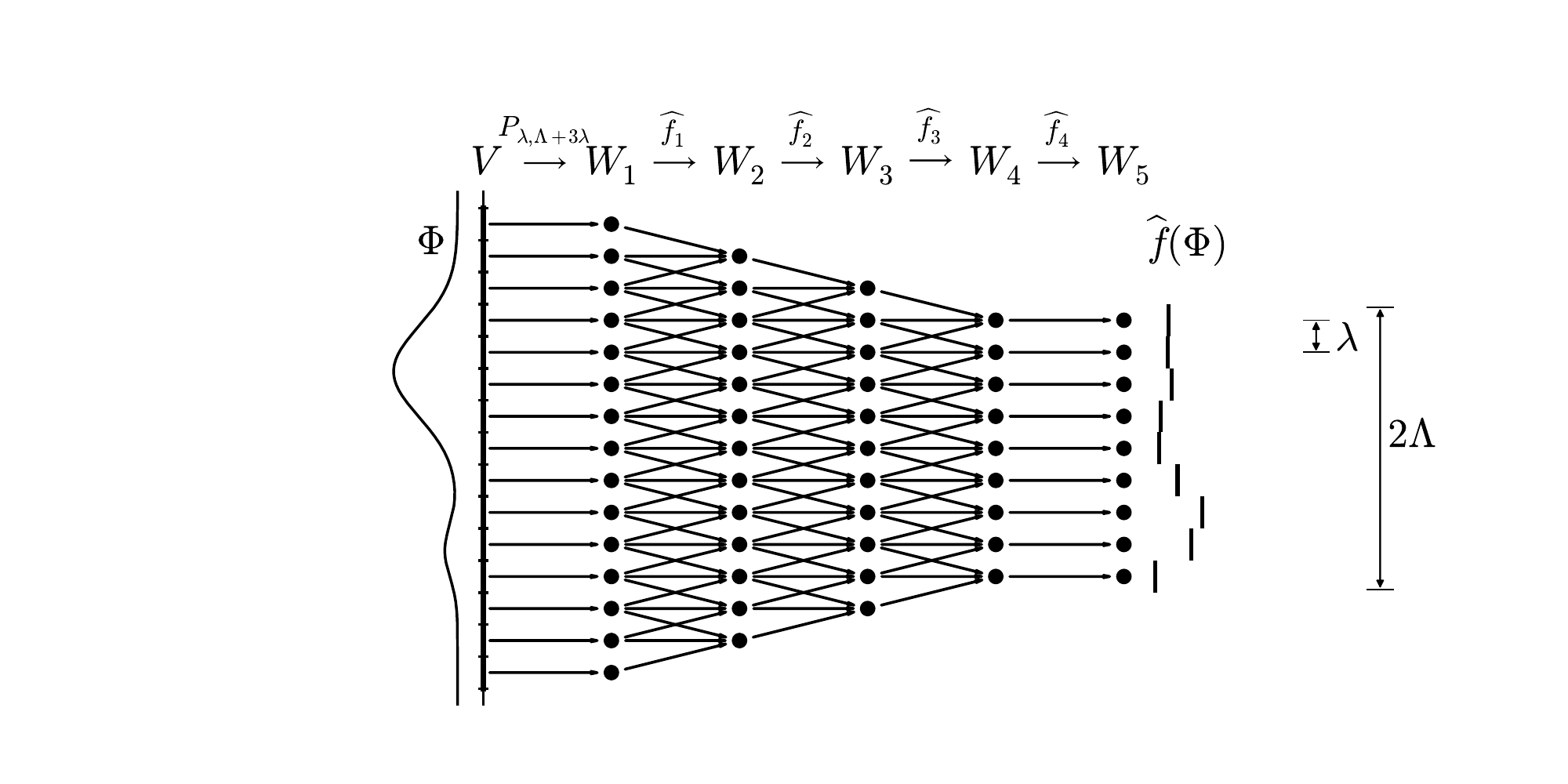}
\caption{A one-dimensional ($\nu=1$) basic convnet with the receptive field parameter $L_{\mathrm{rf}}=1$. The dots show feature spaces $\mathbb R^{d_t}$ associated with particular points of the grid $\lambda\mathbb Z$.}
\centering
\label{fig:basic}
\end{figure}
 
\begin{defin}\label{def:basicconvnetmodel} A \textbf{basic convnet} is a map $\widehat f:V\to U$ defined by \eqref{eq:wfp}, \eqref{eq:convnonlin}, \eqref{eq:convlin}, and characterized by parameters $\lambda, \Lambda, L_{\mathrm{rf}}, T, d_1,\ldots,d_{T+1}$ and coefficients $w_{n\theta k}^{(t)}$ and $h_n^{(t)}$.
\end{defin}

Note that, defined in this way, a basic convnet is a finite computational model in the following sense: while being a map between infinite-dimensional spaces $V$ and $U$, all the steps in $\widehat f$ except the initial discretization and cutoff involve only finitely many arithmetic operations and evaluations of the activation function. 

We aim to prove an analog of Theorem \ref{th:leshno}, stating that any continuous $\mathbb R^\nu$-equivariant map $f:V\to U$ can be approximated by basic convnets in the topology of uniform convergence on compact sets. However, there are some important caveats due to the fact that the space $V$ is now infinite-dimensional. 

First, in contrast to the case of finite-dimensional spaces, balls in $L^2(\mathbb R^\nu,\mathbb R^{d_V})$ are not compact. The well-known general criterion states that in a complete metric space, and in particular in $V=L^2(\mathbb R^\nu,\mathbb R^{d_V})$, a set is compact iff it is closed and \emph{totally bounded}, i.e. for any $\epsilon>0$ can be covered by finitely many $\epsilon$--balls. 

The second point (related to the first) is that a finite-dimensional space is \emph{hemicompact}, i.e., there is a sequence of compact sets such that any other compact set is contained in one of them. As a result, the space of maps $f:\mathbb R^{n}\to\mathbb R^{m}$ is \emph{first-countable} with respect to the topology of compact convergence, i.e. each point has a countable base of neighborhoods, and a point $f$ is a limit point of a set $S$ if and only if there is a sequence of points in $S$ converging to $f$. In a general topological space, however, a limit point of a set $S$ may not be representable as the limit of a sequence of points from $S$. In particular, the space $L^2(\mathbb R^\nu,\mathbb R^{d_V})$ is not hemicompact and the space of maps $f:L^2(\mathbb R^\nu,\mathbb R^{d_V})\to L^2(\mathbb R^\nu,\mathbb R^{d_U})$ is not first countable with respect to the topology of compact convergence, so that, in particular, we must distinguish between the notions of limit points of the set of convnets and the limits of sequences of convnets. We refer the reader, e.g., to the book \cite{munkres2000topology} for a general discussion of this and other topological questions and in particular to \S46 for a discussion of compact convergence. 

When defining a limiting map, we would like to require the convnets to increase their detalization $\frac{1}{\lambda}$ and range $\Lambda$. At the same time, we will regard the receptive field and its range parameter $L_\mathrm{rf}$ as arbitrary but fixed (the current common practice in applications is to use small values such as $L_\mathrm{rf}=1$ regardless of the size of the network; see, e.g., the architecture of residual networks \cite{he2016deep} providing state-of-the-art performance on image recognition tasks).

With all these considerations in mind, we introduce the following definition of a limit point of convnets.

\begin{defin}\label{def:basicconvnet}
With $V=L^2(\mathbb R^\nu,\mathbb R^{d_V})$ and $U=L^2(\mathbb R^\nu,\mathbb R^{d_U})$, we say that a map $f:V\to U$ is a \textbf{limit point of basic convnets} if for any $L_\mathrm{rf}$, any compact set $K\subset V$, and any $\epsilon>0, \lambda_0>0$ and $\Lambda_0>0$ there exists a basic convnet $\widehat f$ with the receptive field parameter $L_{\mathrm{rf}}$, spacing $\lambda \le \lambda_0$ and range $\Lambda\ge \Lambda_0$ such that $\sup_{\mathbf \Phi\in K}\|\widehat f(\mathbf \Phi)-f(\mathbf \Phi)\|<\epsilon$.  
\end{defin}
We can state now the main result of this section.
\begin{theorem}\label{th:convmain}
A map $f:V\to U$ is a limit point of basic convnets if and only if $f$ is $\mathbb R^\nu$--equivariant and continuous in the norm topology. 
\end{theorem}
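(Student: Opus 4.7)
Every basic convnet $\widehat f$ is a composition of continuous operations (the orthogonal projection $P_{\lambda,\Lambda'}$, the finite-dimensional layer maps built from $\sigma$ and linear combinations, and the inclusion $U_{\lambda,\Lambda}\hookrightarrow U$); hence $\widehat f$ is continuous, and any map obtained as a uniform limit on compact sets of such $\widehat f$ is continuous. For equivariance, fix $\gamma\in\mathbb R^\nu$ and $\mathbf\Phi\in V$, let $K=\{R_{s\gamma}\mathbf\Phi:s\in[0,1]\}$, and pick an approximating $\widehat f$ with small $\lambda$ and large $\Lambda$. Let $\lambda\mathbf k_\gamma$ be the lattice point nearest $\gamma$; each convolutional layer and the initial projection commute exactly with the lattice shift $R_{\lambda\mathbf k_\gamma}$ so long as the computational region comfortably contains both $\mathbf\Phi$ and $R_\gamma\mathbf\Phi$, which is ensured by taking $\Lambda$ large. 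Thus $\widehat f(R_{\lambda\mathbf k_\gamma}\mathbf\Phi)=R_{\lambda\mathbf k_\gamma}\widehat f(\mathbf\Phi)$; sending $\lambda\to 0$ and using the strong continuity of $R$ on $L^2$ (so $R_{\lambda\mathbf k_\gamma}\mathbf\Psi\to R_\gamma\mathbf\Psi$ in norm) gives $f(R_\gamma\mathbf\Phi)=R_\gamma f(\mathbf\Phi)$.

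\textbf{Sufficiency (continuous and equivariant $\Rightarrow$ limit point).} Suppose $f$ is continuous and $\mathbb R^\nu$-equivariant; fix $K,\epsilon,\lambda_0,\Lambda_0,L_{\mathrm{rf}}$. I proceed in three stages.

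\emph{Stage 1 (discretize).} Since $K$ and $f(K)$ are compact, hence totally bounded, in $L^2$, the projections satisfy $P_\lambda\mathbf\Phi\to\mathbf\Phi$ and $P_{\lambda,\Lambda'}\mathbf\Phi\to P_\lambda\mathbf\Phi$ uniformly on these sets as $\lambda\to 0$ and $\Lambda'\to\infty$. Combined with uniform continuity of $f$ on the compact $K$, this lets us choose $\lambda\le\lambda_0$ and cutoffs $\Lambda_{\mathrm{in}},\Lambda_{\mathrm{out}}\ge\Lambda_0$ such that $\widetilde f:=P_{\lambda,\Lambda_{\mathrm{out}}}\circ f\circ P_{\lambda,\Lambda_{\mathrm{in}}}$ satisfies $\sup_K\|f-\widetilde f\|<\epsilon/2$.

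\emph{Stage 2 (sliding-window form).} The equivariance identity $f(R_\gamma\mathbf\Phi)(\theta)=f(\mathbf\Phi)(\theta-\gamma)$, together with the lattice equivariance of $P_\lambda$, shows that on each output grid point $\gamma$ in the cutoff, $\widetilde f(\mathbf\Phi)(\gamma)=F(\text{shifted, windowed }\mathbf\Phi\text{ about }\gamma)$ for a single continuous function $F$, essentially $\mathbf\Psi\mapsto f(\mathbf\Psi)(0)$ restricted to lattice signals supported in a fixed window of grid-radius $R=O((\Lambda_{\mathrm{in}}+\Lambda_{\mathrm{out}})/\lambda)$. Thus $\widetilde f$ is the sliding-window application of the common continuous map $F$.

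\emph{Stage 3 (local convnet implementation).} Choose $T$ with $(T-1)L_{\mathrm{rf}}\ge R$, so that after $T-1$ local steps the effective receptive field covers a full window. The first $T-2$ layers act as information-gathering layers: with sufficiently large feature dimensions $d_t$ and by Theorem \ref{th:leshno} (applied to the identity on a bounded interval, which is approximable by shallow $\sigma$-networks because $\sigma$ is not a polynomial), arrange that the feature vector at each grid point after $t$ layers holds an $\epsilon$-accurate copy of the input values at all points of $Z_{tL_{\mathrm{rf}}}$ around it. After the gathering stage each output grid point has access to its entire $R$-window; the last two layers $\widehat f_{T-1}$ (nonlinear) and $\widehat f_T$ (linear) together form a shallow feed-forward network, to which Theorem \ref{th:leshno} applies once more to approximate the continuous $F$ within $\epsilon/2$ on the relevant compact set. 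Composing all approximations produces a basic convnet within $\epsilon$ of $f$ on $K$.

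\textbf{Main obstacle.} The real work is in stage 3: faithfully propagating gathered information through many convolutional layers with a nonlinearity interposed at every step, while simultaneously enlarging the effective receptive field. This is handled by the identity-approximation trick -- any bounded linear map can be $\epsilon$-approximated by a linear combination of terms $\sigma(wx+h)$ with small $w$, possible precisely because $\sigma$ is not polynomial -- paid for by a correspondingly inflated feature dimension $d_t$. One must also check that boundary artifacts induced by the input/output cutoffs remain controlled; this is ensured by choosing $\Lambda_{\mathrm{in}},\Lambda_{\mathrm{out}}$ sufficiently large \emph{before} fixing $\lambda$ and $T$, and by the guarantee in Stage 1 that the finite-dimensional proxy $\widetilde f$ already agrees with $f$ on $K$ up to $\epsilon/2$.
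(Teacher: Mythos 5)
Your proposal is correct and follows essentially the same route as the paper: necessity via the exact lattice-shift equivariance of convnets plus strong continuity of $R_\gamma$, and sufficiency via discretization/cutoff, the translation-equivariant "sliding-window" decomposition of $f$ into shifted copies of a single localized continuous map, and a local implementation whose first $T-2$ layers approximate copy operations (possible since $\sigma$ is not a polynomial) and whose last two layers invoke Theorem \ref{th:leshno}. The only detail to tidy is in the necessity step: the compact set on which you request approximation must contain $R_{\lambda\mathbf k_\gamma}\mathbf\Phi$ for the nearest lattice points $\lambda\mathbf k_\gamma$ (which need not lie on the segment $\{s\gamma\}$), so take $K=\{R_\theta\mathbf\Phi:\theta\in\mathcal N\}$ for a compact neighborhood $\mathcal N$ of that segment, and note that the exact shift-commutation of the convnet holds only after projecting the output onto a fixed bounded window contained in the (shifted and unshifted) output domains.
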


Before giving the proof of the theorem, we recall the useful notion of \emph{strong convergence} of linear operators on Hilbert spaces. Namely, if $A_n$ is a sequence of bounded linear operators on a Hilbert space and $A$ is another such operator, then we say that the sequence $A_n$ converges strongly to $A$ if $A_n\mathbf \Phi$ converges  to $A\mathbf \Phi$ for any vector $\mathbf \Phi$ from this Hilbert space. More generally, strong convergence can be defined, by the same reduction, for any family $\{A_\alpha\}$ of linear operators once the convergence of the family of vectors $\{A_\alpha \mathbf \Phi\}$ is specified.  

An example of a strongly convergent family is the family of discretizing projectors $P_\lambda$ defined in \eqref{eq:ptau}. These projectors converge strongly to the identity as the grid spacing tends to 0: $P_{\lambda}\mathbf \Phi\stackrel{\lambda\to 0}{\longrightarrow}\mathbf \Phi.$ Another example is the family of projectors $P_{\lambda,\Lambda}$ projecting $V$ onto the subspace $V_{\lambda,\Lambda}$ of discretized and cut-off signals defined in \eqref{eq:convvtaul}. It is easy to see that $P_{\lambda,\Lambda}$ converge strongly to the identity as the spacing tends to 0 and the cutoff is lifted, i.e. as $\lambda\to 0$ and $\Lambda\to\infty$. Finally, our representations $R_\gamma$  defined in \eqref{eq:rgammaphi} are strongly continuous in the sense that $R_{\gamma'}$ converges strongly to $R_\gamma$ as $\gamma'\to\gamma$.  

A useful standard tool in proving strong convergence is the \emph{continuity argument}: if the family $\{A_\alpha\}$ is uniformly bounded, then the convergence $A_\alpha\mathbf \Phi\to A\mathbf \Phi$ holds for {all} vectors $\mathbf \Phi$ from the Hilbert space once it holds for a {dense subset} of vectors. This follows by approximating any $\mathbf\Phi$ with $\mathbf\Psi$'s from the dense subset and applying the inequality $\|A_\alpha\mathbf \Phi- A\mathbf \Phi\|\le \|A_\alpha\mathbf \Psi- A\mathbf \Psi\|+(\|A_\alpha\|+\|A\|)\|\mathbf \Phi-\mathbf \Psi\|$. In the sequel, we will consider strong convergence only in the settings where $A_\alpha$ are orthogonal projectors or norm-preserving operators, so the continuity argument will be applicable. 

\begin{proof}[Proof of Theorem \ref{th:convmain}.]\mbox{}

\paragraph*{Necessity}\emph{(a limit point of basic convnets is $\mathbb R^\nu$--equivariant and continuous)}.

The continuity of $f$ follows by a standard argument from the uniform convergence on compact sets and the continuity of convnets (see Theorem 46.5 in \cite{munkres2000topology}). 

Let us prove the $\mathbb R^\nu$--equivariance of $f$, i.e.  
\begin{equation}\label{eq:frgp}
f(R_\gamma \mathbf\Phi)=R_\gamma f(\mathbf\Phi).
\end{equation}
Let $D_{M} =[-M,M]^\nu\subset\mathbb R^\nu$ with some $M>0$, and $P_{D_M}$ be the orthogonal projector in $U$ onto the subspace of signals supported on the set $D_{M}$. Then $P_{D_M}$ converges strongly to the identity as $M\to +\infty$. Since $R_\gamma$ is a bounded linear operator, \eqref{eq:frgp} will follow if we prove that for any $M$
\begin{equation}\label{eq:convpdfr}
P_{D_M}f(R_\gamma \mathbf\Phi)=R_\gamma P_{D_M}f(\mathbf\Phi).
\end{equation}
Let $\epsilon>0$. Let $\gamma_\lambda\in (\lambda\mathbb Z)^\nu$ be the nearest point to $\gamma\in \mathbb R^\nu$ on the grid $(\lambda\mathbb Z)^\nu$. Then, since $R_{\gamma_\lambda}$ converges strongly to $R_{\gamma}$ as $\lambda\to 0$, there exist $\lambda_0$ such that for any $\lambda<\lambda_0$ 
\begin{equation}\label{eq:rgtpd}
\|R_{\gamma_\lambda} P_{D_M} f(\mathbf\Phi)-R_{\gamma} P_{D_M} f(\mathbf\Phi)\|\le\epsilon,
\end{equation}
and
\begin{equation}\label{eq:frgmp}
\|f(R_{\gamma}\mathbf\Phi)-f(R_{\gamma_\lambda}\mathbf\Phi)\|\le \epsilon,
\end{equation}
where we have also used the already proven continuity of $f$.

Observe that the discretization/cutoff projectors $P_{\lambda, M}$ converge strongly to $P_{D_M}$ as $\lambda\to 0$, hence we can ensure that for any $\lambda<\lambda_0$ we also have
\begin{align}\label{eq:convpdfrgf}
\begin{split}
\|P_{D_M}f(R_\gamma \mathbf\Phi)-P_{\lambda,M}f(R_{\gamma}
\mathbf\Phi)\|\le\epsilon,\\
\|P_{\lambda,M} f(\mathbf\Phi)-P_{D_\Lambda} f(\mathbf\Phi)\|\le\epsilon.
\end{split}
\end{align}

Next, observe that basic convnets are partially translationally equivariant by our definition, in the sense that if the cutoff parameter $\Lambda$ of the convnet is sufficiently large then
\begin{equation}\label{eq:conveqpt}
P_{\lambda,M}\widehat f(R_{\gamma_\lambda} \mathbf\Phi)=R_{\gamma_\lambda} P_{\lambda,M}\widehat f(\mathbf\Phi).
\end{equation}
Indeed, this identity holds as long as both sets $\lambda Z_{\lfloor M/\lambda\rfloor}$ and $\lambda Z_{\lfloor M/\lambda\rfloor}-\gamma_\lambda$ are subsets of $\lambda Z_{\lfloor\Lambda/\lambda\rfloor}$ (the domain where convnet's output is defined, see \eqref{eq:wt}). This condition is satisfied if we require that $\Lambda>\Lambda_0$ with $\Lambda_0=M+\lambda(1+\|\gamma\|_\infty)$.

Now, take the compact set $K=\{R_\theta\mathbf \Phi| \theta\in \mathcal N\},$ where $\mathcal N\subset \mathbb R^\nu$ is some compact set including 0 and all points $\gamma_\lambda$ for $\lambda<\lambda_0$. Then, by our definition of a limit point of basic convnets, there is a convnet $\widehat f$ with $\lambda<\lambda_0$ and $L>L_0$ such that for all $\theta\in\mathcal N$ (and in particular for $\theta=0$ or $\theta=\gamma_\lambda$)
\begin{equation}\label{eq:convfrtm}
\|f(R_\theta\mathbf\Phi)-\widehat f(R_\theta\mathbf\Phi)\|<\epsilon.
\end{equation}
We can now write a bound for the difference of the two sides of \eqref{eq:convpdfr}: 
\begin{align*}
\|P_{D_M}f(R_\gamma \mathbf\Phi)- & R_\gamma P_{D_M}f(\mathbf\Phi)\| \\
{}\le{} & \|P_{D_M}f(R_\gamma \mathbf\Phi)-P_{\lambda,M}f(R_{\gamma}
\mathbf\Phi)\|+\|P_{\lambda,M}f(R_{\gamma}
\mathbf\Phi)-P_{\lambda,M}f(R_{\gamma_\lambda}
\mathbf\Phi)\| \\
&+ \|P_{\lambda,M}f(R_{\gamma_\lambda}
\mathbf\Phi)-P_{\lambda,M}\widehat f(R_{\gamma_\lambda}
\mathbf\Phi)\|+\|P_{\lambda,M}\widehat f(R_{\gamma_\lambda} \mathbf\Phi)-R_{\gamma_\lambda} P_{\lambda,M}\widehat f(\mathbf\Phi)\|\\
&+ \|R_{\gamma_\lambda} P_{\lambda,M}\widehat f(\mathbf\Phi)-R_{\gamma_\lambda} P_{\lambda,M} f(\mathbf\Phi)\|+\|R_{\gamma_\lambda} P_{\lambda,M} f(\mathbf\Phi)-R_{\gamma_\lambda} P_{D_M} f(\mathbf\Phi)\|\\
&+\|R_{\gamma_\lambda} P_{D_M} f(\mathbf\Phi)-R_{\gamma} P_{D_M} f(\mathbf\Phi)\|\\
\le{} & \|P_{D_M}f(R_\gamma \mathbf\Phi)-P_{\lambda,M}f(R_{\gamma}
\mathbf\Phi)\|+\|f(R_{\gamma}
\mathbf\Phi)-f(R_{\gamma_\lambda}
\mathbf\Phi)\| \\
&+ \|f(R_{\gamma_\lambda}
\mathbf\Phi)-\widehat f(R_{\gamma_\lambda}
\mathbf\Phi)\|+ \|\widehat f(\mathbf\Phi)- f(\mathbf\Phi)\|\\
&+\|P_{\lambda,M} f(\mathbf\Phi)-P_{D_\Lambda} f(\mathbf\Phi)\|+\|R_{\gamma_\lambda} P_{D_M} f(\mathbf\Phi)-R_{\gamma} P_{D_M} f(\mathbf\Phi)\|\\
\le{}& 6\epsilon,
\end{align*}
Here in the first step we split the difference into several parts, in the second step we used the identity \eqref{eq:conveqpt} and the fact that $P_{\lambda,M}, R_{\gamma_\lambda}$ are linear operators with the operator norm 1, and in the third step we applied the inequalities \eqref{eq:rgtpd}--\eqref{eq:convpdfrgf} and \eqref{eq:convfrtm}. Since $\epsilon$ was arbitrary, we have proved \eqref{eq:convpdfr}.

\paragraph*{Sufficiency}
\emph{(an $\mathbb R^\nu$--equivariant and continuous map is a limit point of basic convnets)}. We start by proving a key lemma on the approximation capability of basic convnets in the special case when they have the degenerate output range, $\Lambda=0$. In this case, by \eqref{eq:wfp}, the output space $W_T=U_{\lambda,0}\cong \mathbb R^{d_U},$ and 
the first auxiliary space $W_1=V_{\lambda, (T-1)\lambda L_{\mathrm{rf}}}\subset V$.    
\begin{lemma}\label{lm:conv}
Let $\lambda,T$ be fixed and $\Lambda=0$. Then any continuous map $f:V_{\lambda,(T-1)\lambda L_{\mathrm{rf}}}\to U_{\lambda,0}$ can be approximated by basic convnets having spacing $\lambda$, depth $T$, and range $\Lambda=0$.
\end{lemma}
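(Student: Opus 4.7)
My plan is to reduce the lemma to Theorem \ref{th:leshno} by treating the convnet as an ordinary finite-dimensional construction. With $\Lambda = 0$, both $W_1 = V_{\lambda,(T-1)\lambda L_{\mathrm{rf}}}$ (of dimension $(2(T-1)L_{\mathrm{rf}}+1)^\nu d_V$) and $U_{\lambda,0}\cong\mathbb R^{d_U}$ are finite-dimensional. The case $T=1$ produces only linear maps, so one must take $T\ge 2$, which I do from now on. By Theorem \ref{th:leshno}, for any compact $K\subset W_1$ and $\epsilon>0$ there is a shallow network
\[
  g(\mathbf\Phi) \;=\; \sum_{n=1}^{N}\mathbf y_n\,\sigma\bigl(\ell_n(\mathbf\Phi)+h_n\bigr)
\]
with linear functionals $\ell_n$ and $\sup_{\mathbf\Phi\in K}\|f(\mathbf\Phi)-g(\mathbf\Phi)\|<\epsilon$. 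It therefore suffices to approximate any such $g$ uniformly on $K$ by a depth-$T$ basic convnet with $\Lambda=0$.

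For $T=2$ this is direct: the receptive field $Z_{L_{\mathrm{rf}}}$ of layer $1$ centred at $\gamma=0$ equals the whole input grid $Z_{L_{\mathrm{rf}}}=Z_{(T-1)L_{\mathrm{rf}}}$. Taking $d_2=N$ and letting feature $n$ of layer $1$ at $\gamma=0$ equal $\sigma(\ell_n(\mathbf\Phi)+h_n)$, the final linear layer $\widehat f_2$ reproduces $g(\mathbf\Phi)$ exactly. For $T\ge 3$ the same observation handles the last two layers: at its single output point $\gamma=0$ the layer $\widehat f_{T-1}$ computes $\sigma$ of an arbitrary linear combination of its input over $Z_{L_{\mathrm{rf}}}$, and $\widehat f_T$ provides the outer linear combination. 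What remains is to build the first $T-2$ layers so that the block of features $\widehat\Phi^{(T-1)}_{\theta j}$ over $(\theta,j)\in Z_{L_{\mathrm{rf}}}\times\{1,\dots,d_{T-1}\}$ feeding $\widehat f_{T-1}$ is rich enough that the pre-$\sigma$ linear combination inside $\widehat f_{T-1}$ realises every $\ell_n(\mathbf\Phi)$ to small error on $K$.

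The key device is to use Theorem \ref{th:leshno} inside the convnet, this time with \emph{linear} targets rather than $f$ itself. For any linear functional $L$ on a finite-dimensional space and any compact set of arguments, $L(\mathbf x)\approx \sum_k c_k\,\sigma(\alpha_k L(\mathbf x)+\beta_k)$. Applied at an intermediate layer, this says that if a block of $\sigma$-features of the form $\sigma(\alpha_k L(\mathbf\Phi)+\beta_k)$ has been prepared at the previous layer, the pre-$\sigma$ combination of the current layer can recover $L(\mathbf\Phi)$ to arbitrary accuracy. Iterating this identity-recovery trick along the cascade of layers, and enlarging the feature dimensions $d_1,\dots,d_{T-1}$ to create enough carriers, I transport every scalar coordinate of $\mathbf\Phi$ (equivalently, every prescribed linear functional) from the outer grid points of $W_1$ into the central $Z_{L_{\mathrm{rf}}}$ block of $W_{T-1}$. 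Plugging this into the two-layer implementation from the previous paragraph then realises $g$, and hence approximates $f$, to within the required tolerance.

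The main obstacle is error bookkeeping across the composition of $T-1$ nonlinear layers: each identity-recovery step introduces its own approximation error, which is then fed through a further $\sigma$. Since $T$ is fixed, all intermediate images are continuous images of the compact set $K$ and are therefore compact, and $\sigma$ is uniformly continuous on compact sets, so one can close the estimate by tightening each per-layer error below a suitable $\epsilon/(T-1)$-type threshold. No new ingredient beyond Theorem \ref{th:leshno} and uniform continuity of $\sigma$ is required; the work lies in the somewhat tedious but routine combinatorial assignment of coordinates to intermediate features.
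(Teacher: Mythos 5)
Your proposal is correct and follows essentially the same route as the paper: split the network into a first stage of $T-2$ layers that approximately copy (via Theorem \ref{th:leshno} applied to coordinate/linear functionals, deferring the outer linear combination to the next layer's pre-$\sigma$ weights) the spatially distributed input into the central block of $W_{T-1}$, and a final two-layer stage that acts as a shallow universal approximator; error propagation is closed by compactness and uniform continuity of $\sigma$, exactly as in the paper's argument.
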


Note that this is essentially a finite-dimensional approximation result, in the sense that the input  space $V_{\lambda,(T-1)\lambda L_{\mathrm{rf}}}$ is finite-dimensional and fixed. The approximation is achieved by choosing sufficiently large feature dimensions $d_t$ and suitable weights in the intermediate layers. 

\begin{proof}

The idea of the proof is to divide the operation of the convnet into two stages. The first stage is implemented by the first $T-2$ layers and consists in approximate ``contraction'' of the input vectors, while the second stage, implemented by the remaining two layers, performs the actual approximation. 

The contraction stage is required because the components of the input signal $\mathbf \Phi_{\rm in}\in V_{\lambda,(T-1)\lambda L_\mathrm{rf}}\cong L^2(\lambda Z_{(T-1) L_\mathrm{rf}}, \mathbb R^{d_V})$ are distributed over the large spatial domain $\lambda Z_{(T-1) L_\mathrm{rf}}$. In this stage we will map the input signal to the spatially localized space $W_{T-1}\cong L^2(\lambda Z_{L_\mathrm{rf}},\mathbb R^{d_{T-1}})$  so as to approximately preserve the information in the signal. 

Regarding the second  stage, observe that the last two layers of the convnet (starting from $W_{T-1}$) act on signals in $W_{T-1}$ by an expression analogous to the one-hidden-layer network from the basic universal approximation theorem (Theorem \ref{th:leshno}):
\begin{equation}\label{eq:wftft}\Big(\widehat f_{T}\circ \widehat f_{T-1} (\mathbf \Phi)\Big)_{n}=\sum_{k=1}^{d_T}w_{nk}^{(T)}\sigma\Big(\sum_{\theta\in Z_{L_\mathrm{rf}}}\sum_{m=1}^{d_{T-1}}w_{k\theta m}^{(T-1)}\Phi_{\theta m}+h_{k}^{(T-1)}\Big)+h_{n}^{(T)}.
\end{equation}
This expression involves all components of $\mathbf \Phi\in W_{T-1},$ and so we can conclude by Theorem \ref{th:leshno} that by choosing a sufficiently large dimension $d_T$ and appropriate weights we can approximate an arbitrary continuous map from $W_{T-1}$ to $U_{\lambda,0}$. 

Now, given a continuous map $f:V_{\lambda,(T-1)\lambda L_{\mathrm{rf}}}\to U_{\lambda,0}$, consider the map $g=f\circ I\circ P:W_{T-1}\to U_{\lambda,0}$, where $I$ is some linear isometric map from a subspace $W'_{T-1}\subset W_{T-1}$ to $V_{\lambda,(T-1)\lambda L_{\mathrm{rf}}}$, and $P$ is the projection in $W_{T-1}$ to $W'_{T-1}$. Such isometric  $I$ exists if $\dim W_{T-1}\ge \dim V_{\lambda,(T-1)\lambda L_{\mathrm{rf}}}$, which we can assume w.l.o.g. by choosing sufficiently large $d_{T-1}$. Then the map $g$ is continuous, and the previous argument shows that we can approximate $g$ using the second stage of the convnet. Therefore, we can also approximate the given map $f=g\circ I^{-1}$ by the whole convnet if we manage to exactly implement or approximate the isometry $I^{-1}$  in the contraction stage. 

Implementing such an isometry would be straightforward if the first $T-2$ layers had no activation function (i.e., if $\sigma$ were the identity function in the nonlinear layers \eqref{eq:convnonlin}). In this case for all $t=2,3,\ldots,T-1$ we can choose the feature dimensions $d_t=|Z_{L_\mathrm{rf}}|d_{t-1}=(2L_\mathrm{rf}+1)^{\nu(t-1)}d_V$ and set $h_n^{(t)}=0$ and $$w^{(t)}_{n\theta k}=\begin{cases}1,& n =\psi_t(\theta,k), \\ 0, & \text{otherwise,}\end{cases}$$
where $\psi_t$ is some bijection between $Z_{L_\mathrm{rf}}\times \{1,\ldots,d_{t}\}$ and $\{1,\ldots,d_{t+1}\}$. In this way, each component of the network input vector $\mathbf \Phi_{\rm in}$ gets copied, layer by layer, to subsequent layers  and eventually ends up among the components of the resulting vector in $W_{T-1}$ (with some repetitions due to multiple possible trajectories of copying).

However, since $\sigma$ is not an identity, copying needs to be approximated. Consider the first layer, $\widehat f_1$. For each $\gamma\in Z_{L_\mathrm{rf}}$ and each $s\in \{1,\ldots,d_1\}$, consider the corresponding coordinate map $$g_{\gamma s}:L^2(\lambda Z_{L_\mathrm{rf}}, \mathbb R^{d_1})\to \mathbb R,\quad g_{\gamma s}:\mathbf\Phi\mapsto \Phi_{\gamma s}.$$ 
By Theorem \ref{th:leshno}, the map $g_{\gamma s}$ can be approximated with arbitrary accuracy on any compact set in  $L^2(\lambda Z_{L_\mathrm{rf}}, \mathbb R^{d_1})$ by  maps of the form 
\begin{equation}\label{eq:convlemma1}
\mathbf \Phi\mapsto\sum_{m=1}^{N} c_{\gamma sm}\sigma\Big(\sum_{\theta\in Z_{L_\mathrm{rf}}}\sum_{k=1}^{d_1}w_{\gamma sm\theta k}\Phi_{\theta k}+h_{\gamma sm}\Big),
\end{equation}
where we may assume without loss of generality that $N$ is the same for all $\gamma, s$. We then set the second feature dimension $d_2=N|Z_{L_\mathrm{rf}}|d_{1}=N(2L_\mathrm{rf}+1)^\nu d_V$ and assign the weights $w_{\gamma sm\theta k}$ and $h_{\gamma sm}$ in \eqref{eq:convlemma1} to be the weights $w_{n\theta k}^{(1)}$ and $h_{n}^{(1)}$ of the first convnet layer, where the index $n$ somehow enumerates the triplets $(\gamma,s,m)$. Defined in this way, the first convolutional layer $f_1$ only partly reproduces the copy operation, since this layer does not include the linear weighting corresponding to the external summation over $m$ in \eqref{eq:convlemma1}. However, we can include this weighting into the next layer, since this operation involves only values at the same spatial location $\gamma\in \mathbb Z^\nu$, and prepending this operation to the convolutional layer \eqref{eq:convlemma1} does not change the functional form of the layer.

By repeating this argument for the subsequent layers $t=2,3,\ldots,T-2$, we can make the sequence of the first $T-2$ layers to arbitrarily accurately copy all the components of the input vector $\mathbf \Phi_{\rm in}$ into a vector $\mathbf \Phi\in W_{T-1}$, up to some additional linear transformations that need to be included in the $(T-1)$'th layer (again, this is legitimate since prepending a linear operation does not change the functional form of the $(T-1)$'th layer). Thus, we can approximate $f=g\circ I^{-1}$ by arranging the first stage of the convnet to approximate $I^{-1}$ and the second to approximate  $g$.  
\end{proof} 

Returning to the proof of sufficiency, let $f:V\to U$ be an $\mathbb R^\nu$--equivariant continuous map that we need to approximate with accuracy $\epsilon$ on a compact set $K\subset V$ by a convnet with $\lambda<\lambda_0$ and $\Lambda>\Lambda_0$. For any $\lambda$ and $\Lambda$, define the map $$f_{\lambda,\Lambda}=P_{\lambda,\Lambda}\circ f\circ P_\lambda.$$ Observe that we can find $\lambda<\lambda_0$ and $\Lambda>\Lambda_0$ such that \begin{equation}\label{eq:convsupfk}
\sup_{\mathbf\Phi\in K}\|f_{\lambda,\Lambda}(\mathbf\Phi)-f(\mathbf\Phi)\|\le\frac{\epsilon}{3}.
\end{equation}
Indeed, this can be proved as follows. Denote by $B_\delta(\mathbf\Phi)$ the radius--$\delta$ ball centered at $\mathbf\Phi$. By compactness of $K$ and continuity of $f$ we can find finitely many signals $\mathbf \Phi_n\in V,n=1,\ldots,N,$ and some $\delta>0$ so that, first, $K\subset\cup_n B_{\delta/2}(\mathbf\Phi_n)$, and second, 
\begin{equation}\label{eq:convfmphif}\|f(\mathbf\Phi)-f(\mathbf\Phi_n)\|\le \frac{\epsilon}{9},\quad \mathbf \Phi\in B_{\delta}(\mathbf \Phi_n).
\end{equation} 
For any $\mathbf \Phi\in K$, pick $n$ such that $\mathbf \Phi\in B_{\delta/2}(\mathbf\Phi_n)$, then
\begin{align}\|f_{\lambda,\Lambda}(\mathbf\Phi)-f(\mathbf\Phi)\|\le {}&\|P_{\lambda,\Lambda}f(P_\lambda\mathbf\Phi)-P_{\lambda,\Lambda}f(\mathbf\Phi_n)\|+\|P_{\lambda,\Lambda}f(\mathbf\Phi_n)-f(\mathbf\Phi_n)\|+\|f(\mathbf\Phi_n)-f(\mathbf\Phi)\|\nonumber\\
\le{} &\|f(P_\lambda\mathbf\Phi)-f(\mathbf\Phi_n)\|+\|P_{\lambda,\Lambda}f(\mathbf\Phi_n)-f(\mathbf\Phi_n)\|+\frac{\epsilon}{9}.\label{eq:convfptaump}
\end{align}
Since $\mathbf \Phi\in B_{\delta/2}(\mathbf\Phi_n)$, if $\lambda$ is sufficiently small then $P_\lambda\mathbf \Phi\in B_{\delta}(\mathbf\Phi_n)$ (by the strong convergence of $P_\lambda$ to the identity) and hence $\|f(P_\lambda\mathbf\Phi)-f(\mathbf\Phi_n)\|<\frac{\epsilon}{9}$, again by \eqref{eq:convfmphif}. Also, we can choose sufficiently small $\lambda$ and then sufficiently large $\Lambda$ so that $\|P_{\lambda,L}f(\mathbf\Phi_n)-f(\mathbf\Phi_n)\|<\frac{\epsilon}{9}$. Using these inequalities in \eqref{eq:convfptaump}, we obtain \eqref{eq:convsupfk}. 

Having thus chosen $\lambda$ and $\Lambda$, observe that, by translation equivariance of $f$, the map $f_{\lambda,\Lambda}$ can be written as
$$f_{\lambda,\Lambda}(\mathbf\Phi)=\sum_{\gamma\in Z_{\lfloor\Lambda/\lambda\rfloor}}R_{\lambda\gamma}P_{\lambda,0}f(P_\lambda R_{-\lambda\gamma}\mathbf\Phi),$$
where $P_{\lambda,0}$ is the projector $P_{\lambda,\Lambda}$ in the degenerate case $\Lambda=0$. 
Consider the map
$$f_{\lambda,\Lambda,T}(\mathbf\Phi)=\sum_{\gamma\in Z_{\lfloor\Lambda/\lambda\rfloor}}R_{\lambda\gamma}P_{\lambda,0}f(P_{\lambda,(T-1)\lambda L_\mathrm{rf}}R_{-\lambda\gamma}\mathbf\Phi).$$
Then, by choosing $T$ sufficiently large, we can ensure that 
\begin{equation}\label{eq:convftlt1}
\sup_{\mathbf\Phi\in K}\|f_{\lambda,\Lambda,T}(\mathbf\Phi)-f_{\lambda,\Lambda}(\mathbf\Phi)\|<\frac{\epsilon}{3}.
\end{equation}
Indeed, this can be proved in the same way as \eqref{eq:convsupfk}, by using compactness of $K$, continuity of $f$, finiteness of $Z_{\lfloor\Lambda/\lambda\rfloor}$ and the strong convergence $P_{\lambda,(T-1)\lambda L_\mathrm{rf}}R_{-\lambda\gamma}\mathbf\Phi\stackrel{T\to\infty}{\longrightarrow}P_\lambda R_{-\lambda\gamma}\mathbf\Phi$.

Observe that $f_{\lambda,\Lambda,T}$ can be alternatively written as
\begin{equation}\label{eq:convftaultmf}
f_{\lambda,\Lambda,T}(\mathbf\Phi)=\sum_{\gamma\in Z_{\lfloor\Lambda/\lambda\rfloor}}R_{\lambda\gamma}f_{\lambda,0,T}(R_{-\lambda\gamma}\mathbf\Phi),
\end{equation}
where 
$$f_{\lambda,0,T}(\mathbf\Phi)=P_{\lambda,0}f(P_{\lambda,(T-1)\lambda L_\mathrm{rf}}\mathbf\Phi).$$
We can view the map $f_{\lambda,0,T}$ as a map from $V_{\lambda,(T-1)\lambda L_\mathrm{rf}}$ to $U_{\lambda,0}$, which makes Lemma \ref{lm:conv} applicable to $f_{\lambda,0,T}$. Hence, since $\cup_{\gamma\in Z_{\lfloor\Lambda/\lambda\rfloor}} R_{-\lambda\gamma}K$ is compact, we can find a convnet $\widehat f_0$ with  spacing $\lambda$, depth $T$ and range $\Lambda=0$ such that 
\begin{equation}\label{eq:convwhf0}
\|\widehat f_0(\mathbf\Phi)-f_{\lambda,0,T}(\mathbf\Phi)\|<\frac{\epsilon}{3|Z_{\lfloor\Lambda/\lambda\rfloor}|},
\quad\mathbf\Phi\in\cup_{\gamma\in Z_{\lfloor\Lambda/\lambda\rfloor}} R_{-\lambda\gamma}K.
\end{equation}
Consider the convnet $\widehat f_\Lambda$ different from $\widehat f_0$ only by the range parameter $\Lambda$; such a convnet can be written in terms of $\widehat f_0$ in the same way as $f_{\lambda,\Lambda,T}$ is written in terms of $f_{\lambda,0,T}$:
\begin{equation}\label{eq:convwhfl}\widehat f_\Lambda(\mathbf\Phi)=\sum_{\gamma\in Z_{\lfloor\Lambda/\lambda\rfloor}}R_{\lambda\gamma}\widehat f_0(R_{-\lambda\gamma }\mathbf\Phi).\end{equation}
Combining \eqref{eq:convftaultmf}, \eqref{eq:convwhf0} and \eqref{eq:convwhfl}, we obtain
$$\sup_{\mathbf \Phi\in K}\|\widehat f_\Lambda(\mathbf\Phi)-f_{\lambda,\Lambda,T}(\mathbf\Phi)\|<\frac{\epsilon}{3}.$$
Combining this bound with bounds \eqref{eq:convsupfk} and \eqref{eq:convftlt1}, we obtain the desired bound  
$$\sup_{\mathbf \Phi\in K}\|\widehat f_\Lambda(\mathbf\Phi)-f(\mathbf\Phi)\|<\epsilon.$$
\end{proof}

Theorem \ref{th:convmain} suggests that our definition of limit points of basic convnets provides a reasonable rigorous framework for the analysis of convergence and invariance properties of convnet-like models in the limit of continual and infinitely extended signals. We will use these definition and theorem as templates when considering convnets with pooling in the next subsection and charge--conserving convnets in Section \ref{sec:charge}.

\subsection{Convnets with pooling}\label{sec:convpool}
As already mentioned, pooling erodes the equivariance of models with respect to translations. Therefore, we will consider convnets with pooling as universal approximators without assuming the approximated maps to be translationally invariant. Also, rather than considering $L^2(\mathbb R^\nu,\mathbb R^{d_U})$--valued maps, we will be interested in approximating simply $\mathbb R$--valued maps, i.e., those of the form  $f:V\to \mathbb R$, where, as in Section \ref{sec:convnet_deep}, $V=L^2(\mathbb R^\nu, \mathbb R^{d_V}).$

\begin{figure}
\centering
\includegraphics[width=0.7\linewidth,clip,trim=30mm 10mm 30mm 6mm]{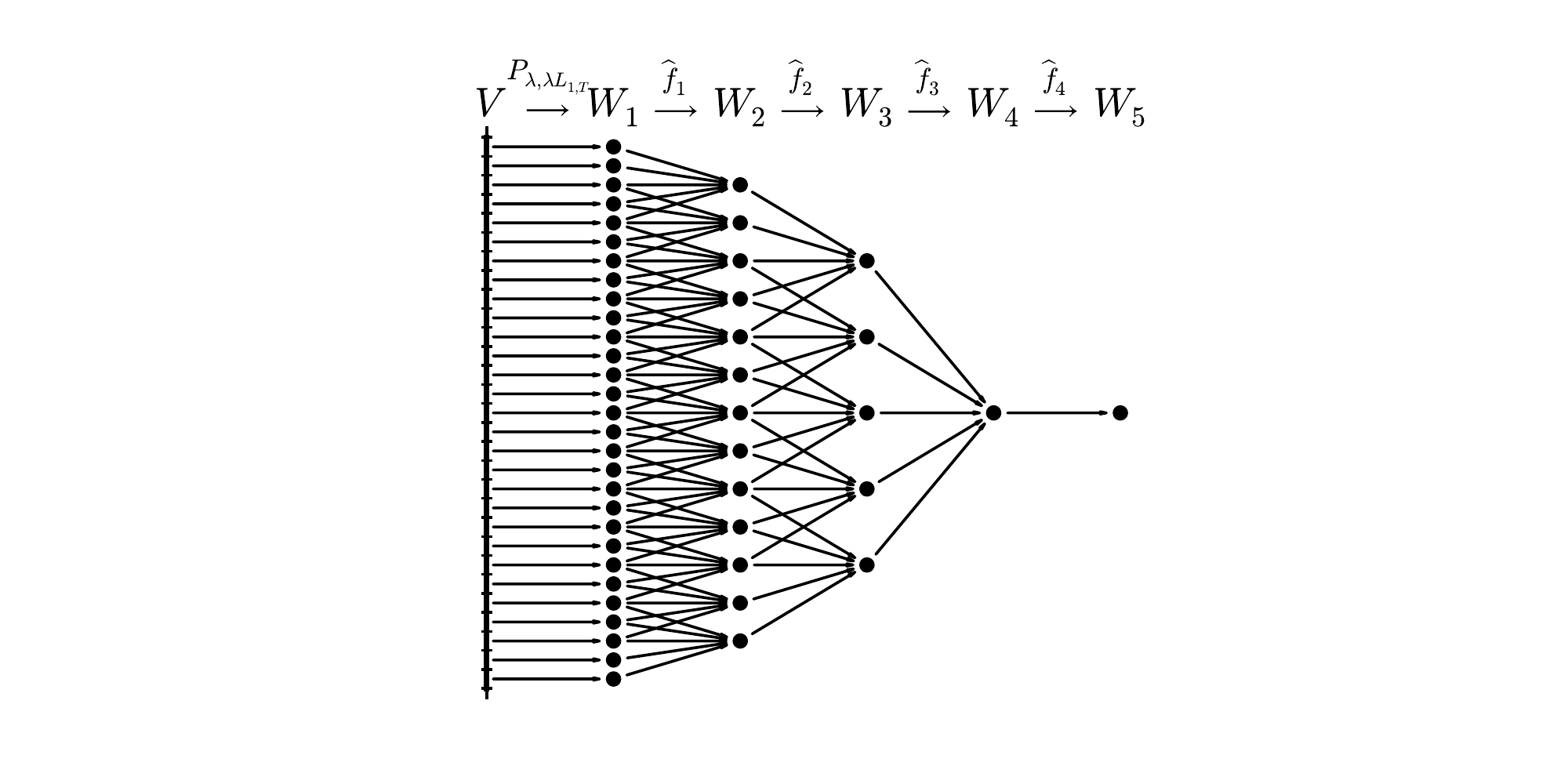}
\caption{A one-dimensional ($\nu=1$) convnet with downsampling having stride $s=2$ and the receptive field parameter $L_{\mathrm{rf}}=2$. }
\centering
\label{fig:downsampled}
\end{figure}

While the most popular kind of pooling in applications seems to be max-pooling, we will only consider pooling by decimation (i.e., grid downsampling), which appears to be about as efficient in practice (see \cite{springenberg2014striving}). Compared to basic convnets of Section \ref{sec:convnet_deep}, convnets with downsampling then have a new parameter, \emph{stride}, that we denote by $s$. The stride can take values $s=1,2,\ldots$ and determines the geometry scaling when passing information to the next convnet layer: if the current layer operates on a grid $(\lambda \mathbb Z)^\nu$, then the next layer will operate on the subgrid $(s\lambda \mathbb Z)^\nu$. Accordingly, the current layer only needs to perform the operations having outputs located in this subgrid. We will assume $s$ to be fixed and to be the same for all layers. Moreover, we assume that 
\begin{equation}\label{eq:convstride}
s\le 2L_\mathrm{rf}+1,
\end{equation}
i.e., the stride is not larger than the size of the receptive field: this ensures that information from each node of the current grid can reach the next layer. 

Like the basic convnet of Section \ref{sec:convnet_deep}, a convnet with downsampling can be written as a chain: 
\begin{equation}\label{eq:convwfpool}
\widehat f:V\stackrel{P_{\lambda,\lambda L_{1,T}}}{\longrightarrow}V_{\lambda,\lambda L_{1,T}}(\equiv W_1)\stackrel{\widehat f_1}{\to}W_2\stackrel{\widehat f_2}{\to}\ldots\stackrel{\widehat f_{T}}{\to} W_{T+1}(\cong \mathbb R).
\end{equation}
Here the space $V_{\lambda,\lambda L_{1,T}}$ is defined as in \eqref{eq:convvtaul} (with $\Lambda=\lambda L_{1,T}$) and $P_{\lambda,\lambda L_{1,T}}$ is the orthogonal projector to this subspace. 
The intermediate spaces are defined by
$$W_t= L^2(s^{t-1}\lambda Z_{L_{t,T}}, \mathbb R^{d_t}).$$
The range parameters $L_{t,T}$ are given by $$L_{t,T}=
\begin{cases}
L_\mathrm{rf}(1+s+s^2+\ldots+s^{T-t-1}), & t< T,\\
0, & t=T, T+1.\end{cases}
$$
This choice of $L_{t,T}$ is equivalent to the identities $$L_{t,T}=sL_{t+1,T}+L_{\mathrm{rf}},\quad t=1,\ldots,T-1,$$ 
expressing the domain transformation under downsampling.

The feature dimensions $d_t$ can again take any values, aside from the fixed values $d_1=d_V$ and $d_{T+1}=1$. 

As the present convnet model is $\mathbb R$--valued, in contrast to the basic convnet of Section \ref{sec:convnet_deep}, it does not have a separate output cutoff parameter $\Lambda$ (we essentially have $\Lambda=0$ now). The geometry of the input domain $\lambda Z_{L_{1,T}}$ is fully determined by  stride $s$, the receptive field parameter $L_\mathrm{rf}$, grid spacing $\lambda$, and depth $T$. Thus, the architecture of the model is fully specified by these parameters and feature dimensions $d_2,\ldots,d_{T}$.

The layer operation formulas differ from the formulas \eqref{eq:convnonlin},\eqref{eq:syminv} by the inclusion of downsampling:   
\begin{equation}\label{eq:convnonlinpool}
\widehat{f}_t(\mathbf \Phi)_{\gamma n}=
\sigma\Big(\sum_{\theta\in Z_{L_\mathrm{rf}}}\sum_{k=1}^{d_{t}}w^{(t)}_{n\theta k}\Phi_{s\gamma+\theta,k}+h^{(t)}_n\Big), \quad \gamma\in Z_{L_{t+1}}, n=1,\ldots, d_{t+1},
\end{equation}
\begin{equation}\label{eq:convlinpool}
\widehat{f}_{T+1}(\mathbf \Phi)=
\sum_{k=1}^{d_{T}}w^{(T)}_{n k}\Phi_{k}+h^{(T)}_n.
\end{equation}
Summarizing, we define convnets with downsampling as follows. 
\begin{defin}\label{def:convnetwithpoolmodel} 
A \textbf{convnet with downsampling} is a map $\widehat f:V\to \mathbb R$ defined by \eqref{eq:convwfpool}, \eqref{eq:convnonlinpool}, \eqref{eq:convlinpool}, and characterized by parameters $s, \lambda, L_{\mathrm{rf}}, T, d_1,\ldots,d_{T}$ and coefficients $w_{n\theta k}^{(t)}$ and $h_n^{(t)}$.   
\end{defin}
Next, we give a definition of a limit point of convnets with downsampling analogous to Definition \ref{def:basicconvnet} for basic convnets. In this definition, we require that the input domain grow in detalization $\frac{1}{\lambda}$ and in the spatial range $\lambda L_{1,T}$, while the stride and receptive field are fixed. 
\begin{defin}\label{def:convpool}
With $V=L^2(\mathbb R^\nu,\mathbb R^{d_V})$, we say that a map $f:V\to \mathbb R$ is a \textbf{limit point of convnets with downsampling} if for any $s$ and  $L_{\mathrm{rf}}$ subject to Eq.\eqref{eq:convstride}, any compact set $K\in V$, any $\epsilon>0, \lambda_0>0$ and $\Lambda_0>0$ there exists a convnet with downsampling $\widehat f$ with stride $s$,  receptive field parameter $L_{\mathrm{rf}}$, depth $T$, and spacing $\lambda \le \lambda_0$  such that $\lambda L_{1,T}\ge \Lambda_0$ and $\sup_{\mathbf \Phi\in K}\|\widehat f(\mathbf \Phi)-f(\mathbf \Phi)\|<\epsilon$.  
\end{defin}  

The analog of Theorem \ref{th:convmain} then reads:

\begin{theorem}\label{th:convpool}
A map $f:V\to \mathbb R$ is a limit point of convnets with downsampling if and only if $f$ is continuous in the norm topology. 
\end{theorem}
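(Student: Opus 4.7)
Necessity is immediate. Each convnet with downsampling is a continuous map (a composition of continuous operations), and uniform convergence on compact sets preserves continuity (Theorem 46.5 of \cite{munkres2000topology}); so any limit point of convnets with downsampling is continuous. For sufficiency, the plan is to follow the two-stage strategy used in the sufficiency part of Theorem \ref{th:convmain}: first reduce to a finite-dimensional approximation problem, then establish an analog of Lemma \ref{lm:conv} for the downsampling architecture. Note that unlike Theorem \ref{th:convmain}, we do not need to prove any equivariance statement, since the model and the target map are not assumed translationally equivariant.

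For the finite-dimensional reduction, given continuous $f:V\to\mathbb R$, compact $K\subset V$, $\epsilon>0$, and $\lambda_0,\Lambda_0>0$, I would choose depth $T$ and grid spacing $\lambda\le\lambda_0$ so that the total input range $\lambda L_{1,T}\ge\Lambda_0$, and then apply the same finite-cover and strong-convergence argument that yields \eqref{eq:convsupfk} in the proof of Theorem \ref{th:convmain}. This produces parameters $\lambda,T$ for which $|f(P_{\lambda,\lambda L_{1,T}}\mathbf\Phi)-f(\mathbf\Phi)|<\epsilon/2$ on $K$. The remaining task is to approximate the restriction $\tilde g:=f|_{V_{\lambda,\lambda L_{1,T}}}$, a continuous map on the finite-dimensional normed space $V_{\lambda,\lambda L_{1,T}}$, on the compact set $P_{\lambda,\lambda L_{1,T}}(K)$ to accuracy $\epsilon/2$ by a convnet with downsampling of the prescribed parameters $\lambda,s,L_\mathrm{rf},T$ and arbitrary feature dimensions $d_2,\ldots,d_T$.

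The key lemma, analogous to Lemma \ref{lm:conv}, is that for fixed $\lambda,s,L_\mathrm{rf},T$ (subject to $s\le 2L_\mathrm{rf}+1$) any continuous $g:V_{\lambda,\lambda L_{1,T}}\to\mathbb R$ can be approximated on compact sets by convnets with downsampling having these parameters and suitably chosen feature dimensions. I would prove it by the same ``copy, then shallow network'' scheme as in Lemma \ref{lm:conv}: use the first $T-2$ layers to approximately copy every coordinate of the input signal through the shrinking grids into the features of $W_{T-1}$, and then view the last convolutional layer together with the final linear layer as a standard two-hidden-layer network to which Theorem \ref{th:leshno} applies. The copy stage is feasible precisely because the assumption $s\le 2L_\mathrm{rf}+1$ guarantees that the receptive fields $\{s\gamma+\theta:\theta\in Z_{L_\mathrm{rf}}\}$ centered at the downsampled sites $\gamma\in Z_{L_{t+1,T}}$ cover the entire source grid $Z_{L_{t,T}}$, so every coordinate of $W_t$ is visible to at least one site of $W_{t+1}$; Theorem \ref{th:leshno} then approximates the corresponding coordinate function by a linear combination of nonlinear activations at that site, and the outer linear combination is absorbed into the linear part of the next layer's convolution (the absorption trick already used in Lemma \ref{lm:conv}). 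The main technical obstacle is the bookkeeping of this induction: at each step one must choose $d_{t+1}$ large enough to accommodate all redundant copies plus the activations needed for coordinate approximation, and track the nested compact sets on which successive uniform approximations are required. Once this lemma is in hand, it is applied to $g=\tilde g$ on the compact set $P_{\lambda,\lambda L_{1,T}}(K)$ with accuracy $\epsilon/2$, which together with the reduction above yields a convnet with downsampling approximating $f$ to accuracy $\epsilon$ on $K$.
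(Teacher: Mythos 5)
Your proposal is correct and follows essentially the same route as the paper: necessity by the standard topological argument, and sufficiency by first reducing to the finite-dimensional map $f\circ P_{\lambda,\lambda L_{1,T}}$ and then proving a downsampling analog of Lemma \ref{lm:conv}, where the condition $s\le 2L_{\mathrm{rf}}+1$ guarantees that the copy stage still reaches every input coordinate. (Only a cosmetic quibble: the final two layers form a one-hidden-layer network in the sense of Theorem \ref{th:leshno}, not a two-hidden-layer one.)
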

\begin{proof}
The proof is completely analogous to, and in fact simpler than, the proof of Theorem \ref{th:convmain}, so we only sketch it. 

The necessity only involves the claim of continuity and follows again by a basic topological argument.

In the proof of sufficiency, an analog of Lemma \ref{lm:conv} holds for convnets with downsampling, since, thanks to the constraint \eqref{eq:convstride} on the stride, all points of the input domain $\lambda Z_{L_1}$ are connected by the network architecture to the output (though there are fewer connections now due to pooling), so that our construction of approximate copy operations remains valid. 

To approximate $f:V\to \mathbb R$ on a compact $K$, first approximate it by a map $f\circ P_{\lambda, Z_{L_{1,T}}}$ with a sufficiently small $\lambda$ and large $T$, then use the lemma to approximate $f\circ P_{\lambda, Z_{L_{1,T}}}$ by a convnet. 
\end{proof}

\section{Charge-conserving convnets}\label{sec:charge}

The goal of the present section is to describe a complete convnet-like model for approximating arbitrary continuous maps equivariant with respect to rigid planar motions. A rigid motion of $\mathbb R^\nu$ is an affine transformation  preserving the distances and the orientation in $\mathbb R^\nu$. The group $\mathrm{SE}(\nu)$ of all such motions can be described as a \emph{semidirect product} of the translation group $\mathbb R^\nu$ with the special orthogonal group $\mathrm{SO}(\nu)$:
$$\mathrm{SE}(\nu)=\mathbb R^\nu \rtimes \mathrm{SO}(\nu).$$
An element of $\mathrm{SE}(\nu)$ can be represented as a pair $(\gamma,\theta)$ with $\gamma\in \mathbb R^\nu$ and $\theta\in \mathrm{SO}(\nu).$ The group operations are given by 
\begin{align*}
(\gamma_1,\theta_1)(\gamma_2,\theta_2)&=(\gamma_1+\theta_1\gamma_2, \theta_1\theta_2),\\
(\gamma,\theta)^{-1}&=(-\theta^{-1}\gamma,\theta^{-1}).
\end{align*} 
The group $\mathrm{SE}(\nu)$ acts on $\mathbb R^\nu$ by
$$\mathcal A_{(\gamma,\theta)}\mathbf x=\gamma+\theta\mathbf x.$$
It is easy to see that this action is compatible with the group operation, i.e. $\mathcal A_{(0,1)} = \mathrm{Id}$  and $\mathcal A_{(\gamma_1,\theta_1)}\mathcal A_{(\gamma_2,\theta_2)}=\mathcal A_{(\gamma_1,\theta_1)(\gamma_2,\theta_2)}$ (implying, in particular, $\mathcal A_{(\gamma,\theta)}^{-1}=\mathcal A_{(\gamma,\theta)^{-1}}$).

As in Section \ref{sec:convnet_deep}, consider the space $V=L^2(\mathbb R^\nu,\mathbb R^{d_V}).$ We can view this space as a $\mathrm{SE}(\nu)$--module with the representation canonically associated with the action $\mathcal A$: 
\begin{equation}\label{eq:chargergta}
R_{(\gamma,\theta)}\mathbf\Phi(\mathbf x)=\mathbf\Phi(\mathcal A_{(\gamma,\theta)^{-1}}\mathbf x),\end{equation}
where $\mathbf \Phi:\mathbb R^\nu\to\mathbb R^{d_V}$ and $\mathbf x\in\mathbb R^\nu$. We define in the same manner the module $U=L^2(\mathbb R^\nu,\mathbb R^{d_U})$. In the remainder of the paper we will be interested in approximating \emph{continuous and SE($\nu$)-equivariant} maps $f:V\to U$. Let us first give some examples of such maps.
\begin{description}
\item[Linear maps.] 
Assume for simplicity that $d_V=d_U=1$ and consider a \emph{linear} SE($\nu$)-equivariant map $f:L^2(\mathbb R^\nu)\to L^2(\mathbb R^\nu)$. Such a map can be written as a convolution $f(\mathbf \Phi)=\mathbf\Phi*\mathbf\Psi_f,$ where $\mathbf\Psi_f$ is a radial signal, $\mathbf\Psi_f(\mathbf x)=\widetilde{\mathbf\Psi}_f(|\mathbf x|).$ In general, $\mathbf\Psi_f$ should be understood in a distributional sense. 

By applying Fourier transform $\mathcal F$, the map $f$ can be equivalently described in the Fourier dual space as pointwise multiplication of the given signal by $const \mathcal F{\mathbf\Psi}_f$ (with the constant depending on the choice of the coefficient in the Fourier transfrom), so $f$ is SE($\nu$)-equivariant and continuous if and only if $\mathcal F{\mathbf\Psi}_f$ is a radial function belonging to $L^\infty(\mathbb R^\nu)$. Note that in this argument we have tacitly complexified the space $L^2(\mathbb R^\nu, \mathbb R)$ into $L^2(\mathbb R^\nu, \mathbb C)$. The condition that $f$ preserves real-valuedness of the signal $\mathbf \Phi$ translates into $\overline{\mathcal F{\mathbf\Psi}_f(\mathbf x)}=\mathcal F{\mathbf\Psi}_f(-\mathbf x)$, where the bar denotes complex conjugation.  

Note that linear SE($\nu$)-equivariant differential operators, such as the Laplacian $\Delta$, are not included in our class of maps, since they are not even defined on the whole space $V=L^2(\mathbb R^\nu)$. However, if we consider a smoothed version of the Laplacian given by $f:\mathbf\Phi\mapsto\Delta(\mathbf \Phi*g_\epsilon)$, where $g_\epsilon$ is the variance-$\epsilon$ Gaussian kernel, then this map will be well-defined on the whole $V$, norm-continuous and SE($\nu$)-equivariant. 

\item[Pointwise maps.] Consider a \emph{pointwise} map $f:V\to U$ defined by $f(\mathbf\Phi)(\mathbf x)=f_0(\mathbf x)$, where $f_0:\mathbb R^{d_V}\to\mathbb R^{d_U}$ is some map. In this case $f$ is SE($\nu$)-equivariant. Note that if $f_0(0)\ne 0$, then $f$ is not well-defined on $V=L^2(\mathbb R^\nu,\mathbb R^{d_V})$, since $f(\mathbf \Phi)\notin L^2(\mathbb R^\nu,\mathbb R^{d_U})$ for the trivial signal $\mathbf \Phi(\mathbf x)\equiv 0$. An easy-to-check sufficient condition for $f$ to be well-defined  and continuous on the whole $V$ is that $f_0(0)=0$ and $f_0$ be globally Lipschitz (i.e., $|f_0(\mathbf x)-f_0(\mathbf y)|\le c|\mathbf x-\mathbf y|$ for all $\mathbf x,\mathbf y\in \mathbb R^\nu$ and some $c<\infty$).  
\end{description}

Our goal in this section is to describe a finite computational model that would be a universal approximator for all continuous and $\mathrm{SE}(\nu)$--equivariant maps $f:V\to U$. Following the strategy of Section \ref{sec:convnet_deep}, we aim to define limit points of such finite models and then prove that the limit points are exactly the continuous and $\mathrm{SE}(\nu)$--equivariant maps.

We focus on approximating $L^2(\mathbb R^\nu,\mathbb R^{d_U})$-valued $\mathrm{SE}(\nu)$-equivariant maps rather than $\mathbb R^{d_U}$-valued $\mathrm{SE}(\nu)$-invariant maps because, as discussed in Section \ref{sec:translations}, we find it hard to reconcile the $\mathrm{SE}(\nu)$-invariance with pooling.   

Note that, as in the previous sections, there is a straightforward symmetrization-based approach to constructing universal $\mathrm{SE}(\nu)$--equivariant models. In particular, the group $\mathrm{SE}(\nu)$ extends the group of translations $\mathbb R^\nu$ by the compact group $\mathrm{SO}(\nu),$ and we can construct $\mathrm{SE}(\nu)$--equivariant maps simply by symmetrizing $\mathbb R^\nu$--equivariant maps over $\mathrm{SO}(\nu)$, as in Proposition \ref{th:sym_equiv}.

\begin{prop}
If a map $f_{\mathbb R^\nu}:V\to U$ is continuous and $\mathbb R^\nu$--equivariant, then the map $f_{\mathrm{SE}(\nu)}:V\to U$ defined by
$$f_{\mathrm{SE}(\nu)}(\mathbf\Phi)=\int_{\mathrm{SO}(\nu)}R_{(0,\theta)^{-1}}f_{\mathbb R^\nu}(R_{(0,\theta)}\mathbf\Phi)d\theta$$
is continuous and $\mathrm{SE}(\nu)$--equivariant. 
\end{prop}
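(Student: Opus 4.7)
The plan is to verify three things in sequence: (i) the averaging integral is well-defined as a Bochner integral in $U$; (ii) $\mathrm{SE}(\nu)$-equivariance, which we split into pure translations and pure rotations using the semidirect product decomposition $(\gamma,\theta)=(\gamma,1)(0,\theta)$; and (iii) norm-continuity.

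First, well-definedness. Since the action $(\theta,\mathbf\Phi)\mapsto R_{(0,\theta)}\mathbf\Phi$ is strongly continuous and $R_{(0,\theta)}$ is an isometry, the integrand $\theta\mapsto R_{(0,\theta)^{-1}}f_{\mathbb R^\nu}(R_{(0,\theta)}\mathbf\Phi)$ is continuous. The orbit $\{R_{(0,\theta)}\mathbf\Phi\}_{\theta\in\mathrm{SO}(\nu)}$ is the continuous image of a compact set and is therefore compact in $V$; by continuity of $f_{\mathbb R^\nu}$ its image under $f_{\mathbb R^\nu}$ is also compact in $U$, so the integrand is norm-bounded, and the Bochner integral over the compact group $\mathrm{SO}(\nu)$ with normalized Haar measure exists.

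Second, equivariance. Translation part: using the multiplication law, $(0,\theta')(\gamma,1)=(\theta'\gamma,\theta')=(\theta'\gamma,1)(0,\theta')$, so $R_{(0,\theta')}R_{(\gamma,1)}=R_{(\theta'\gamma,1)}R_{(0,\theta')}$. Substituting into the definition and applying $\mathbb R^\nu$-equivariance of $f_{\mathbb R^\nu}$ pulls out a factor $R_{(\theta'\gamma,1)}$, which combines with $R_{(0,\theta')^{-1}}$ via $(0,\theta'^{-1})(\theta'\gamma,1)=(\gamma,\theta'^{-1})=(\gamma,1)(0,\theta'^{-1})$ to yield $R_{(\gamma,1)}R_{(0,\theta')^{-1}}$; pulling the bounded linear operator $R_{(\gamma,1)}$ out of the Bochner integral gives $R_{(\gamma,1)}f_{\mathrm{SE}(\nu)}(\mathbf\Phi)$. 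Rotation part: substituting $\mathbf\Phi\mapsto R_{(0,\theta)}\mathbf\Phi$ gives an integrand involving $R_{(0,\theta'\theta)}\mathbf\Phi$; changing the integration variable to $\theta''=\theta'\theta$ and using left-invariance of Haar measure, together with $R_{(0,\theta')^{-1}}=R_{(0,\theta)}R_{(0,\theta'')^{-1}}$, lets us pull $R_{(0,\theta)}$ out of the integral. Combining the two special cases yields equivariance for the full group.

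Third, continuity. If $\mathbf\Phi_n\to\mathbf\Phi$ in $V$, then $\{\mathbf\Phi_n\}\cup\{\mathbf\Phi\}$ is compact, so its $\mathrm{SO}(\nu)$-saturation $K=\{R_{(0,\theta)}\mathbf\Psi:\theta\in\mathrm{SO}(\nu),\ \mathbf\Psi\in\{\mathbf\Phi_n\}\cup\{\mathbf\Phi\}\}$ is compact in $V$ by continuity of the action and compactness of $\mathrm{SO}(\nu)$. Hence $f_{\mathbb R^\nu}$ is uniformly continuous on $K$. Since $R_{(0,\theta)}$ is an isometry, $\|R_{(0,\theta)}\mathbf\Phi_n-R_{(0,\theta)}\mathbf\Phi\|=\|\mathbf\Phi_n-\mathbf\Phi\|\to 0$ uniformly in $\theta$, so $\|f_{\mathbb R^\nu}(R_{(0,\theta)}\mathbf\Phi_n)-f_{\mathbb R^\nu}(R_{(0,\theta)}\mathbf\Phi)\|\to 0$ uniformly in $\theta$. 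Using the isometry of $R_{(0,\theta)^{-1}}$ and the triangle inequality for Bochner integrals, $\|f_{\mathrm{SE}(\nu)}(\mathbf\Phi_n)-f_{\mathrm{SE}(\nu)}(\mathbf\Phi)\|\to 0$.

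The only mildly delicate point is the continuity argument, where one must know that the saturated set $K$ is compact in $V$; this hinges on strong continuity of the $\mathrm{SO}(\nu)$-representation on $L^2$, which is standard. Everything else is a direct translation of the finite-dimensional group-averaging proofs (Propositions \ref{th:syminv} and \ref{th:sym_equiv}) into the Bochner-integral setting.
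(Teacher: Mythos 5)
Your proof is correct and follows essentially the same route as the paper: group-averaging over $\mathrm{SO}(\nu)$, a change of variables in the Haar integral, and the semidirect-product multiplication law to commute $R_{(0,\theta)^{-1}}$ past the translation factor (the paper handles a general $(\gamma,\theta')$ in one chain of equalities rather than splitting into translations and rotations, but the computation is the same). You additionally spell out the Bochner-integral well-definedness and the continuity argument, which the paper dismisses as elementary; those details are filled in correctly.
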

\begin{proof} The continuity of $f_{\mathrm{SE}(\nu)}$ follows by elementary arguments using the continuity of $f_{\mathbb R^\nu}:V\to U$,   uniform boundedness of the operators $R_{(0,\theta)}$, and compactness of $\mathrm{SO}(\nu)$. The $\mathrm{SE}(\nu)$--equivariance follows since for any $(\gamma,\theta')\in\mathrm{SE}(\nu)$ and $\mathbf \Phi\in V$
\begin{align*}
f_{\mathrm{SE}(\nu)}(R_{(\gamma,\theta')}\mathbf\Phi) 
&= \int_{\mathrm{SO}(\nu)}R_{(0,\theta)^{-1}}f_{\mathbb R^\nu}(R_{(0,\theta)}R_{(\gamma,\theta')}\mathbf\Phi)d\theta\\
&= \int_{\mathrm{SO}(\nu)}R_{(0,\theta)^{-1}}f_{\mathbb R^\nu}(R_{(\theta\gamma,1)}R_{(0,\theta\theta')}\mathbf\Phi)d\theta\\
&= \int_{\mathrm{SO}(\nu)}R_{(0,\theta)^{-1}}R_{(\theta\gamma,1)}f_{\mathbb R^\nu}(R_{(0,\theta\theta')}\mathbf\Phi)d\theta\\
&= \int_{\mathrm{SO}(\nu)}R_{(\gamma,\theta')}R_{(0,\theta\theta')^{-1}}f_{\mathbb R^\nu}(R_{(0,\theta\theta')}\mathbf\Phi)d\theta\\
&= R_{(\gamma,\theta')}f_{\mathrm{SE}(\nu)}(\mathbf\Phi).
\end{align*}
\end{proof}
This proposition implies, in particular, that $\mathrm{SO}(\nu)$-symmetrizations of merely $\mathbb R^\nu$-equivariant basic convnets considered in Section \ref{sec:convnet_deep} can serve as universal $\mathrm{SE}(\nu)$--equivariant approximators. However, like in the previous sections, we will be instead interested in an intrinsically $\mathrm{SE}(\nu)$--equivariant network construction not involving explicit symmetrization of the approximation over the group $\mathrm{SO}(\nu)$. In particular, our approximators will not use rotated grids. 

Our construction relies heavily on the representation theory of the group $\mathrm{SO}(\nu),$ and in the present paper we restrict ourselves to the case $\nu=2$, in which the group $\mathrm{SO}(\nu)$ is abelian and the representation theory is much easier than in the general case. 

Section \ref{sec:charge_prelim} contains preliminary considerations suggesting the network construction appropriate for our purpose. The formal detailed description of the model is given in Section \ref{sec:chargeconvnet}. In Section \ref{sec:chargeresult} we formulate and prove the main result of the section, the $\mathrm{SE}(2)$--equivariant universal approximation property of the model.

\subsection{Preliminary considerations}\label{sec:charge_prelim}  
In this section we explain the idea behind our construction of the universal $\mathrm{SE}(2)$-equivariant convnet (to be formulated precisely in Section \ref{sec:chargeconvnet}). We start by showing in Section \ref{sec:chargepointwisecharact} that a $\mathrm{SE}(2)$-equivariant map $f:V\to U$ can be described using a $\mathrm{SO}(2)$-invariant map $f_{\mathrm{loc}}:V\to \mathbb R^{d_V}$. Then, relying on this observation, in Section \ref{sec:chargediff} we show that, heuristically, $f$ can be reconstructed by first equivariantly extracting local ``features'' from the original signal using equivariant differentiation, and then transforming these features using a $\mathrm{SO}(2)$-invariant pointwise map. In Section \ref{sec:chargediscr} we describe discretized differential operators and smoothing operators that we require in order to formulate our model as a finite computation model with sufficient regularity. Finally, in Section \ref{sec:chargepolyinv} we consider polynomial approximations on $\mathrm{SO}(2)$-modules.
  
\subsubsection{Pointwise characterization of $\mathrm{SE}(\nu)$--equivariant maps}\label{sec:chargepointwisecharact}

In this subsection we show that, roughly speaking, $\mathrm{SE}(\nu)$-equivariant maps $f:V\to U$ can be described in terms of $\mathrm{SO}(\nu)$-invariant maps $f:V\to\mathbb R^\nu$ obtained by observing the output signal at a fixed position. 

(The proposition below has one technical subtlety: we consider signal values $\mathbf \Phi(0)$ at a particular point $\mathbf x=0$ for generic signals $\mathbf \Phi$ from the space $L^2(\mathbb R^\nu,\mathbb R^{d_U})$. Elements of this spaces are  defined as equivalence classes of signals that can differ on sets of zero Lebesgue measure, so, strictly speaking, $\mathbf \Phi(0)$ is not well-defined. We can circumvent this difficulty by fixing a particular canonical representative of the equivalence class, say
$$\mathbf \Phi_{\mathrm{canon}}(\mathbf x)=\begin{cases}\lim_{\epsilon\to 0}\frac{1}{|B_\epsilon(\mathbf x)|}\int_{B_\epsilon(\mathbf x)}\mathbf \Phi(\mathbf y)d\mathbf y,&\text{if the limit exists,}\\ 0, &\text{otherwise.}\end{cases}$$ Lebesgue's differentiation theorem ensures that the limit exists and agrees with $\mathbf \Phi$ almost everywhere, so that $\mathbf \Phi_{\mathrm{canon}}$ is indeed a representative of the equivalence class. This choice of the representative is clearly $\mathrm{SE}(\nu)$-equivariant. In the proposition below, the signal value at $\mathbf x=0$ can be understood as the value of such a canonical representative.)

\begin{prop}\label{lm:chargecharequiv}
Let $f:L^2(\mathbb R^2,\mathbb R^{d_V})\to L^2(\mathbb R^2,\mathbb R^{d_U})$ be a $\mathbb R^\nu$--equivariant map. Then $f$ is $\mathrm{SE}(\nu)$--equivariant if and only if $f(R_{(0,\theta)}\mathbf \Phi)(0)=f(\mathbf \Phi)(0)$ for all $\theta\in \mathrm{SO}(\nu)$ and $\mathbf\Phi\in V$.   
\end{prop}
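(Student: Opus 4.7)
The plan is to prove the two implications separately, with the forward direction being essentially trivial and the reverse direction reducing to showing $\mathrm{SO}(\nu)$-equivariance (since $\mathbb R^\nu$-equivariance is assumed and $\mathrm{SE}(\nu)=\mathbb R^\nu\rtimes\mathrm{SO}(\nu)$).

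For the ``only if'' direction, I would just evaluate the $\mathrm{SE}(\nu)$-equivariance identity $f(R_{(0,\theta)}\mathbf\Phi)=R_{(0,\theta)}f(\mathbf\Phi)$ at the point $\mathbf x=0$. Since $\mathcal A_{(0,\theta)^{-1}}\mathbf 0=\mathbf 0$, definition \eqref{eq:chargergta} gives $(R_{(0,\theta)}f(\mathbf\Phi))(0)=f(\mathbf\Phi)(0)$, yielding the pointwise identity immediately.

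For the ``if'' direction, my strategy is to reduce checking $\mathrm{SO}(\nu)$-equivariance at an arbitrary point $\mathbf x\in\mathbb R^\nu$ to checking the hypothesis at the origin, using $\mathbb R^\nu$-equivariance as a ``translation to the origin'' device. Concretely, $\mathbb R^\nu$-equivariance gives the identity $f(\mathbf\Psi)(\mathbf x)=f(R_{(-\mathbf x,1)}\mathbf\Psi)(0)$ for every $\mathbf\Psi\in V$. Applying this with $\mathbf\Psi=R_{(0,\theta)}\mathbf\Phi$ on the left and with $\mathbf\Psi=\mathbf\Phi$ (and $\mathbf x$ replaced by $\theta^{-1}\mathbf x$) on the right, the equality $f(R_{(0,\theta)}\mathbf\Phi)(\mathbf x)=(R_{(0,\theta)}f(\mathbf\Phi))(\mathbf x)=f(\mathbf\Phi)(\theta^{-1}\mathbf x)$ becomes
\begin{equation*}
f\bigl(R_{(-\mathbf x,1)}R_{(0,\theta)}\mathbf\Phi\bigr)(0)=f\bigl(R_{(-\theta^{-1}\mathbf x,1)}\mathbf\Phi\bigr)(0).
\end{equation*}
The key algebraic step is then the semidirect product computation $(-\mathbf x,1)(0,\theta)=(-\mathbf x,\theta)=(0,\theta)(-\theta^{-1}\mathbf x,1)$, which yields $R_{(-\mathbf x,1)}R_{(0,\theta)}=R_{(0,\theta)}R_{(-\theta^{-1}\mathbf x,1)}$. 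Substituting this on the left side of the display and invoking the hypothesis $f(R_{(0,\theta)}\mathbf\Psi)(0)=f(\mathbf\Psi)(0)$ with $\mathbf\Psi=R_{(-\theta^{-1}\mathbf x,1)}\mathbf\Phi$ collapses the two sides.

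The only nonroutine point is the already-flagged subtlety that $L^2$ elements are equivalence classes, so $\mathbf\Phi(0)$ is a priori ill-defined; the parenthetical remark in the proposition fixes the canonical Lebesgue representative, and since this choice is itself $\mathrm{SE}(\nu)$-equivariant, all the pointwise manipulations above remain valid on that representative. I do not anticipate any real obstacle here — the whole argument is essentially a one-line semidirect-product identity dressed up with the translation-to-origin trick.
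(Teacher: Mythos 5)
Your proof is correct and follows essentially the same route as the paper's: translate to the origin via $\mathbb R^\nu$-equivariance, rewrite $(-\mathbf x,1)(0,\theta)=(0,\theta)(-\theta^{-1}\mathbf x,1)$ in the semidirect product, apply the hypothesis at the origin, and translate back. The paper carries out this chain directly for a general element $(\gamma,\theta)$ rather than first reducing to pure rotations, but the key identity and all the steps are the same.
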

\begin{proof}
One direction of the statement is obvious: if $f$ is $\mathrm{SE}(\nu)$--equivariant, then $f(R_{(0,\theta)}\mathbf \Phi)(0)=R_{(0,\theta)}f(\mathbf \Phi)(0)=f(\mathbf \Phi)(\mathcal A_{(0,\theta^{-1})}0)=f(\mathbf \Phi)(0).$  

Let us prove the opposite implication, i.e. that $f(R_{(0,\theta)}\mathbf \Phi)(0)\equiv f(\mathbf \Phi)(0)$ implies the $\mathrm{SE}(\nu)$--equivariance. We need to show that for all $(\gamma,\theta)\in \mathrm{SE}(\nu)$, $\mathbf\Phi\in V$ and $\mathbf x\in \mathbb R^\nu$ we have
$$f(R_{(\gamma,\theta)}\mathbf \Phi)(\mathbf x)=R_{(\gamma,\theta)}f(\mathbf \Phi)(\mathbf x).$$
Indeed,
\begin{align*}
f(R_{(\gamma,\theta)}\mathbf \Phi)(\mathbf x) 
&=R_{(-\mathbf x,1)}f(R_{(\gamma,\theta)}\mathbf \Phi)( 0)\\
&=f(R_{(-\mathbf x,1)}R_{(\gamma,\theta)}\mathbf \Phi)( 0)\\
&=f(R_{(0,\theta)}R_{(\theta^{-1}(\gamma-\mathbf x),1)}\mathbf \Phi)( 0)\\
&=f(R_{(\theta^{-1}(\gamma-\mathbf x),1)}\mathbf \Phi)( 0)\\
&=R_{(\theta^{-1}(\gamma-\mathbf x),1)}f(\mathbf \Phi)(0)\\
&=R_{(\mathbf x,\theta)}R_{(\theta^{-1}(\gamma-\mathbf x),1)}f(\mathbf \Phi)(\mathcal A_{(\mathbf x,\theta)} 0)\\
&=R_{(\gamma,\theta)}f(\mathbf \Phi)(\mathbf x),
\end{align*}
where we used definition \eqref{eq:chargergta} (steps 1 and 6), the $\mathbb R^\nu$--equivariance of $f$ (steps 2 and 5), and the hypothesis of the lemma (step 4). 
\end{proof}

Now, if $f:V\to U$ is an $\mathrm{SE}(\nu)$--equivariant map, then we can define the $\mathrm{SO}(\nu)$--invariant map $f_{\mathrm{loc}}:V\to\mathbb R^{d_U}$  by 
\begin{equation}\label{eq:chargefloc}
f_{\mathrm{loc}}(\mathbf\Phi)=f(\mathbf\Phi)(0).\end{equation}
Conversely, suppose that $f_{\mathrm{loc}}:V\to\mathbb R^{d_U}$ is an $\mathrm{SO}(\nu)$--invariant map. Consider the map $f:V\to \{\mathbf \Psi: \mathbb R^\nu\to\mathbb R^{d_U}\}$ defined by
\begin{equation}\label{eq:chargefphifloc}
f(\mathbf \Phi)(\mathbf x):=f_{\mathrm{loc}}(R_{(-\mathbf x, 1)}\mathbf\Phi).
\end{equation}
In general, $f(\mathbf \Phi)$ need not be in $L^2(\mathbb R^\nu, \mathbb R^{d_U}).$ Suppose, however, that this is the case for all $\mathbf \Phi\in V.$ Then $f$ is clearly $\mathbb R^\nu$-equivariant and, moreover,  $\mathrm{SE}(\nu)$--equivariant, by the above proposition. 

Thus, under some additional regularity assumption, the task of reconstructing $\mathrm{SE}(\nu)$--equivariant maps $f:V\to U$ is equivalent to the task of reconstructing $\mathrm{SO}(\nu)$--invariant maps $f_{\mathrm{loc}}:V\to\mathbb R^{d_U}$.

\subsubsection{Equivariant differentiation}\label{sec:chargediff} 
It is convenient to describe rigid motions of $\mathbb R^2$ by identifying this two-dimensional real space with the one-dimensional complex space $\mathbb C$. Then an element of $\mathrm{SE}(2)$ can be written as $(\gamma,\theta)=(x+iy,e^{i\phi})$ with some $x,y\in\mathbb R$ and $\phi\in[0,2\pi)$. The action of $\mathrm{SE}(2)$ on $\mathbb R^2\cong \mathbb C$ can be written as
$$\mathcal A_{(x+iy,e^{i\phi})}z=x+iy+e^{i\phi}z,\quad z\in\mathbb C.$$
Using analogous notation  $R_{(x+iy,e^{i\phi})}$ for the canonically associated representation of $\mathrm{SE}(2)$ in $V$ defined in \eqref{eq:chargergta}, consider the generators of this representation: 
$$J_x=i\lim_{\delta x\to 0}\frac{R_{(\delta x, 1)}-1}{\delta x},\quad 
J_y=i\lim_{\delta y\to 0}\frac{R_{(i\delta y, 1)}-1}{\delta y},\quad 
J_\phi=i\lim_{\delta \phi\to 0}\frac{R_{(0, e^{i\delta\phi})}-1}{\delta \phi}.
$$
The generators can be explicitly written as
$$J_x=-i\partial_x, \quad J_y=-i\partial_y,\quad J_\phi = -i\partial_\phi=-i(x\partial_y-y\partial_x)$$
and obey the commutation relations
\begin{equation}\label{eq:chargecommrel}[J_x,J_y]=0,\quad [J_x,J_\phi]=-iJ_y, \quad [J_y,J_\phi]=iJ_x.\end{equation}
We are interested in local transformations of signals $\mathbf\Phi\in V$, so it is natural to consider the action of differential operators on the signals. We would like, however, to ensure the equivariance of this action. This can be done as follows. Consider the first-order operators
$$\partial_z=\frac{1}{2}(\partial_x-i\partial_y), \quad \partial_{\overline{z}}=\frac{1}{2}(\partial_x+i\partial_y).$$
These operators commute with $J_x,J_y$, and have the following commutation relations with $J_\phi$:
$$[\partial_z,J_\phi]=\partial_z,\quad [\partial_{\overline{z}},J_\phi]=-\partial_{\overline{z}}$$
or, equivalently,
\begin{equation}\label{eq:chargepzjp}\partial_z J_\phi=(J_\phi+1)\partial_z,\quad\partial_{\overline{z}}J_\phi= (J_\phi-1)\partial_{\overline{z}}.\end{equation}
Let us define, for any $\mu\in\mathbb Z,$ $$J_\phi^{(\mu)}=J_\phi+\mu=\mu-i\partial_\phi.$$
Then the triple $(J_x,J_y,J_z^{(\mu)})$ obeys the same commutation relations \eqref{eq:chargecommrel}, i.e., constitutes another representation of the Lie algebra of the group $\mathrm{SE}(2)$. The corresponding representation of the group differs from the original representation \eqref{eq:chargergta} by the extra phase factor: 
\begin{equation}\label{eq:chargermugta}R^{(\mu)}_{(x+iy,e^{i\phi})}\mathbf\Phi(\mathbf x)=e^{-i\mu\phi}\mathbf\Phi(\mathcal A_{(x+iy,e^{i\phi})^{-1}}\mathbf x).\end{equation}
The identities \eqref{eq:chargepzjp} imply $\partial_z J^{(\mu)}_\phi=J^{(\mu+1)}_\phi\partial_z$ and $\partial_{\overline{z}} J^{(\mu)}_\phi=J^{(\mu-1)}_\phi\partial_{\overline{z}}.$ Since the operators $\partial_{z},\partial_{\overline{z}}$ also commute with $J_x,J_y$, we see that the operators $\partial_{z},\partial_{\overline{z}}$ can serve as \emph{ladder operators} equivariantly mapping 
\begin{equation}\label{eq:chargedzvmu}\partial_{z}:V_\mu\to V_{\mu+1},\quad \partial_{\overline{z}}:V_\mu\to V_{\mu-1},\end{equation}
where $V_\mu$ is the space $L^2(\mathbb R^2,\mathbb R^{d_V})$ equipped with the representation \eqref{eq:chargermugta}. Thus, we can equivariantly differentiate signals as long as we appropriately switch the representation. In the sequel, we will for brevity refer to the parameter $\mu$ characterizing the representation as its \emph{global charge}. 

It is convenient to also consider another kind of charge, associated with angular dependence of the signal with respect to rotations about fixed points; let us call it \emph{local charge} $\eta$ in contrast to the above global charge $\mu$. Namely, for any fixed $\mathbf x_0\in \mathbb R^2$, decompose the module $V_\mu$ as 
\begin{equation}\label{eq:chargevmuoplus}
V_\mu=\bigoplus_{\eta\in\mathbb Z}V_{\mu,\eta}^{(\mathbf x_0)},
\end{equation}
where 
\begin{equation}\label{eq:chargevmueta}
V_{\mu,\eta}^{(\mathbf x_0)}=R_{(\mathbf x_0,1)}V_{\mu,\eta}^{(0)},
\end{equation}
and
\begin{equation}\label{eq:chargevmueta0}
V_{\mu,\eta}^{(0)}=\{\mathbf\Phi\in V_\mu|\mathbf\Phi(\mathcal A_{(0,e^{i\phi})^{-1}}\mathbf x)=e^{-i\eta\phi}\mathbf\Phi(\mathbf x)\; \forall \phi \}.\end{equation}
Writing $\mathbf x_0=(x_0,y_0),$ we can characterize $V_{\mu,\eta}^{(\mathbf x_0)}$ as the eigenspace of the operator $$J_\phi^{(\mathbf x_0)}:=R_{(\mathbf x_0,1)}J_\phi R_{(\mathbf x_0,1)^{-1}}= -i(x-x_0)\partial_y+i(y-y_0)\partial_x$$ corresponding to the eigenvalue $\eta.$  
The operator $J_\phi^{(\mathbf x_0)}$ has the same commutation relations with $\partial_z, \partial_{\overline z}$ as $J_\phi$: 
$$[\partial_z,J_\phi^{(\mathbf x_0)}]=\partial_z,\quad [\partial_{\overline{z}},J_\phi^{(\mathbf x_0)}]=-\partial_{\overline{z}}.$$
We can then describe the structure of equivariant maps \eqref{eq:chargedzvmu} with respect to decomposition \eqref{eq:chargevmuoplus} as follows: for any $\mathbf x_0$, the decrease or increase of the global charge by the respective ladder operator is compensated by the opposite effect of this operator on the local charge, i.e. $\partial_z$ maps $V_{\mu,\eta}^{(\mathbf x_0)}$ to $V_{\mu+1,\eta-1}^{(\mathbf x_0)}$ while $\partial_{\overline z}$ maps $V_{\mu,\eta}^{(\mathbf x_0)}$ to $V_{\mu-1,\eta+1}^{(\mathbf x_0)}$:
\begin{equation}\label{eq:chargeladder}\partial_z V_{\mu,\eta}^{(\mathbf x_0)}\to V_{\mu+1,\eta-1}^{(\mathbf x_0)}, \quad \partial_{\overline z}: V_{\mu,\eta}^{(\mathbf x_0)}\to V_{\mu-1,\eta+1}^{(\mathbf x_0)}.
\end{equation}
We interpret these identities as \emph{conservation of the total charge}, $\mu+\eta$. We remark that there is some similarity between our total charge and the total angular momentum in quantum mechanics; the total angular momentum there consists of the spin component and the orbital component that are analogous to our global and local charge, respectively.

Now we give a heuristic argument showing how to express an arbitrary equivariant map $f:V\to U$ using our equivariant differentiation. As discussed in the previous subsection, the task of expressing $f$ reduces to expressing $f_{\mathrm{loc}}$ using formulas \eqref{eq:chargefloc},\eqref{eq:chargefphifloc}. Let a signal $\mathbf \Phi$ be analytic as a function of the real variables $x,y$, then it can be Taylor expanded as
\begin{equation}\label{eq:chargetaylor}\mathbf\Phi=\sum_{a,b=0}^\infty \frac{1}{a!b!}\partial_z^a\partial_{\overline z}^b\mathbf \Phi(0)\mathbf \Phi_{a,b},\end{equation}
with the basis signals $\mathbf\Phi_{a,b}$ given by $$\mathbf\Phi_{a,b}(z)=z^a\overline{z}^b.$$ The signal $\mathbf \Phi$ is fully determined by the coefficients $\partial_z^a\partial_{\overline z}^b\mathbf \Phi(0),$ so the map $f_{\mathrm{loc}}$ can be expressed as a function of these coefficients: 
\begin{equation}\label{eq:chargefloctilde}
f_{\mathrm{loc}}(\mathbf \Phi)=\widetilde f_{\mathrm{loc}}\big((\partial_z^a\partial_{\overline z}^b\mathbf \Phi(0))_{a,b=0}^\infty\big).\end{equation} At $\mathbf x_0=0,$ the signals $\mathbf\Phi_{a,b}$ have local charge $\eta=a-b,$ and, if viewed as elements of $V_{\mu=0}$, transform under rotations by $$R_{(0,e^{i\phi})}\mathbf\Phi_{a,b}=e^{-i(a-b)\phi}\mathbf\Phi_{a,b}.$$   
Accordingly, if we write $\mathbf\Phi$ in the form $\mathbf\Phi=\sum_{a,b}c_{a,b}\mathbf\Phi_{a,b},$ then 
$$R_{(0,e^{i\phi})}\mathbf\Phi=\sum_{a,b}e^{-i(a-b)\phi}c_{a,b}\mathbf\Phi_{a,b}.$$ 
It follows that the SO(2)-invariance of $f_{\mathrm{loc}}$ is equivalent to $\widetilde f_{\mathrm{loc}}$ being invariant with respect to simultaneous multiplication of the arguments by the factors $e^{-i(a-b)\phi}$:
$$\widetilde f_{\mathrm{loc}}\big((e^{-i(a-b)\phi}c_{a,b})_{a,b=0}^\infty\big)=\widetilde f_{\mathrm{loc}}\big((c_{a,b})_{a,b=0}^\infty\big)\quad\forall\phi.$$
Having determined the invariant map $\widetilde f_{\mathrm{loc}}$, we can express the value of $f(\mathbf\Phi)$ at an arbitrary point $\mathbf x\in\mathbb R^2$ by
\begin{equation}\label{eq:chargeffloc}
f(\mathbf\Phi)(\mathbf x)=\widetilde f_{\mathrm{loc}}\big((\partial_z^a\partial_{\overline z}^b\mathbf \Phi(\mathbf x))_{a,b=0}^\infty\big).
\end{equation}
Thus, the map $f$ can be expressed, at least heuristically, by first computing various derivatives of the signal and then applying to them the invariant map $\widetilde f_{\mathrm{loc}}$, independently at each $\mathbf x\in \mathbb R^2$. 

The expression \eqref{eq:chargeffloc} has the following interpretation in terms of information flow and the two different kinds of charges introduced above. Given an input signal $\mathbf\Phi\in V$ and $\mathbf x\in \mathbb R^2$, the signal has global charge $\mu=0$, but, in general, contains multiple components having different values of the local charge $\eta$ with respect to $\mathbf x$, according to the decomposition $V=V_{\mu=0}=\oplus_{\eta\in\mathbb Z}V_{0,\eta}^{(\mathbf x)}.$ By \eqref{eq:chargeladder}, a differential operator $\partial_z^a\partial_{\overline z}^b$ maps the space $V_{0,\eta}^{(\mathbf x)}$ to the space $V_{a-b,\eta+b-a}^{(\mathbf x)}.$ However, if a signal $\mathbf\Psi\in V_{a-b,\eta+b-a}^{(\mathbf x)}$ is continuous at $\mathbf x$, then $\mathbf\Psi$ must vanish there unless $\eta+b-a=0$ (see the definition \eqref{eq:chargevmueta},\eqref{eq:chargevmueta0}), i.e., only information from the $V_{0,\eta}^{(\mathbf x)}$--component of $\mathbf\Phi$ with $\eta=a-b$ is observed in $\partial_z^a\partial_{\overline z}^b\mathbf\Phi(\mathbf x)$. Thus, at each point $\mathbf x$, the differential operator $\partial_z^a\partial_{\overline z}^b$ can be said to transform information contained in $\mathbf\Phi$ and associated with global charge $\mu=0$ and local charge $\eta=a-b$ into  information associated with global charge $\mu=a-b$ and local charge $\eta=0$. This transformation is useful to us because the local charge only reflects the structure of the input signal, while the global charge is a part of the architecture of the computational model and can be used to directly control the information flow. The operators $\partial_z^a\partial_{\overline z}^b$ deliver to the point $\mathbf x$ information about the signal values away from this point -- similarly to how this is done by local convolutions in the convnets of Section \ref{sec:translations} -- but now this information flow is equivariant with respect to the action of $\mathrm{SO}(2)$. 

By \eqref{eq:chargeffloc}, the SE(2)--equivariant map $f$ can be heuristically decomposed into the family of SE(2)--equivariant differentiations producing ``local features'' $\partial_z^a\partial_{\overline z}^b\mathbf \Phi(\mathbf x)$ and followed by the SO(2)--invariant map $\widetilde f_{\mathrm{loc}}$ acting independently at each $\mathbf x$. In the sequel, we use this decomposition as a general strategy in our construction of the finite convnet-like approximation model in Section \ref{sec:chargeconvnet} -- the ``\emph{charge--conserving convnet}'' -- and in the proof of its universality in Section \ref{sec:chargeresult}.

The Taylor expansion \eqref{eq:chargetaylor} is  not rigorously applicable to generic signals $\mathbf\Phi\in L^2(\mathbb R^{\nu}, \mathbb R^{d_V})$. Therefore, we will add smoothing in our convnet-like model, to be performed before the differentiation operations. This will be discussed below in Section \ref{sec:chargediscr}. Also, we will discuss there the discretization of the differential operators, in order to formulate the charge--conserving convnet as a finite computational model.

The invariant map $\widetilde f_{\mathrm{loc}}$ can be approximated using invariant polynomials, as 
we discuss  in Section \ref{sec:chargepolyinv} below. 
As discussed earlier in Section \ref{sec:compact}, invariant polynomials can be produced from a set of generating polynomials; however,  in the present setting this set is rather large and grows rapidly as charge is increased, so it will be more efficient to just generate new invariant polynomials by multiplying general polynomials of lower degree subject to charge conservation. 
As a result, we will approximate the map $\widetilde f_{\mathrm{loc}}$  by a series of multiplication layers in the charge-conserving convnet.

\subsubsection{Discretized differential operators}\label{sec:chargediscr}
Like in Section \ref{sec:translations}, we aim to formulate the approximation model as a computation which is fully finite except for the initial discretization of the input signal. Therefore we need to discretize the equivariant differential operators considered in Section \ref{sec:chargediff}.   
Given a discretized signal   $\mathbf \Phi: (\lambda \mathbb Z)^2\to \mathbb R^{d_V}$ on the grid of spacing $\lambda$, and writing grid points as ${\gamma}=(\lambda\gamma_x,\lambda\gamma_y)\in (\lambda\mathbb Z)^2$, we define the discrete derivatives $\partial_{z}^{(\lambda)}, \partial_{\overline{z}}^{(\lambda)}$ by 
\begin{align}\label{eq:dzlam}
\partial_{z}^{(\lambda)}\mathbf\Phi(\lambda \gamma_x, \lambda \gamma_y) = & \frac{1}{4\lambda}\bigg( \mathbf\Phi\big(\lambda (\gamma_x+1), \lambda \gamma_y\big)-\mathbf\Phi\big(\lambda (\gamma_x-1), \lambda \gamma_y\big)\\
& -i\Big(\mathbf\Phi\big(\lambda \gamma_x, \lambda (\gamma_y+1)\big)-\mathbf\Phi\big(\lambda \gamma_x, \lambda (\gamma_y-1)\big)\Big)\bigg),\nonumber\\ \label{eq:dzovlam}
\partial_{\overline{z}}^{(\lambda)}\mathbf\Phi(\lambda \gamma_x, \lambda \gamma_y) = & \frac{1}{4\lambda}\bigg( \mathbf\Phi\big(\lambda (\gamma_x+1), \lambda \gamma_y\big)-\mathbf\Phi\big(\lambda (\gamma_x-1), \lambda \gamma_y\big)\\
& +i\Big(\mathbf\Phi\big(\lambda \gamma_x, \lambda (\gamma_y+1)\big)-\mathbf\Phi\big(\lambda \gamma_x, \lambda (\gamma_y-1)\big)\Big)\bigg).\nonumber
\end{align}
Since general signals $\mathbf\Phi\in L^2(\mathbb R^\nu,\mathbb R^{d_V})$ are not differentiable, we will smoothen them prior to differentiating. Smoothing will also be a part of the computational model and can be implemented by local operations as follows. Consider the discrete Laplacian $\Delta^{(\lambda)}$ defined by 
\begin{align}\label{eq:laplacelam}
\Delta^{(\lambda)}\mathbf\Phi(\lambda \gamma_x, \lambda \gamma_y) = & \frac{1}{\lambda^2}\Big( \mathbf\Phi\big(\lambda (\gamma_x+1), \lambda \gamma_y\big)+\mathbf\Phi\big(\lambda (\gamma_x-1), \lambda \gamma_y\big)\\
& +\mathbf\Phi\big(\lambda \gamma_x, \lambda (\gamma_y+1)\big)+\mathbf\Phi\big(\lambda \gamma_x, \lambda (\gamma_y-1)\big)-4\mathbf\Phi(\lambda \gamma_x, \lambda \gamma_y)\Big).\nonumber
\end{align}
Then, a single smoothing layer can be implemented by the positive definite operator $1+\tfrac{\lambda^2}{8}\Delta^{(\lambda)}:$
\begin{align}\label{eq:charge1plusl2delta}
\Big(1+\frac{\lambda^2}{8}\Delta^{(\lambda)}\Big)\mathbf\Phi(\lambda \gamma_x, \lambda \gamma_y) = & \frac{1}{8}\Big( \mathbf\Phi\big(\lambda (\gamma_x+1), \lambda \gamma_y\big)+\mathbf\Phi\big(\lambda (\gamma_x-1), \lambda \gamma_y\big)\nonumber\\
& +\mathbf\Phi\big(\lambda \gamma_x, \lambda (\gamma_y+1)\big)+\mathbf\Phi\big(\lambda \gamma_x, \lambda (\gamma_y-1)\big)+4\mathbf\Phi(\lambda \gamma_x, \lambda \gamma_y)\Big).
\end{align}
We will then replace the differential operators $\partial_z^a\partial_{\overline z}^b$ used in the heuristic argument in Section \ref{sec:chargediff} by the discrete operators \begin{equation}\label{eq:chargedefltau}
\mathcal L_\lambda^{(a,b)}=(\partial_{z}^{(\lambda)})^a(\partial_{\overline{z}}^{(\lambda)})^b \Big(1+\frac{\lambda^2}{8}\Delta^{(\lambda)}\Big)^{\lceil 4/\lambda^2\rceil} P_\lambda.\end{equation}
Here $P_\lambda$ is the discretization projector \eqref{eq:ptau}. The power $\lceil 4/\lambda^2\rceil$ (i.e., the number of smoothing layers) scales with $\lambda$ so that in the continuum limit $\lambda\to 0$ the operators $\mathcal L_\lambda^{(a,b)}$ converge to convolution operators. Specifically, consider the function $\Psi_{a,b}:\mathbb R^2\to\mathbb R$: \begin{equation}\label{eq:chargepsiab}\Psi_{a,b}=\partial_z^a\partial_{\overline{z}}^b\Big(\frac{1}{2\pi}e^{-|\mathbf x|^2/2}\Big),\end{equation} where we identify $|\mathbf x|^2\equiv z\overline{z}$. Define the operator $\mathcal L_0^{(a,b)}$  by $\mathcal L_0^{(a,b)}\mathbf\Phi=\mathbf\Phi*\Psi_{a,b},$ i.e.
\begin{equation}\label{eq:chargel0ab}
\mathcal L_0^{(a,b)}\mathbf\Phi(\mathbf x) = \int_{\mathbb R^2}\mathbf\Phi(\mathbf x-\mathbf y)\Psi_{a,b}(\mathbf y)d^2\mathbf y.
\end{equation}
Then we have the following lemma proved in Appendix \ref{sec:clt}.

\begin{lemma}\label{lm:chargeclt} Let $a,b$ be fixed nonnegative integers. For all $\lambda\in [0,1]$, consider the linear operators $\mathcal L_\lambda^{(a,b)}$ as operators from $L^2(\mathbb R^2,\mathbb R^{d_V})$ to $L^\infty(\mathbb R^2,\mathbb R^{d_V})$. Then:
\begin{enumerate}
\item The operators $\mathcal L_\lambda^{(a,b)}$ are bounded uniformly in $\lambda$;
\item As $\lambda\to 0,$ the operators $\mathcal L_\lambda^{(a,b)}$ converge strongly to the operator $\mathcal L_0^{(a,b)}$. Moreover, this convergence is uniform on compact sets $K\subset V$ (i.e., $\lim_{\lambda\to 0}\sup_{\mathbf\Phi\in K}\|\mathcal L_\lambda^{(a,b)}\mathbf\Phi-\mathcal L_0^{(a,b)}\mathbf\Phi\|_\infty= 0$).
\end{enumerate}
\end{lemma}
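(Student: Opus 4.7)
The plan is to reduce the lemma, via Fourier analysis, to two statements: a uniform $L^{2}\!\to\!L^{\infty}$ operator bound on $\mathcal L_{\lambda}^{(a,b)}$, and pointwise strong convergence $\mathcal L_{\lambda}^{(a,b)}\mathbf\Phi\to\mathcal L_{0}^{(a,b)}\mathbf\Phi$ on a dense subset of $V$. Claim~1 is immediate from the first statement. For claim~2, once the family is uniformly bounded and pointwise convergent on a dense subset, the continuity argument from earlier in the paper extends pointwise convergence to all of $V$; a three-$\epsilon$ argument on the compact $K$, using a finite $\epsilon$-net whose fineness is calibrated by the uniform operator norm, then upgrades this to uniform convergence on $K$.

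The computation is carried out in Fourier space on the Brillouin zone $[-\pi/\lambda,\pi/\lambda]^{2}$. The Fourier symbols of the discrete derivatives are $(i\sin(\lambda\xi_{x})+\sin(\lambda\xi_{y}))/(2\lambda)$ and $(i\sin(\lambda\xi_{x})-\sin(\lambda\xi_{y}))/(2\lambda)$, which converge pointwise to the continuum symbols $(i\xi_{x}\pm\xi_{y})/2$. The smoothing symbol $s_{\lambda}(\xi):=1-(1-\cos(\lambda\xi_{x}))/4-(1-\cos(\lambda\xi_{y}))/4$ expands near $\xi=0$ as $1-\lambda^{2}|\xi|^{2}/8+O(\lambda^{4}|\xi|^{4})$, so by the standard local-CLT identity $(1-t)^{1/t}\to e^{-1}$ its $\lceil 4/\lambda^{2}\rceil$-th power converges pointwise to $e^{-|\xi|^{2}/2}$. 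Together with the sinc aliasing factor produced by $P_{\lambda}$ (bounded by $1$, tending pointwise to $1$), the full Fourier multiplier of $\mathcal L_{\lambda}^{(a,b)}$ then converges pointwise to $\widehat{\Psi_{a,b}}(\xi)=((i\xi_{x}+\xi_{y})/2)^{a}((i\xi_{x}-\xi_{y})/2)^{b}e^{-|\xi|^{2}/2}$.

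For both the uniform bound and the pointwise convergence, I represent $\mathcal L_{\lambda}^{(a,b)}\mathbf\Phi(\lambda\gamma)=\int k_{\lambda}(x-\lambda\gamma)\mathbf\Phi(x)\,dx$ for a translation-invariant $L^{2}$-kernel $k_{\lambda}$ and use $\|\mathcal L_{\lambda}^{(a,b)}\mathbf\Phi\|_{\infty}\le\|k_{\lambda}\|_{2}\|\mathbf\Phi\|_{2}$. By Plancherel, $\|k_{\lambda}\|_{2}^{2}$ is an integral of the squared Fourier symbol, which I split into $|\lambda\xi|\le 1$, where the iterated smoothing factor is dominated by a Gaussian $e^{-c|\xi|^{2}}$ that controls the polynomial factor $|\xi|^{a+b}$ from the $a+b$ discrete derivatives, and $|\lambda\xi|\ge 1$, where $s_{\lambda}(\xi)\le q<1$ uniformly in $\lambda$, so $s_{\lambda}(\xi)^{\lceil 4/\lambda^{2}\rceil}$ decays faster than any polynomial in $\lambda$ and absorbs the polynomial prefactor. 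The same regional splitting supplies a uniform $L^{1}$ dominator for Plancherel's representation of $(\mathcal L_{\lambda}^{(a,b)}-\mathcal L_{0}^{(a,b)})\mathbf\Phi$ applied to Schwartz $\mathbf\Phi$, so dominated convergence yields pointwise convergence on the dense Schwartz subset.

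The main obstacle I anticipate is precisely this matching of the two spectral regimes across the entire Brillouin zone, in particular verifying that $s_{\lambda}$ is uniformly bounded away from $1$ once $|\lambda\xi|\ge 1$ (including near the zone boundary), so that the exponential decay from the iterated smoothing dominates the polynomial factor $|\xi|^{a+b}$ uniformly in $\lambda\in(0,1]$. A secondary, more bookkeeping, issue is carrying the $P_{\lambda}$-induced sinc aliasing factor through the symbol calculations; it is harmless analytically but has to be tracked to justify each step.
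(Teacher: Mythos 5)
Your proposal is correct and follows essentially the same route as the paper: Fourier multipliers on the Brillouin zone $[-\tfrac{\pi}{\lambda},\tfrac{\pi}{\lambda}]^2$, the Cauchy--Schwarz/Plancherel bound $\|\mathcal L_\lambda^{(a,b)}\mathbf\Phi\|_\infty\le\|k_\lambda\|_2\|\mathbf\Phi\|_2$ for claim 1, dominated convergence of the symbols for claim 2, and an $\epsilon$-net argument for uniformity on compacts. The only notable difference is that the paper avoids your two-regime split of the Brillouin zone (and the need to verify $s_\lambda\le q<1$ for $|\lambda\xi|\ge1$) by using the single inequality $|\sin t|\ge\tfrac{2|t|}{\pi}$ on $[-\tfrac{\pi}{2},\tfrac{\pi}{2}]$, which produces one $\lambda$-independent Gaussian dominator $\bigl(\tfrac{|p_x|+|p_y|}{2}\bigr)^{2(a+b)}e^{-4(p_x^2+p_y^2)/\pi^2}$ valid on the whole zone and serving simultaneously for the uniform bound and the dominated-convergence step, and it proves $L^2$ convergence of the kernels directly rather than detouring through Schwartz functions.
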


This lemma is essentially just a slight modification of Central Limit Theorem. It will be convenient to consider $L^\infty$ rather than $L^2$ in the target space because of the pointwise polynomial action of the layers following the smoothing and differentiation layers.

\subsubsection{Polynomial approximations on $\mathrm{SO}(2)$-modules}\label{sec:chargepolyinv}
Our derivation of the approximating model in Section \ref{sec:chargediff} was based on identifying the $\mathrm{SO}(2)$-invariant map $f_{\mathrm{loc}}$ introduced in \eqref{eq:chargefloc} and expressing it via $\widetilde f_{\mathrm{loc}}$ by Eq.\eqref{eq:chargefloctilde}. It is convenient to approximate the map $\widetilde f_{\mathrm{loc}}$ by invariant polynomials on appropriate $\mathrm{SO}(2)$-modules, and in this section we state several general facts relevant for this purpose.

First, the following lemma is obtained immediately using symmetrization and the Weierstrass theorem (see e.g. the proof of Proposition \eqref{eq:polyinvansatz}).
\begin{lemma}\label{lm:chargeso2poly} Let $f:W\to \mathbb R$ be a continuous $\mathrm{SO}(2)$-invariant map on a real finite-dimensional $\mathrm{SO}(2)$-module $W$. Then $f$ can be approximated by polynomial invariants on $W$.
\end{lemma}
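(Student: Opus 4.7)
The plan is to follow exactly the template used in the invariant part of the proof of Proposition \ref{th:invar}, but stop at the stage of invariant polynomial approximation rather than continuing on to approximate the polynomial via the neural network ansatz.

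First, fix a compact set $K \subset W$ and $\epsilon > 0$. Since $\mathrm{SO}(2)$ is compact and acts continuously on $W$, the symmetrized set $K_{\mathrm{sym}} = \bigcup_{\theta \in \mathrm{SO}(2)} R_\theta K$ is again compact, being the continuous image of the compact set $\mathrm{SO}(2) \times K$ under $(\theta, \mathbf{x}) \mapsto R_\theta \mathbf{x}$. By the classical Weierstrass (Stone--Weierstrass) theorem applied to $W \cong \mathbb{R}^{\dim W}$, there is a polynomial $r: W \to \mathbb{R}$ with $|r(\mathbf{x}) - f(\mathbf{x})| \le \epsilon$ for all $\mathbf{x} \in K_{\mathrm{sym}}$.

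Next, symmetrize $r$ over the group using the normalized Haar measure on $\mathrm{SO}(2)$:
\[
r_{\mathrm{sym}}(\mathbf{x}) = \int_{\mathrm{SO}(2)} r(R_\theta \mathbf{x}) \, d\theta.
\]
This map is manifestly $\mathrm{SO}(2)$-invariant by translation-invariance of the Haar measure. It is also a polynomial of the same degree as $r$: since $\mathbf{x} \mapsto R_\theta \mathbf{x}$ is linear, each monomial in $r(R_\theta \mathbf{x})$ has coefficients that are continuous (in fact trigonometric polynomial) functions of $\theta$, and integrating these scalar coefficients over $\mathrm{SO}(2)$ yields real numbers, leaving a polynomial in $\mathbf{x}$ of the same degree.

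Finally, for every $\mathbf{x} \in K$, using $\mathrm{SO}(2)$-invariance of $f$ together with the fact that $R_\theta \mathbf{x} \in K_{\mathrm{sym}}$ for all $\theta$, we estimate
\[
|r_{\mathrm{sym}}(\mathbf{x}) - f(\mathbf{x})| = \Big| \int_{\mathrm{SO}(2)} \big( r(R_\theta \mathbf{x}) - f(R_\theta \mathbf{x}) \big) \, d\theta \Big| \le \int_{\mathrm{SO}(2)} |r(R_\theta \mathbf{x}) - f(R_\theta \mathbf{x})| \, d\theta \le \epsilon.
\]
Since $K$ and $\epsilon$ were arbitrary, this gives approximation by invariant polynomials in the topology of uniform convergence on compact sets. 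There is no genuine obstacle in the argument; the only point requiring a moment's care is verifying that polynomiality is preserved by the Haar averaging, which follows from the linearity of the $\mathrm{SO}(2)$-action and the finite-dimensionality of each homogeneous component.
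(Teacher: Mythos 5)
Your argument is correct and is exactly the one the paper intends: the paper dispatches this lemma by citing ``symmetrization and the Weierstrass theorem'' as in the first part of the proof of Proposition \ref{th:invar}, which is precisely the compact-set symmetrization, Stone--Weierstrass approximation, and Haar-averaging argument you spell out. The verification that Haar averaging preserves polynomiality is also handled the same way there (the averaged map is a fixed-degree polynomial since $R_\theta$ acts linearly), so there is nothing to add.
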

We therefore focus on constructing general polynomial invariants on $\mathrm{SO}(2)$-modules. This can be done in several ways; we will describe just one particular construction performed in a ``layerwise'' fashion resembling convnet layers.

It is convenient to first consider the case of $\mathrm{SO}(2)$-modules over the field $\mathbb C$, since the representation theory of the group $\mathrm{SO}(2)$ is especially easily described when the underlying field is $\mathbb C$. Let us identify elements of $\mathrm{SO}(2)$ with the unit complex numbers $e^{i\phi}$. Then all complex irreducible representations of $\mathrm{SO}(2)$ are one-dimensional characters indexed by the number $\xi\in\mathbb Z$:
\begin{equation}\label{eq:charge1drep}
R_{e^{i\phi}}\mathbf x=e^{i\xi\phi}\mathbf x.\end{equation}
The representation $R$ induces the dual representation acting on functions $f(\mathbf x)$: 
$$R^*_{e^{i\phi}}f(\mathbf x)=f(R_{e^{-i\phi}}\mathbf x).$$
In particular, if $z_\xi$ is the variable associated with the one-dimensional space where representation \eqref{eq:charge1drep} acts, then it is transformed by the dual representation as  
$$R^*_{e^{i\phi}}z_\xi=e^{-i\xi\phi}z_\xi.$$
Now let $W$ be a general finite--dimensional $\mathrm{SO}(2)$--module over $\mathbb C.$ Then $W$ can be decomposed as   
\begin{equation}\label{eq:chargewc}
W=\bigoplus_{\xi} W_\xi,
\end{equation}
where $W_\xi\cong \mathbb C^{d_{\xi}}$ is the isotypic component of the representation \eqref{eq:charge1drep}. Let $z_{\xi k}, k=1,\ldots,d_\xi,$ denote the variables associated with the subspace $W_\xi$. If $f$ is a polynomial on $W$, we can write it as a linear combination of monomials:
\begin{equation}\label{eq:chargepolyexp}
f=\sum_{\mathbf{a}=(a_{\xi k})}c_{\mathbf a}\prod_{\xi,k}z_{\xi k}^{a_{\xi k}}.
\end{equation}
Then the dual representation acts on $f$ by
$$R^*_{e^{i\phi}}f=\sum_{\mathbf{a}=(a_{\xi k})}e^{-i\sum_{\xi,k}\xi a_{\xi k}\phi}c_{\mathbf a}\prod_{\xi,k}z_{\xi k}^{a_{\xi k}}.$$
We see that a polynomial is invariant iff it consists of invariant monomials, and a monomial is invariant iff $\sum_{\xi,k}\xi a_{\xi k}=0$.

We can generate an arbitrary $\mathrm{SO}(2)$-invariant polynomial on $W$ in the following ``layer-wise'' fashion. Suppose that $\{f_{t-1,\xi,n}\}_{n=1}^{N_{t-1}}$ is a collection of polynomials generated after $t-1$ layers so that 
\begin{equation}\label{eq:chargerftmn}
R^*_{e^{i\phi}}f_{t-1,\xi,n}=e^{-i\xi \phi}f_{t-1,\xi,n}
\end{equation}
for all $\xi,n$. Consider new polynomials $\{f_{t,\xi,n}\}_{n=1}^{N_{t}}$ obtained from $\{f_{t-1,\xi,n}\}_{n=1}^{N_{t-1}}$ by applying the second degree expressions
\begin{equation}\label{eq:chargeftmun}f_{t,\xi,n}=w^{(t)}_{0,n}\mathbf{1}_{\xi=0}+\sum_{n_1=1}^{N_{t-1}}
w_{1, \xi,n,n_1}^{(t)}f_{t-1,\xi,n_1}+\sum_{{\xi_1+\xi_2=\xi} }\sum_{n_1=1}^{{N_{t-1}}}\sum_{n_2=1}^{N_{t-1}}
w_{2, \xi_1,\xi_2,n,n_1,n_2}^{(t)}f_{t-1,\xi_1,n_1}f_{t-1,\xi_2,n_2}
\end{equation}
with some (complex) coefficients $w^{(t)}_{0,n}, w_{1, \xi,n,n_1}^{(t)}, w_{2, \xi_1,\xi_2,n,n_1,n_2}^{(t)}$. The first term is present only for $\xi=0$. The third term includes the ``charge conservation'' constraint $\xi=\xi_1+\xi_2$. 
It is clear that ones condition \eqref{eq:chargerftmn} holds for $\{f_{t-1,\xi,n}\}_{n=1}^{N_{t-1}}$, it also holds for $\{f_{t,\xi,n}\}_{n=1}^{N_{t}}$. 

On the other hand, suppose that the initial set $\{f_{1,\xi,n}\}_{n=1}^{N_1}$ includes all variables $z_{\xi k}$. Then for any invariant polynomial $f$ on $W$, we can arrange the parameters $N_t$ and the coefficients in Eq.\eqref{eq:chargeftmun} so that at some $t$ we obtain $f_{t,\xi=0,1}=f$. Indeed, first note that thanks to the second term in Eq.\eqref{eq:chargeftmun} it suffices to show this for the case when $f$ is an invariant monomial (since any invariant polynomial is a linear combination of invariant monomials, and  the second term allows us to form and pass forward such linear combinations). If $f$ is a constant, then it can be produced using the first term in Eq.\eqref{eq:chargeftmun}. If $f$ is a monomial of a positive degree, then it can be produced by multiplying lower degree monomials, which is afforded by the third term in Eq.\eqref{eq:chargeftmun}.

Now we discuss the case of the underlying field $\mathbb R$. In this case, apart from the trivial one-dimensional representation, all irreducible representations of $\mathrm{SO}(2)$ are two-dimensional and indexed by $\xi=1,2,\ldots$: 
\begin{equation}\label{eq:chargereiphixy}
R_{e^{i\phi}}\begin{pmatrix}x\\y\end{pmatrix}=
\begin{pmatrix}\cos \xi\phi&\sin \xi\phi\\-\sin \xi\phi&\cos \xi\phi\end{pmatrix}
\begin{pmatrix}x\\y\end{pmatrix}.\end{equation}
It is convenient to diagonalize such a representation, turning it into a pair of complex conjugate one-dimensional representations:
\begin{equation}\label{eq:chargereiphizz}R_{e^{i\phi}}\begin{pmatrix}z\\ \overline{z}\end{pmatrix}=
\begin{pmatrix}e^{-i\xi\phi}&0\\0&e^{i\xi\phi}\end{pmatrix}
\begin{pmatrix}z\\ \overline{z}\end{pmatrix},\end{equation}
where $$z = x+iy, \quad \overline{z}=x-iy.$$

More generally, any real $\mathrm{SO}(2)$--module $W$ can be decomposed exactly as in \eqref{eq:chargewc} into isotypic components $W_\xi$ associated with complex characters, but with the additional constraints \begin{equation}\label{eq:chargewnconj}W_\xi=\overline{W_{-\xi}},\end{equation}
meaning that $d_\xi=d_{-\xi}$ and $$W_\xi=W_{\pm \xi, \mathrm{Re}}+i W_{\pm \xi, \mathrm{Im}}, \quad W_{-\xi}=W_{\pm \xi, \mathrm{Re}}-i W_{\pm \xi, \mathrm{Im}}, \quad (\xi\ne 0)$$
with some real $d_\xi$--dimensional spaces $W_{\pm \xi, \mathrm{Re}},W_{\pm \xi, \mathrm{Im}}$. 

Any polynomial on $W$ can then be written in terms of real variables $z_{0,k}$ corresponding to $\xi=0$ and complex variables \begin{equation}\label{eq:chargezmuk}z_{\xi, k}=x_{\xi k}+iy_{\xi k}, \quad z_{-\xi,k}=x_{\xi k}-iy_{\xi k} \quad (\xi> 0)\end{equation} constrained by the relations $$z_{\xi,k}=\overline{z_{-\xi,k}}.$$
Suppose that a polynomial $f$ on $W$ is expanded over monomials in $z_{\xi,k}$ as in Eq.\eqref{eq:chargepolyexp}. This expansion is unique (the coefficients are given by $$c_{\mathbf a}=\Big(\prod_{\xi,k}\frac{\partial_{z_{\xi,k}}^{a_{\xi,k}}}{a_{\xi,k}!}\Big)f(0),$$ where $\partial _{z_{\xi,k}}=\frac{1}{2}(\partial_{x_{\xi k}}-i\partial_{y_{\xi k}})$  for $\xi> 0$ and $\partial _{z_{\xi,k}}=\frac{1}{2}(\partial_{x_{-\xi, k}}+i\partial_{y_{-\xi, k}})$  for $\xi< 0$). This implies that the condition for the polynomial $f$ to be invariant on $W$ is the same as in the previously considered complex case: the polynomial must consist of invariant monomials, and a monomial is invariant iff  $\sum_{\xi,k}\xi a_{\xi k}=0$.

Therefore, in the case of real $\mathrm{SO}(2)$-modules, any invariant polynomial can be generated using the same procedure described earlier for the complex case, i.e., by taking the complex extension of the module and iteratively generating  (complex) polynomials $\{f_{t,\xi,n}\}_{n=1}^{N_{t}}$ using Eq.\eqref{eq:chargeftmun}. The real part of a complex invariant polynomial on a real module is a real invariant polynomial. Thus, to ensure that in the case of real modules $W$ the procedure produces all real invariant polynomials, and only such polynomials, we can just add taking the real part of $f_{t,\xi=0,1}$ at the last step of the procedure.

\subsection{Charge-conserving convnet}\label{sec:chargeconvnet}

\begin{figure}
\centering
\includegraphics[width=\linewidth,clip,trim=35mm 55mm 72mm 70mm]{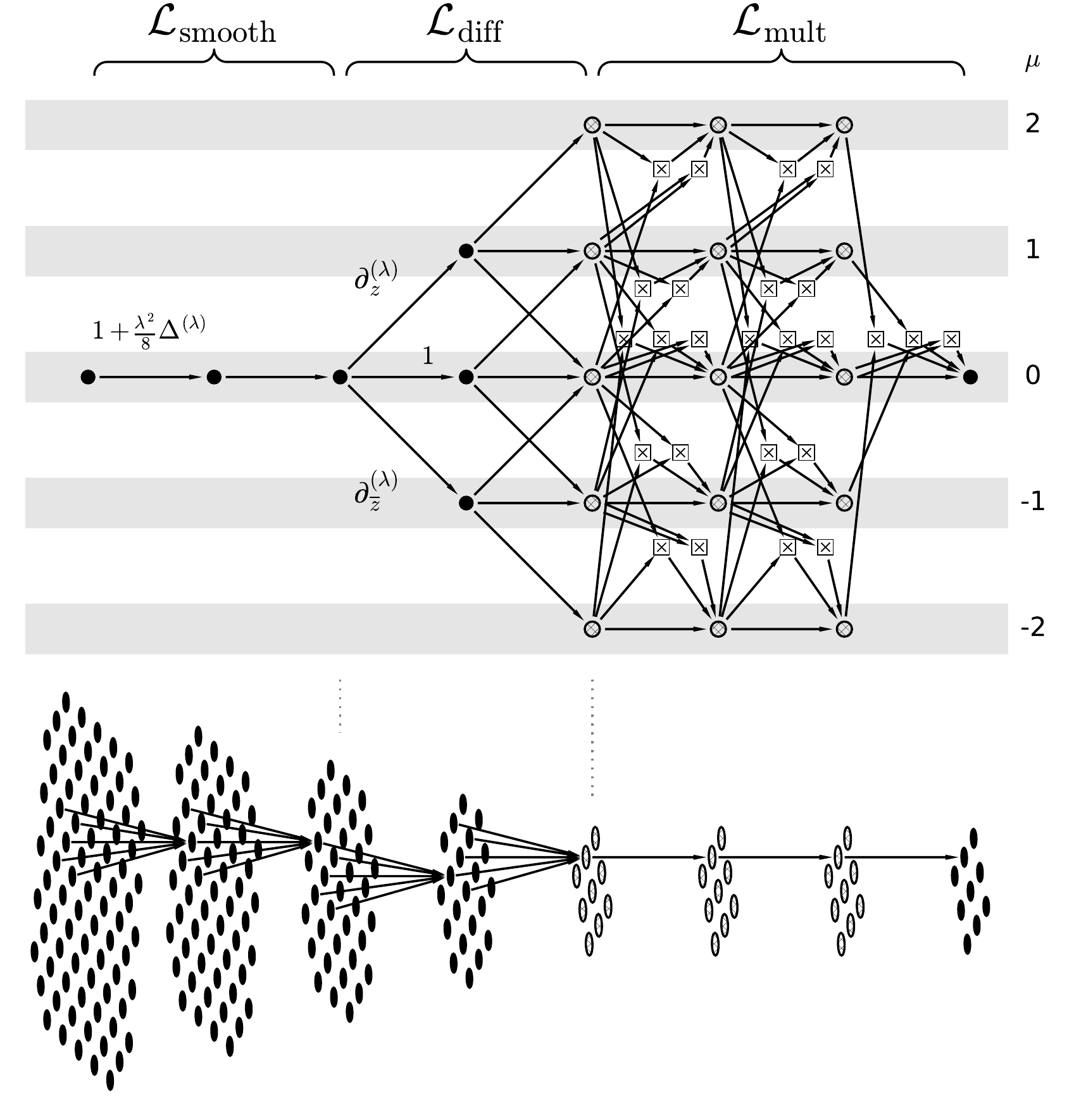}
\caption{Architecture of the charge-conserving convnet. The top figure shows the information flow in the fixed-charge subspaces of the feature space, while the bottom figure shows the same flow in the spatial coordinates. The smoothing layers only act on spatial dimensions, the multiplication layers only on feature dimensions, and the differentiation layers both on spatial and feature dimensions. Operation of smoothing and differentiation layers only involves nearest neighbors while in the multiplication layers the transitions are constrained by the requirement of charge conservation. The smoothing and differentiation layers are linear; the multiplication layers are not. The last multiplication layer only has zero-charge ($\mathrm{SO}(2)$--invariant) output. }
\centering
\label{fig:arch}
\end{figure}

We can now describe precisely our  convnet-like model for approximating arbitrary $\mathrm{SE}(2)$-equivariant continuous maps $f:V\to U$, where $V=L^2(\mathbb R^2,\mathbb R^{d_V}), U=L^2(\mathbb R^2,\mathbb R^{d_U})$. The overview of the model is given in Fig.\ref{fig:arch}. Like the models of Section \ref{sec:translations}, the present model starts with the discretization projection followed by some finite computation. The model includes three groups of layers: smoothing layers ($\mathcal L_{\mathrm{smooth}}$), differentiation layers ($\mathcal L_{\mathrm{diff}}$) and multiplication layers ($\mathcal L_{\mathrm{mult}}$). The parameters of the model are the lattice spacing $\lambda$, cutoff range $\Lambda$ of the output, dimension $d_{\mathrm{mult}}$ of auxiliary spaces, and the numbers $T_{\mathrm{diff}}, T_{\mathrm{mult}}$ of differentiation and multiplication layers. The overall operation of the model can be described as the chain
\begin{equation}\label{eq:chargefhatvp}
\widehat f:V\stackrel{P_{\lambda,\Lambda'}}{\longrightarrow}V_{\lambda,\Lambda'}(\equiv W_1)
\stackrel{\mathcal L_{\mathrm{smooth}}}{\longrightarrow}W_{\mathrm{smooth}}\stackrel{\mathcal L_{\mathrm{diff}}}{\longrightarrow}W_{\mathrm{diff}}\stackrel{\mathcal L_{\mathrm{mult}}}{\longrightarrow} U_{\lambda,\Lambda}.
\end{equation}  
We describe now all these layers in detail.

\subparagraph*{Initial projection.} The initial discretization projection $P_{\lambda,\Lambda'}$ is defined as explained in Section \ref{sec:translations} after Eq.\eqref{eq:convvtaul}. The input cutoff range $\Lambda'$ is given by $\Lambda'=\Lambda+(T_{\mathrm{diff}}+\lceil 4/\lambda^2\rceil)\lambda$. This padding ensures that the output cutoff range will be equal to the specified value $\Lambda$.  With respect to the spatial grid structure, the space $W_1$ can be decomposed as 
$$W_1=\oplus_{\gamma\in\lambda Z_{\lfloor\Lambda'/\lambda\rfloor}} \mathbb R^{d_V},$$
where $Z_L$ is the cubic subset of the grid defined in \eqref{eq:zl}.

\subparagraph*{Smoothing layers.} The model contains $\lceil 4/\lambda^2\rceil$ smoothing layers performing the same elementary smoothing operation $1+\tfrac{\lambda^2}{8}\Delta^{(\lambda)}$:
$$\mathcal L_{\mathrm{smooth}} = \Big(1+\frac{\lambda^2}{8}\Delta^{(\lambda)}\Big)^{\lceil 4/\lambda^2\rceil},$$
where the discrete Laplacian $\Delta^{(\lambda)}$ is defined as in Eq.\eqref{eq:laplacelam}.
In each layer the value of the transformed signal at the current spatial position is determined by the values of the signal in the previous layer at this position and its 4 nearest neigbors as given in Eq.\eqref{eq:charge1plusl2delta}. Accordingly, the domain size shrinks with each layer so that the output space of $\mathcal L_{\mathrm{smooth}}$ can be written as $$W_{\mathrm{smooth}}=\oplus_{\gamma\in\lambda Z_{\lfloor\Lambda''/\lambda\rfloor}} \mathbb R^{d_V},$$   
where $\Lambda''=\Lambda'-\lceil 4/\lambda^2\rceil\lambda=\Lambda+T_{\mathrm{diff}}\lambda.$

\subparagraph*{Differentiation layers.} The model contains $T_{\mathrm{diff}}$ differentiation layers computing the discretized derivatives $\partial_{z}^{(\lambda)}, \partial_{\overline{z}}^{(\lambda)}$ as defined in \eqref{eq:dzlam},\eqref{eq:dzovlam}. Like the smoothing layers, these derivatives shrink the domain, but additionally, as discussed in Section \ref{sec:chargediff}, they change the representation of the group $\mathrm{SE}(2)$ associated with the global charge $\mu$ (see Eq.\eqref{eq:chargedzvmu}).

Denoting the individual differentiation layers by $\mathcal L_{\mathrm{diff},t},t=1,\ldots,T_{\mathrm{diff}},$, their action  can be described as the chain
\begin{equation*}
\mathcal L_{\mathrm{diff}}:W_{\mathrm{smooth}}\stackrel{\mathcal L_{\mathrm{diff,1}}}{\longrightarrow}W_{\mathrm{diff},1}
\stackrel{\mathcal L_{\mathrm{diff,2}}}{\longrightarrow}W_{\mathrm{diff},2}\ldots
\stackrel{\mathcal L_{\mathrm{diff},T_{\mathrm{diff}}}}{\longrightarrow}W_{\mathrm{diff},T_{\mathrm{diff}}}(\equiv W_{\mathrm{diff}})
\end{equation*}  
We decompose each intermediate space $W_{\mathrm{diff},t}$ into subspaces characterized by degree $s$ of the derivative and by charge $\mu$:
\begin{equation}\label{eq:chargewdifftdecomp}
W_{\mathrm{diff},t} = \oplus_{s=0}^t\oplus_{\mu=-s}^s W_{\mathrm{diff},t,s,\mu}. 
\end{equation}
Each $W_{\mathrm{diff},t,s,\mu}$ can be further decomposed as a direct sum over the grid points:
\begin{equation}\label{eq:chargewdifftsmu}
W_{\mathrm{diff},t,s,\mu}= \oplus_{\gamma\in\lambda Z_{\lfloor\Lambda/\lambda\rfloor+T_{\mathrm{diff}}-t}} \mathbb C^{d_V}. 
\end{equation} 
Consider the operator $L_{\mathrm{diff},t}$ as a block matrix with respect to decomposition \eqref{eq:chargewdifftdecomp} of the input and output spaces $W_{\mathrm{diff},t-1}, W_{\mathrm{diff},t}$, and denote by $(\mathcal L_{\mathrm{diff},t})_{(s_{t-1},\mu_{t-1})\to(s_t,\mu_t)}$ the respective blocks. Then we define 
\begin{equation}\label{eq:chargeldiffblock}(\mathcal L_{\mathrm{diff},t})_{(s_{t-1},\mu_{t-1})\to(s_t,\mu_t)}=
\begin{cases}
\partial_z^{(\lambda)},& \text{ if } s_t=s_{t-1}+1, \mu_t=\mu_{t-1}+1,\\ 
\partial_{\overline{z}}^{(\lambda)},& \text{ if } s_t=s_{t-1}+1, \mu_t=\mu_{t-1}-1,\\
\mathbf 1,& \text{ if } s_t=s_{t-1}, \mu_t=\mu_{t-1},\\
0, & \text{ otherwise. }
\end{cases}
\end{equation}
With this definition, the final space $W_{\mathrm{diff},T_{\mathrm{diff}}}$ contains all discrete derivatives $(\partial_z^{(\lambda)})^{a}(\partial_{\overline{z}}^{(\lambda)})^{b}\mathbf \Phi$ of the smoothed signal $\mathbf \Phi\in W_{\mathrm{smooth}}$ of degrees $s=a+b\le T_{\mathrm{diff}}.$ Each such derivative can be obtained by arranging the elementary steps \eqref{eq:chargeldiffblock} in different order, so that the derivative will actually appear in $W_{\mathrm{diff},T_{\mathrm{diff}}}$ with the coefficient $\frac{T_{\mathrm{diff}}!}{a!b!(T_{\mathrm{diff}}-a-b)!}$.  This coefficient is not important for the subsequent exposition. 

\subparagraph*{Multiplication layers.} In contrast to the smoothing and differentiation layers, the multiplication layers act strictly locally (pointwise). These layers implement products and linear combinations of signals of the preceding layers subject to conservation of global charge, based on the procedure of generation of invariant polynomials described in Section \ref{sec:chargepolyinv}. 

Denoting the inividual layers by $\mathcal L_{\mathrm{mult},t}, t=1,\ldots,T_{\mathrm{mult}},$ their action is described by the chain  
\begin{equation*}
\mathcal L_{\mathrm{mult}}:W_{\mathrm{diff}}\stackrel{\mathcal L_{\mathrm{mult,1}}}{\longrightarrow}W_{\mathrm{mult},1}
\stackrel{\mathcal L_{\mathrm{mult,2}}}{\longrightarrow}W_{\mathrm{mult},2}\ldots
\stackrel{\mathcal L_{\mathrm{mult},T_{\mathrm{mult}}}}{\longrightarrow}W_{\mathrm{mult},T_{\mathrm{mult}}}\equiv U_{\lambda,\Lambda}.
\end{equation*}
Each space $W_{\mathrm{mult},t}$ except for the final one ($W_{\mathrm{mult},T_{\mathrm{mult}}}$) is decomposed into subspaces characterized by spatial position $\gamma\in (\lambda\mathbb Z)^2$ and charge $\mu$:
\begin{equation}\label{eq:chargewmultt}
W_{\mathrm{mult},t} = \oplus_{\gamma\in \lambda Z_{\lfloor\Lambda/\lambda\rfloor}}\oplus_{\mu =-T_{\mathrm{diff}}}^{T_{\mathrm{diff}}} W_{\mathrm{mult},t,\gamma,\mu}.
\end{equation}
Each space $W_{\mathrm{mult},t,\gamma,\mu}$ is a complex $d_{\mathrm{mult}}$-dimensional space, where $d_{\mathrm{mult}}$ is a parameter of the model: 
\begin{equation*}
W_{\mathrm{mult},t,\gamma,\mu} = \mathbb C^{d_{\mathrm{mult}}}.
\end{equation*}
The final space $W_{\mathrm{mult},T_{\mathrm{mult}}}$ is real, $d_U$-dimensional, and only has the charge-0 component:
\begin{equation*}
W_{\mathrm{mult},T_{\mathrm{mult}}} = \oplus_{\gamma\in \lambda Z_{\lfloor\Lambda/\lambda\rfloor}} W_{\mathrm{mult},t,\gamma,\mu=0},\quad W_{\mathrm{mult},t,\gamma,\mu=0}=\mathbb R^{d_U},
\end{equation*}
so that $W_{\mathrm{mult},T_{\mathrm{mult}}}$ can be identified with $U_{\lambda,\Lambda}.$
The initial space $W_{\mathrm{diff}}$ can also be expanded in the form \eqref{eq:chargewmultt} by reshaping its components \eqref{eq:chargewdifftdecomp},\eqref{eq:chargewdifftsmu}:
\begin{align*}
W_{\mathrm{diff}} &= \oplus_{s=0}^{T_\mathrm{diff}}\oplus_{\mu=-s}^s W_{\mathrm{diff},T_\mathrm{diff},s,\mu}\\
&= \oplus_{s=0}^{T_\mathrm{diff}}\oplus_{\mu=-s}^s \oplus_{\gamma\in\lambda Z_{\lfloor\Lambda/\lambda\rfloor}} \mathbb C^{d_V}\\
&=\oplus_{\gamma\in\lambda Z_{\lfloor\Lambda/\lambda\rfloor}}\oplus_{\mu=-T_\mathrm{diff}}^{T_\mathrm{diff}}W_{\mathrm{mult},0,\gamma,\mu},
\end{align*}
where 
\begin{align*}
W_{\mathrm{mult},0,\gamma,\mu} = \oplus_{s=|\mu|}^{T_\mathrm{diff}} \mathbb C^{d_V}.
\end{align*}
The multiplication layers $\mathcal L_{\mathrm{mult},t}$ act separately and identically at each $\gamma\in \lambda Z_{\lfloor\Lambda/\lambda\rfloor},$ i.e., without loss of generality these layers can be thought of as maps
\begin{equation*}
\mathcal L_{\mathrm{mult},t}: \oplus_{\mu =-T_{\mathrm{diff}}}^{T_{\mathrm{diff}}} W_{\mathrm{mult},t-1,\gamma=0,\mu}\longrightarrow \oplus_{\mu =-T_{\mathrm{diff}}}^{T_{\mathrm{diff}}} W_{\mathrm{mult},t,\gamma=0,\mu}.
\end{equation*}
To define $\mathcal L_{\mathrm{mult},t}$, let us represent its input $\mathbf\Phi\in \oplus_{\mu =-T_{\mathrm{diff}}}^{T_{\mathrm{diff}}} W_{\mathrm{mult},t-1,\gamma=0,\mu}$ as
$$
\mathbf\Phi=\sum_{\mu =-T_{\mathrm{diff}}}^{T_{\mathrm{diff}}}\mathbf\Phi_\mu=
\sum_{\mu =-T_{\mathrm{diff}}}^{T_{\mathrm{diff}}}\sum_{n=1}^{d_{\mathrm{mult}}}\Phi_{\mu,n}\mathbf e_{\mu,n},
$$
where $\mathbf e_{\mu,n}$ denote the basis vectors in $W_{\mathrm{mult},t-1,\gamma=0,\mu}$. We represent the output $\mathbf\Psi\in \oplus_{\mu =-T_{\mathrm{diff}}}^{T_{\mathrm{diff}}} W_{\mathrm{mult},t,\gamma=0,\mu}$ of $\mathcal L_{\mathrm{mult},t}$ in the same way: 
$$
\mathbf\Psi=\sum_{\mu =-T_{\mathrm{diff}}}^{T_{\mathrm{diff}}}\mathbf\Psi_\mu=
\sum_{\mu =-T_{\mathrm{diff}}}^{T_{\mathrm{diff}}}\sum_{n=1}^{d_{\mathrm{mult}}}\Psi_{\mu,n}\mathbf e_{\mu,n}.
$$  
Then, based on Eq.\eqref{eq:chargeftmun}, for $t<T_{\mathrm{mult}}$ we define $\mathcal L_{\mathrm{mult},t}\mathbf \Phi=\mathbf\Psi$ by
\begin{equation}\label{eq:chargemultweightsmu}
\Psi_{\mu,n}=w^{(t)}_{0,n}\mathbf{1}_{\mu=0}+\sum_{n_1=1}^{d_{\mathrm{mult}}}
w_{1, \mu,n,n_1}^{(t)}\Phi_{\mu,n_1}+\sum_{\genfrac{}{}{0pt}{}{-T_{\mathrm{diff}}\le \mu_1,\mu_2\le T_{\mathrm{diff}}}{\mu_1+\mu_2=\mu} }\sum_{n_1=1}^{d_{\mathrm{mult}}}\sum_{n_2=1}^{d_{\mathrm{mult}}}
w_{2, \mu_1,\mu_2,n,n_1,n_2}^{(t)}\Phi_{\mu_1,n_1}\Phi_{\mu_2,n_2},\end{equation}
with some complex weights $w^{(t)}_{0,n}, w_{1, \mu,n,n_1}^{(t)}, w_{2, \mu_1,\mu_2,n,n_1,n_2}^{(t)}$. In the final layer $t=T_{\mathrm{mult}}$ the network only needs to generate a real charge-0 (invariant) vector, so in this case $\mathbf\Psi$ only has real $\mu=0$ components:
\begin{equation}\label{eq:chargemultweights0}
\Psi_{0,n}=\operatorname{Re}\Big(w^{(t)}_{0,n}+\sum_{n_1=1}^{d_{\mathrm{mult}}}
w_{1, 0,n,n_1}^{(t)}\Phi_{0,n_1}+\sum_{\genfrac{}{}{0pt}{}{-T_{\mathrm{diff}}\le \mu_1,\mu_2\le T_{\mathrm{diff}}}{\mu_1+\mu_2=0} }\sum_{n_1=1}^{d_{\mathrm{mult}}}\sum_{n_2=1}^{d_{\mathrm{mult}}}
w_{2, \mu_1,\mu_2,n,n_1,n_2}^{(t)}\Phi_{\mu_1,n_1}\Phi_{\mu_2,n_2}\Big).\end{equation}

\paragraph*{} This completes the description of the charge-conserving convnet. In the sequel, it will be convenient to consider a family of convnets having all parameters and weights in common except for the grid spacing $\lambda$. Observe that this parameter can be varied independently of all other parameters and weights ($\Lambda, d_{\mathrm{mult}}, T_{\mathrm{diff}}, T_{\mathrm{mult}}, w^{(t)}_{0,n}, w_{1, \mu,n,n_1}^{(t)}, w_{2, \mu_1,\mu_2,n,n_1,n_2}^{(t)}$). The parameter $\lambda$ affects the number of smoothing layers, and decreasing this parameter means that essentially the same convnet is applied at a higher resolution. Accordingly, we will call such a family a ``multi-resolution convnet''.

\begin{defin} A \textbf{charge-conserving convnet} is a map $\widehat f:V\to U$ given in \eqref{eq:chargefhatvp},  characterized by parameters $\lambda, \Lambda, d_{\mathrm{mult}}, T_{\mathrm{diff}}, T_{\mathrm{mult}}$ and weights $w^{(t)}_{0,n}, w_{1, \mu,n,n_1}^{(t)}, w_{2, \mu_1,\mu_2,n,n_1,n_2}^{(t)}$, and constructed as described above. A \textbf{multi-resolution charge-conserving convnet} $\widehat f_\lambda$ is obtained by arbitrarily varying the grid spacing parameter $\lambda$ in the charge-conserving convnet $\widehat f$. 
\end{defin}

We comment now why it is natural to call this model ``charge-conserving''. As already explained in Section \ref{sec:chargediff}, if the intermediate spaces labeled by specific $\mu$'s are equipped with the special representations \eqref{eq:chargermugta}, then, up to the spatial cutoff, the differentiation layers $\mathcal L_{\mathrm{diff}}$ are $\mathrm{SE}(2)$-equivariant and conserve the ``total charge'' $\mu+\eta$, where $\eta$ is the ``local charge'' (see Eq.\eqref{eq:chargeladder}). Clearly, the same can be said about the smoothing layers $\mathcal L_{\mathrm{smooth}}$ which, in fact, separately conserve the global charge $\mu$ and the local charge $\eta$. Moreover, observe that the multiplication layers $\mathcal L_{\mathrm{mult}}$, though nonlinear, are also equivariant and separately conserve the charges $\mu$ and $\eta$. Indeed, consider the transformations \eqref{eq:chargemultweightsmu},\eqref{eq:chargemultweights0}. The first term in these transformations creates an $\mathrm{SE}(2)$-invariant, $\mu=\eta=0$ signal. The second, linear term does not change $\mu$ or $\eta$ of the input signal. The third term creates products $\Psi_{\mu}= \Phi_{\mu_1}\Phi_{\mu_2}$, where $\mu=\mu_1+\mu_2$. This multiplication operation is equivariant with respect to the respective representations $R^{(\mu)}, R^{(\mu_1)}, R^{(\mu_2)}$ as defined in \eqref{eq:chargermugta}. Also, if the signals $\Phi_{\mu_1},\Phi_{\mu_2}$ have local charges $\eta_1,\eta_2$ at a particular point $\mathbf x$, then the product $\Phi_{\mu_1}\Phi_{\mu_2}$ has local charge $\eta=\eta_1+\eta_2$ at this point (see Eqs.\eqref{eq:chargevmueta},\eqref{eq:chargevmueta0}).

\subsection{The main result}\label{sec:chargeresult}

To state our main result, we define a limit point of charge-conserving convnets.
\begin{defin}\label{def:convlimpoint}
With $V=L^2(\mathbb R^\nu,\mathbb R^{d_V})$ and $U=L^2(\mathbb R^\nu,\mathbb R^{d_U})$, we say that a map $f:V\to U$ is a \textbf{limit point of charge-conserving convnets} if for any compact set $K\subset V$, any $\epsilon>0$ and $\Lambda_0>0$ there exist a multi-resolution charge-conserving convnet $\widehat f_\lambda$ with $\Lambda>\Lambda_0$ such that $\sup_{\mathbf \Phi\in K}\|\widehat f_\lambda(\mathbf \Phi)-f(\mathbf \Phi)\|\le\epsilon$ for all sufficiently small grid spacings $\lambda$.  
\end{defin}
Then our main result is the following theorem.
\begin{theorem}\label{th:charge}
Let $V=L^2(\mathbb R^\nu,\mathbb R^{d_V})$ and $U=L^2(\mathbb R^\nu,\mathbb R^{d_U})$. A map $f:V\to  U$ is a limit point of charge-conserving convnets if and only if $f$ is $\mathrm{SE}(2)$--equivariant and continuous in the norm topology. 
\end{theorem}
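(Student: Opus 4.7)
The proof will adapt the strategy used for Theorem \ref{th:convmain}, splitting into necessity (which is essentially an equivariance-through-strong-convergence argument) and sufficiency (which uses the full polynomial/differentiation apparatus developed in Section \ref{sec:charge_prelim}).

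For necessity, continuity of $f$ follows from the standard topological argument that uniform convergence on compact sets preserves continuity. To derive $\mathrm{SE}(2)$-equivariance, translation equivariance is handled by exactly the same argument as in Theorem \ref{th:convmain}: each charge-conserving convnet is approximately $\mathbb{R}^2$-equivariant on its output cutoff region, and the same $\epsilon$-splitting passes this property to the limit point. For rotation equivariance I would use the layerwise $\mathrm{SO}(2)$-equivariance built into the construction: with respect to the charge-labeled representations $R^{(\mu)}$, the smoothing layers commute with rotations, the discrete derivatives act as approximate ladder operators $V_\mu \to V_{\mu \pm 1}$, and the multiplication layers explicitly conserve global charge $\mu$. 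The lattice breaks exact $\mathrm{SO}(2)$-symmetry, but Lemma \ref{lm:chargeclt} gives that the discrete building blocks converge strongly and uniformly on compact sets to their continuum counterparts, which are exactly $\mathrm{SO}(2)$-equivariant. Combined with Proposition \ref{lm:chargecharequiv}, this promotes translation equivariance plus $\mathrm{SO}(2)$-invariance at the origin to full $\mathrm{SE}(2)$-equivariance.

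For sufficiency, the plan is to reduce to approximating the $\mathrm{SO}(2)$-invariant local map $f_{\mathrm{loc}}(\mathbf{\Phi}) = f(\mathbf{\Phi})(0)$ via Proposition \ref{lm:chargecharequiv}, and then realize $f_{\mathrm{loc}}$ as an $\mathrm{SO}(2)$-invariant polynomial applied to smoothed derivatives at the origin. For fixed $N$, define the feature map $\iota_N: V \to \bigoplus_{a+b \le N} \mathbb{C}^{d_V}$ by $\iota_N(\mathbf{\Phi}) = (\mathcal{L}_0^{(a,b)}\mathbf{\Phi}(0))_{a+b \le N}$. Since the Hermite-type functions $\Psi_{a,b}$ from \eqref{eq:chargepsiab} are complete in $L^2(\mathbb{R}^2)$, the orthogonal projection $P_N$ onto their span converges strongly to the identity, hence uniformly on any compact $K \subset V$. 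Continuity of $f_{\mathrm{loc}}$ then gives $f_{\mathrm{loc}}(\mathbf{\Phi}) \approx \widetilde{f}_N(\iota_N(\mathbf{\Phi}))$ on $K$, where $\widetilde{f}_N$ is a continuous $\mathrm{SO}(2)$-invariant function on a finite-dimensional module. Crucially, because each $\Psi_{a,b}$ carries $\mathrm{SO}(2)$-charge $a-b$, the target of $\iota_N$ has a natural charge-isotypic decomposition, and I would use Lemma \ref{lm:chargeso2poly} to approximate $\widetilde{f}_N$ by an $\mathrm{SO}(2)$-invariant polynomial. The layerwise construction of Section \ref{sec:chargepolyinv} then realizes this polynomial through the multiplication layers.

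To assemble the charge-conserving convnet I would finally replace each continuous operator $\mathcal{L}_0^{(a,b)}$ by its discrete counterpart $\mathcal{L}_\lambda^{(a,b)}$ implemented by the smoothing and differentiation layers, and Lemma \ref{lm:chargeclt} makes the resulting error uniformly small on $K$ as $\lambda \to 0$. The multiplication layers are applied pointwise at every grid point $\mathbf{x} \in \lambda Z_{\lfloor \Lambda/\lambda \rfloor}$ so that the pointwise characterization \eqref{eq:chargefphifloc} gives the desired approximation of $f$ on the spatial cutoff region, and $\Lambda$ is chosen large enough to control the $L^2$-tail of $f(\mathbf{\Phi})$ uniformly on $K$ by continuity and compactness. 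The main technical obstacle will be the first step of sufficiency: carefully verifying that $P_N$ can be chosen $\mathrm{SO}(2)$-equivariant and that the coordinatization by $\iota_N$ maps $K$ into a compact set carrying the correct isotypic decomposition, so that Lemma \ref{lm:chargeso2poly} and the charge-conserving polynomial construction apply directly without unwanted reshuffling of charges.
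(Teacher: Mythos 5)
Your overall architecture matches the paper's proof closely: necessity via the scaling limit of the discrete layers (Lemma \ref{lm:chargeclt}) plus the pointwise characterization of Proposition \ref{lm:chargecharequiv}, and sufficiency via localization, restriction to the finite-dimensional $\mathrm{SO}(2)$-modules spanned by the $\Psi_{a,b}$, invariant-polynomial approximation (Lemma \ref{lm:chargeso2poly}), and discretization. The technical worry you flag at the end (the $\Psi_{a,b}$ with $a+b\le N$ are not orthonormal, so the projection onto their span must be expressed through a dual basis) is real but harmless: the dual basis vectors stay inside the same isotypic components, so no charges get reshuffled, and the paper handles this with a short explicit computation.

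There is, however, one genuine gap in your sufficiency argument: you define $f_{\mathrm{loc}}(\mathbf \Phi)=f(\mathbf \Phi)(0)$ and then invoke ``continuity of $f_{\mathrm{loc}}$'' to pass from $f_{\mathrm{loc}}(\mathbf\Phi)$ to $\widetilde f_N(\iota_N(\mathbf\Phi))$. Pointwise evaluation is not a continuous functional on $L^2(\mathbb R^2)$ (and is not even well-defined on equivalence classes), so for $f=\mathrm{id}$ your $f_{\mathrm{loc}}$ is already ill-behaved and the step $f_{\mathrm{loc}}(P_N\mathbf\Phi)\to f_{\mathrm{loc}}(\mathbf\Phi)$ fails; the same problem propagates to the reconstruction $f(\mathbf\Phi)(\mathbf x)=f_{\mathrm{loc}}(R_{(-\mathbf x,1)}\mathbf\Phi)$, where you need uniform control in $\mathbf x$ to get an $L^2$ bound. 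The paper's fix is an extra mollification step at the very start of sufficiency: replace $f$ by $\widetilde f_{\epsilon_1}(\mathbf\Phi)=f(\mathbf\Phi)*g_{\epsilon_1}$, whose localization $\widetilde f_{\epsilon_1,\mathrm{loc}}(\mathbf\Phi)=\langle g_{\epsilon_1},f(\mathbf\Phi)\rangle$ \emph{is} norm-continuous (it is a bounded linear functional composed with the continuous $f$), while the error $\|f(\mathbf\Phi)*g_{\epsilon_1}-f(\mathbf\Phi)\|$ is made uniformly small on $K$ using compactness of $f(K)$. With that step inserted before your $\iota_N$ construction, the rest of your plan goes through as in the paper.
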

\begin{proof} To simplify the exposition, we will assume that $d_V=d_U=1$; generalization of all the arguments to vector-valued input and output signals is straightforward. 

We start by observing that a multi-resolution family of charge-conserving convnets has a natural scaling limit as the lattice spacing $\lambda\to 0$:
\begin{equation}\label{eq:chargef0hatlamb}
\widehat f_0(\mathbf \Phi)=\lim_{\lambda\to 0}\widehat f_\lambda(\mathbf \Phi).\end{equation}
Indeed, by \eqref{eq:chargefhatvp}, at $\lambda>0$ we can represent the convnet as the composition of maps
$$\widehat f_\lambda = \mathcal L_{\mathrm{mult}}\circ \mathcal L_{\mathrm{diff}}\circ \mathcal L_{\mathrm{smooth}}\circ P_{\lambda, \Lambda'}.$$ 
The part $\mathcal L_{\mathrm{diff}}\circ \mathcal L_{\mathrm{smooth}}\circ P_{\lambda, \Lambda'}$ of this computation implements several maps $\mathcal L_{\lambda}^{(a,b)}$ introduced in \eqref{eq:chargedefltau}. More precisely, by the definition of differentiation layers in Section \ref{sec:chargeconvnet}, the output space $W_{\mathrm{diff}}$ of the linear operator $\mathcal L_{\mathrm{diff}}\circ \mathcal L_{\mathrm{smooth}}\circ P_{\lambda, \Lambda'}$ can be decomposed into the direct sum \eqref{eq:chargewdifftdecomp} over several degrees $s$ and charges $\mu$. The respective components of $\mathcal L_{\mathrm{diff}}\circ \mathcal L_{\mathrm{smooth}}\circ P_{\lambda, \Lambda'}$ are, up to unimportant combinatoric coefficients, just the operators $\mathcal L_{\lambda}^{(a,b)}$ with $a+b=s$, $a-b=\mu$:
\begin{equation}\label{eq:chargeldifflsmooth}\mathcal L_{\mathrm{diff}}\circ \mathcal L_{\mathrm{smooth}}\circ P_{\lambda, \Lambda'}=(\ldots,c_{a,b}\mathcal L_{\lambda}^{(a,b)},\ldots),\quad c_{a,b}=\tfrac{T_{\mathrm{diff}}!}{a!b!(T_{\mathrm{diff}}-a-b)!},\end{equation}
with the caveat that the output of $\mathcal L_{\lambda}^{(a,b)}$ is spatially restricted to the bounded domain $[-\Lambda,\Lambda]^2$. By Lemma \ref{lm:chargeclt}, as $\lambda\to 0$, the operators $\mathcal L_{\lambda}^{(a,b)}$ converge to the operator $\mathcal L_{0}^{(a,b)}$ defined in Eq.\eqref{eq:chargel0ab}, so that   for any $\mathbf\Phi\in L^2(\mathbb R^\nu
)$ the signals $\mathcal L_{\lambda}^{(a,b)}\mathbf\Phi$ are bounded functions on $\mathbb R^\nu$ and converge to $\mathcal L_{0}^{(a,b)}\mathbf\Phi$ in the uniform norm $\|\cdot\|_\infty$. Let us denote the limiting linear operator by $\mathcal L_{{\mathrm{conv}}}$:
\begin{equation}\label{eq:chargelconv}\mathcal L_{{\mathrm{conv}}} = \lim_{\lambda\to 0}\mathcal L_{\mathrm{diff}}\circ \mathcal L_{\mathrm{smooth}}\circ P_{\lambda, \Lambda'}.\end{equation}
The full limiting map $\widehat f_0(\mathbf \Phi)$ is then obtained by pointwise application (separately at each point $\mathbf x\in[-\Lambda,\Lambda]^2$) of the multiplication layers $\mathcal L_{\mathrm{mult}}$ to the signals $\mathcal L_{T_{\mathrm{diff}}}\mathbf \Phi$:
\begin{equation}\label{eq:chargef0philmultdiff}\widehat f_0(\mathbf\Phi)=\mathcal L_{\mathrm{mult}}(\mathcal L_{{\mathrm{conv}}}\mathbf \Phi).\end{equation}
For any $\mathbf\Phi\in L^2(\mathbb R^\nu
)$,  this $f_0(\mathbf\Phi)$ is a well-defined bounded signal on the domain $[-\Lambda,\Lambda]^2.$ It is bounded because the multiplication layers $\mathcal L_{\mathrm{mult}}$ implement a continuous (polynomial) map, and because, as already mentioned, $\mathcal L_{{\mathrm{conv}}}\mathbf \Phi$ is a bounded signal. Since the domain $[-\Lambda,\Lambda]^2$ has a finite Lebesgue measure, we have $f_0(\mathbf\Phi)\in L^\infty([-\Lambda,\Lambda]^2
)\subset L^2([-\Lambda,\Lambda]^2
).$ By a similar argument, the convergence in \eqref{eq:chargef0hatlamb} can be understood in the  $L^\infty([-\Lambda,\Lambda]^2
)$ or $L^2([-\Lambda,\Lambda]^2
)$ sense, e.g.:
\begin{equation}\label{eq:chargef0fll2}
\|\widehat f_0(\mathbf \Phi)-\widehat f_\lambda(\mathbf \Phi)\|_{L^2([-\Lambda,\Lambda]^2)}\stackrel{\lambda\to 0}{\longrightarrow}0,\quad \mathbf \Phi\in V.
\end{equation}
Below, we will use the scaling limit $\widehat f_0$ as an intermediate approximator.

We will now prove the necessity and then the sufficiency parts of the theorem. 

\paragraph*{Necessity} \emph{(a limit point $f$ is continuous and $\mathrm{SE}(2)$--equivariant)}. 
As in the previous theorems \ref{th:convmain}, \ref{th:convpool}, continuity of $f$ follows by standard topological arguments, and we only need to prove the $\mathrm{SE}(2)$--equivariance.

Let us first prove the $\mathbb R^2$--equivariance of $f$. By the definition of a limit point, for any $\mathbf \Phi\in V$, $\mathbf x\in\mathbb R^2$, $\epsilon>0$ and $\Lambda_0>0$ there  is a multi-resolution convnet $\widehat f_\lambda$ with $\Lambda>\Lambda_0$ such that 
\begin{equation}\label{eq:chargesupphikwf}
\|\widehat f_\lambda(\mathbf \Phi)-f(\mathbf \Phi)\|\le\epsilon,\quad\|\widehat f_\lambda(R_{(\mathbf x,1)}\mathbf \Phi)-f(R_{(\mathbf x,1)}\mathbf \Phi)\|\le\epsilon\end{equation} for all sufficiently small $\lambda$. Consider the scaling limit $\widehat f_0=\lim_{\lambda\to 0}\widehat f_\lambda$ constructed above. As shown above, $\widehat f_\lambda(\mathbf\Phi)$ converges to $\widehat f_0(\mathbf\Phi)$ in the $L^2$ sense, so the inequalities \eqref{eq:chargesupphikwf} remains valid for $\widehat f_0(\mathbf\Phi)$:
\begin{equation}\label{eq:chargesupphikwf0}
\|\widehat f_0(\mathbf \Phi)-f(\mathbf \Phi)\|\le\epsilon,\quad\|\widehat f_0(R_{(\mathbf x,1)}\mathbf \Phi)-f(R_{(\mathbf x,1)}\mathbf \Phi)\|\le\epsilon.\end{equation}
The map $\widehat f_0$ is not $\mathbb R^2$--equivariant only because its output is restricted to the domain $[-\Lambda,\Lambda]^2$, since otherwise both maps $\mathcal L_{\mathrm{mult}},\mathcal L_{T_{\mathrm{diff}}}$ appearing in the superposition \eqref{eq:chargef0philmultdiff} are $\mathbb R^2$--equivariant. Therefore, for any $\mathbf y\in \mathbb R^2$, 
\begin{equation}\label{eq:chargeidf0r}
\widehat f_0(R_{(\mathbf x,1)}\mathbf \Phi)(\mathbf y)= R_{(\mathbf x,1)}\widehat f_0(\mathbf \Phi)(\mathbf y)=\widehat f_0(\mathbf \Phi)(\mathbf y-\mathbf x),\quad\text{ if } \mathbf y,\mathbf y-\mathbf x\in [-\Lambda,\Lambda]^2.
\end{equation}
Consider the set $$\Pi_{\Lambda, \mathbf x}=\{\mathbf y\in\mathbb R^2: \mathbf y,\mathbf y-\mathbf x\in [-\Lambda,\Lambda]^2\}=[-\Lambda,\Lambda]^2\cap R_{(\mathbf x,1)}([-\Lambda,\Lambda]^2).$$ The identity \eqref{eq:chargeidf0r} implies that 
\begin{equation}\label{eq:chargeppilmx}P_{\Pi_{\Lambda, \mathbf x}}\widehat f_0(R_{(\mathbf x,1)}\mathbf \Phi)=P_{\Pi_{\Lambda, \mathbf x}}R_{(\mathbf x,1)}\widehat f_0(\mathbf \Phi),\end{equation}
where $P_{\Pi_{\Lambda, \mathbf x}}$ denotes the projection to the subspace $L^2(\Pi_{\Lambda, \mathbf x}
)$ in $L^2(\mathbb R^2
)$. For a fixed $\mathbf x$, the projectors $P_{\Pi_{\Lambda, \mathbf x}}$ converge strongly to the identity as $\Lambda\to\infty$, therefore we can choose $\Lambda$ sufficiently large so that 
\begin{equation}\label{eq:chargeppilx}\|P_{\Pi_{\Lambda, \mathbf x}} f(\mathbf \Phi)-f(\mathbf \Phi)\|\le\epsilon,\quad \|P_{\Pi_{\Lambda, \mathbf x}} f(R_{(\mathbf x,1)}\mathbf \Phi)- f(R_{(\mathbf x,1)}\mathbf \Phi)\|\le\epsilon.\end{equation}
Then, assuming that the approximating convnet has a sufficiently large range $\Lambda$, we have 
\begin{align*}
\|f(R_{(\mathbf x,1)}\mathbf \Phi)-R_{(\mathbf x,1)} f(\mathbf \Phi)\|
\le{} & \| f(R_{(\mathbf x,1)}\mathbf \Phi)-P_{\Pi_{\Lambda, \mathbf x}} f(R_{\mathbf x,1)}\mathbf \Phi)\|\\
&+\|P_{\Pi_{\Lambda, \mathbf x}} f(R_{(\mathbf x,1)}\mathbf \Phi)-P_{\Pi_{\Lambda, \mathbf x}}\widehat f_0(R_{(\mathbf x,1)}\mathbf \Phi)\|\\
&+\|P_{\Pi_{\Lambda, \mathbf x}}\widehat f_0(R_{(\mathbf x,1)}\mathbf \Phi)-P_{\Pi_{\Lambda, \mathbf x}}R_{(\mathbf x,1)}\widehat f_0(\mathbf \Phi)\|\\
&+\|P_{\Pi_{\Lambda, \mathbf x}}R_{(\mathbf x,1)}\widehat f_0(\mathbf \Phi)-P_{\Pi_{\Lambda, \mathbf x}}R_{(\mathbf x,1)} f(\mathbf \Phi)\|\\
&+\|P_{\Pi_{\Lambda, \mathbf x}}R_{(\mathbf x,1)} f(\mathbf \Phi)-R_{(\mathbf x,1)}f(\mathbf \Phi)\|\\
&\le4\epsilon,\end{align*}
where we used the bounds \eqref{eq:chargesupphikwf0}, \eqref{eq:chargeppilx}, the equalities $\|P_{\Pi_{\Lambda, \mathbf x}}\|=\|R_{(\mathbf x,1)}\|=1$, and the identity \eqref{eq:chargeppilmx}. Taking the limit $\epsilon\to 0$, we obtain 
the desired $\mathbb R^2$--equivariance of $f$.

To complete the proof of $\mathrm{SE}(2)$--equivariance, we will show that for any $\theta\in \mathrm{SO}(2)$ we have
\begin{equation}\label{eq:chargertheta}
R_{(0,\theta)}\widehat f_0(\mathbf\Phi)(\mathbf x)=\widehat f_0(R_{(0,\theta)}\mathbf\Phi)(\mathbf x), \quad \mathbf x\in \Pi_{\Lambda, \theta},
\end{equation}
where
$$\Pi_{\Lambda, \theta}=[-\Lambda, \Lambda]^2\cap R_{(0,\theta)}([-\Lambda, \Lambda]^2).$$
Identity \eqref{eq:chargertheta} is an analog of identity \eqref{eq:chargeidf0r} that we used to prove the $R_{(\mathbf x,1)}$--equivariance of $f$. Once Eq.\eqref{eq:chargertheta} is established, we can prove the $R_{(0,\theta)}$--equivariance of $f$ by arguing in the same way as we did above to prove the $R_{(\mathbf x,1)}$--equivariance. After that, the $R_{(0,\theta)}$--equivariance and the $R_{(\mathbf x,1)}$--equivariance together imply the full $\mathrm{SE}(2)$--equivariance.

Note that by using the partial translation equivariance \eqref{eq:chargeidf0r} and repeating the computation from Lemma \ref{lm:chargecharequiv}, it suffices to prove the identity \eqref{eq:chargertheta} only in the special case $\mathbf x=0$:
\begin{equation}\label{eq:chargertheta0}
\widehat f_0(R_{(0,\theta)}\mathbf\Phi)(0)=\widehat f_0(\mathbf\Phi)(0).
\end{equation}
Indeed, suppose that Eq.\eqref{eq:chargertheta0} is established and $\Lambda$ is sufficiently large so that $\mathbf x, \theta^{-1}\mathbf x\in [-\Lambda,\Lambda]^2$. Then, 
\begin{align*}
\widehat f_0(R_{(0,\theta)}\mathbf\Phi)(\mathbf x) 
&=R_{(-\mathbf x,1)}\widehat f_0(R_{(0,\theta)}\mathbf \Phi)( 0)\\
&=\widehat f_0(R_{(-\mathbf x,1)}R_{(0,\theta)}\mathbf \Phi)( 0)\\
&=\widehat f_0(R_{(0,\theta)}R_{(-\theta^{-1}\mathbf x,1)}\mathbf \Phi)( 0)\\
&=\widehat f_0(R_{(-\theta^{-1}\mathbf x,1)}\mathbf \Phi)( 0)\\
&=R_{(-\theta^{-1}\mathbf x,1)}\widehat f_0(\mathbf \Phi)(0)\\
&=R_{(\mathbf x,\theta)}R_{(-\theta^{-1}\mathbf x,1)}\widehat f_0(\mathbf \Phi)(\mathcal A_{(\mathbf x,\theta)} 0)\\
&=R_{(0,\theta)}\widehat f_0(\mathbf \Phi)(\mathbf x),
\end{align*}
where we used general properties of the representaton $R$ (steps 1, 6, 7), Eq.\eqref{eq:chargertheta0} (step 4), and the partial $\mathbb R^\nu$--equivariance \eqref{eq:chargeidf0r} (steps 2 and 5, using the fact that $0,\mathbf x, \theta^{-1}\mathbf x\in [-\Lambda,\Lambda]^2$). 

To establish Eq.\eqref{eq:chargertheta0}, recall that, by Eq.\eqref {eq:chargef0philmultdiff}, the value $\widehat f_0(\mathbf\Phi)(0)$ is obtained by first evaluating  $\mathcal L_{\mathrm{conv}}(\mathbf \Phi)$ at $\mathbf x=0$ and then applying to the resulting values the map $\mathcal L_{\mathrm{mult}}$. By Eqs.\eqref{eq:chargeldifflsmooth},\eqref{eq:chargelconv} and Lemma \ref{lm:chargeclt},  we can write  $\mathcal L_{\mathrm{conv}}(\mathbf \Phi)(0)$ as a vector with components
\begin{equation}\label{eq:lconvphi0}\mathcal L_{\mathrm{conv}}(\mathbf \Phi)(0)=(\ldots,c_{a,b}\mathcal L_{0}^{(a,b)},\ldots),\end{equation}
where, by Eq.\eqref{eq:chargel0ab},
$$\mathcal L_0^{(a,b)}\mathbf\Phi(0) = \int_{\mathbb R^2}\mathbf\Phi(-\mathbf y)\Psi_{a,b}(\mathbf y)d^2\mathbf y,$$
and $\Psi_{a,b}$ is given by Eq.\eqref{eq:chargepsiab}:
$$\Psi_{a,b}=\partial_z^a\partial_{\overline{z}}^b\Big(\frac{1}{2\pi}e^{-|\mathbf x|^2/2}\Big).$$
In the language of Section \ref{sec:chargediff}, $\Psi_{a,b}$ has local charge $\eta=b-a$:
$$\Psi_{a,b}(\mathcal A_{(0,e^{-i\phi})}\mathbf x)=e^{i(a-b)\phi}\Psi_{a,b}(\mathbf x).$$
It follows that 
\begin{align*}
\mathcal L_{0}^{(a,b)}(R_{(0,e^{i\phi})}\mathbf \Phi)(0)={}&\int_{\mathbb R^2}R_{(0,e^{i\phi})}\mathbf\Phi(-\mathbf y)\Psi_{a,b}(\mathbf y)d^2\mathbf y\\
={}&\int_{\mathbb R^2}\mathbf\Phi(\mathcal A_{(0,e^{-i\phi})}(-\mathbf y))\Psi_{a,b}(\mathbf y)d^2\mathbf y\\
={}&\int_{\mathbb R^2}\mathbf\Phi(-\mathbf y)\Psi_{a,b}(\mathcal A_{(0,e^{i\phi})}\mathbf y)d^2\mathbf y\\
={}&\int_{\mathbb R^2}\mathbf\Phi(-\mathbf y)e^{i(b-a)\phi}\Psi_{a,b}(\mathbf y)d^2\mathbf y\\
={}&e^{i(b-a)\phi}\mathcal L_{0}^{(a,b)}(\mathbf \Phi)(0),\end{align*}
i.e., $\mathcal L_{0}^{(a,b)}(\mathbf \Phi)(0)$ transforms under rotations $e^{i\phi}\in\mathrm{SO}(2)$ as a character \eqref{eq:charge1drep} with $\xi=b-a$.

Now consider the map $\mathcal L_{\mathrm{mult}}.$ Since each component in the decomposition \eqref{eq:lconvphi0} transforms as a character with $\xi=b-a$, the construction of $\mathcal L_{\mathrm{mult}}$ in Section \ref{sec:chargeconvnet} (based on the procedure of generating invariant polynomials described in Section \ref{sec:chargepolyinv}) quarantees that $\mathcal L_{\mathrm{mult}}$ computes a function invariant with respect to $\mathrm{SO}(2)$, thus proving Eq.\eqref{eq:chargertheta0}: 
$$
\widehat f_0(R_{(0,\theta)}\mathbf\Phi)(0)=\mathcal L_{\mathrm{mult}}(\mathcal L_{\mathrm{conv}}(R_{(0,\theta)}\mathbf \Phi)(0))=\mathcal L_{\mathrm{mult}}(\mathcal L_{\mathrm{conv}}(\mathbf \Phi)(0))=\widehat f_0(\mathbf\Phi)(0).
$$  
This completes the proof of the necessity part.

\paragraph{Sufficiency} \emph{(a continuous $\mathrm{SE}(2)$--equivariant map $f:V\to U$ can be approximated by charge-conserving convnets).}

Given a continuous $\mathrm{SE}(2)$--equivariant $f:V\to U$, a compact set $K\subset V$ and positive numbers $\epsilon, \Lambda_0,$ we need to construct a multi-resolution charge-conserving convnet $\widehat f=(\widehat f_\lambda)$ with $\Lambda>\Lambda_0$ and the property $\sup_{\mathbf \Phi\in K}\|\widehat f_\lambda(\mathbf \Phi)-f(\mathbf \Phi)\|\le\epsilon$ for all sufficiently small $\lambda$. 
We construct the desired convnet by performing a series of reductions of this approximation problem.

\medskip
\noindent
\textbf{1. Smoothing.} For any $\epsilon_1>0$, consider the smoothed map $\widetilde{f}_{\epsilon_1}:V\to U$ defined by
\begin{equation}\label{eq:chargefeps1}
\widetilde{f}_{\epsilon_1}(\mathbf\Phi)=f(\mathbf\Phi)*g_{\epsilon_1},
\end{equation}
where $$g_{\epsilon_1}(\mathbf x)=\frac{1}{2\pi\epsilon_1}e^{-|\mathbf x|^2/(2\epsilon_1)}.$$
The map $\widetilde{f}_{\epsilon_1}$ is continuous and $\mathrm{SE}(2)$--equivariant, as a composition of two continuous and $\mathrm{SE}(2)$--equivariant  maps. 
We can choose $\epsilon_1$ small enough so that for all $\mathbf \Phi\in K$
\begin{equation}\label{eq:chargetfeps1fphi}\|\widetilde{f}_{\epsilon_1}(\mathbf\Phi)-f(\mathbf\Phi)\|\le \frac{\epsilon}{10}.\end{equation}
The problem of approximating $f$ then reduces to the problem of approximating maps $\widetilde{f}_{\epsilon_1}$  of the form \eqref{eq:chargefeps1}.

\medskip
\noindent
\textbf{2. Spatial cutoff.} We can choose $\Lambda$ sufficiently large so that for all $\mathbf\Phi\in K$ \begin{equation}\label{eq:chargepltfeps1}\|P_\Lambda \widetilde{f}_{\epsilon_1}(\mathbf\Phi)-\widetilde{f}_{\epsilon_1}(\mathbf\Phi)\|<\frac{\epsilon}{10}.\end{equation}
We can do this because $\widetilde{f}_{\epsilon_1}(K)$ is compact, as an image of a compact set under a continuous map, and because $P_\Lambda$ converge strongly to the identity as  $\Lambda\to +\infty$. Thus, we only need to approximate output signals $\widetilde{f}_{\epsilon_1}(\mathbf\Phi)$ on the domain $[-\Lambda,\Lambda]^2$.

\medskip
\noindent
\textbf{3. Output localization.} Define the map $\widetilde f_{\epsilon_1,\mathrm{loc}}: V\to \mathbb R$ by
\begin{equation}\label{eq:chargetildefepsloc}\widetilde f_{\epsilon_1,\mathrm{loc}}(\mathbf\Phi)=\widetilde f_{\epsilon_1}(\mathbf\Phi)(0)=\langle g_{\epsilon_1}, f(\mathbf\Phi) \rangle_{L^2(\mathbb R^\nu)}.\end{equation}
Since both $g_{\epsilon_1}, f(\mathbf\Phi)\in L^2(\mathbb R^\nu)$, the map $\widetilde f_{\epsilon_1,\mathrm{loc}}$ is well-defined, and it is continuous since $f$ is continuous.

By translation equivariance of $f$ and hence $\widetilde f_{\epsilon_1}$, the map $\widetilde f_{\epsilon_1}$ can be recovered from $\widetilde f_{\epsilon_1,\mathrm{loc}}$ by 
\begin{equation}\label{eq:chargerecwfe1}\widetilde f_{\epsilon_1}(\mathbf\Phi)(\mathbf x)=\widetilde f_{\epsilon_1}(R_{(-\mathbf x,1)}\mathbf\Phi)(0)=\widetilde f_{\epsilon_1,\mathrm{loc}}(R_{(-\mathbf x,1)}\mathbf\Phi).\end{equation}
By the $\mathrm{SO}(2)$-equivariance of $\widetilde f_{\epsilon_1},$ the map $\widetilde f_{\epsilon_1,\mathrm{loc}}$ is $\mathrm{SO}(2)$-invariant.

\medskip
\noindent
\textbf{4. Nested finite-dimensional $\mathrm{SO}(2)$-modules $V_\zeta$.} For any nonnegative integer $a,b$ consider again the signal $\Psi_{a,b}$ introduced in Eq.\eqref{eq:chargepsiab}.
For any $\zeta=1,2,\ldots,$ consider the subspace $V_\zeta\subset V$ spanned by the vectors $\operatorname{Re}(\Psi_{a,b})
$ and $\operatorname{Im}(\Psi_{a,b})
$ with $a+b\le \zeta.$  
These vectors form a total system in $V$ if $a,b$ take arbitrary nonnegative integer values. Accordingly, if we denote by $P_{V_\zeta}$ the orthogonal projection to $V_\zeta$ in $V$, then the operators $P_{V_\zeta}$ converge strongly to the identity as $\zeta\to\infty.$ 

The subspace $V_\zeta$ is a real finite-dimensional $\mathrm{SO}(2)$--module. As discussed in Subsection \ref{sec:chargepolyinv}, it is convenient to think of such modules as consisting of complex conjugate irreducible representations under constraint \eqref{eq:chargewnconj}. The complex extension of the real module $V_\zeta$ is spanned by signals $\{\Psi_{a,b}
\}_{a+b\le\zeta
}$, so that $\Psi_{a,b}
$ and $\Psi_{b,a}
$ form a complex conjugate pair for $a\ne b$ (if $a=b$, then $\Psi_{a,b}
$ is real). The natural representation \eqref{eq:chargergta} of $\mathrm{SO}(2)$ 
 transforms the signal $\Psi_{a,b}
$  as a character \eqref{eq:charge1drep} with $\xi=a-b$ (in the language of Section \ref{sec:chargediff}, $\Psi_{a,b}
$ has local charge $\eta=b-a$ w.r.t. $\mathbf x =0$):
\begin{equation}\label{eq:charger0eiphipsiab}R_{(0,e^{i\phi})}\Psi_{a,b}(\mathbf x)=\Psi_{a,b}(\mathcal A_{(0,e^{-i\phi})}\mathbf x)=e^{i(a-b)\phi}\Psi_{a,b}(\mathbf x).\end{equation}
The action of $\mathrm{SO}(2)$ on the real signals $\operatorname{Re}(\Psi_{a,b})
$ and $\operatorname{Im}(\Psi_{a,b})
$ can be related to its action on $\Psi_{a,b}
$ and $\Psi_{b,a}
$ as in Eqs.\eqref{eq:chargereiphixy},\eqref{eq:chargereiphizz}. 

\medskip
\noindent
\textbf{5. Restriction to  $V_\zeta$.}
Let $\widetilde f_{\epsilon_1,\mathrm{loc},\zeta}:V_\zeta\to\mathbb R$ be the restriction of the map $\widetilde f_{\epsilon_1,\mathrm{loc}}$ defined in Eq.\eqref{eq:chargetildefepsloc} to the subspace $V_\zeta$:
\begin{equation}\label{eq:chargeftildeepsloczeta}\widetilde f_{\epsilon_1,\mathrm{loc},\zeta}=\widetilde f_{\epsilon_1,\mathrm{loc}}|_{V_\zeta}.\end{equation}
Consider the map $\widetilde f_{\epsilon_1,\zeta}:V\to U$ defined by projecting to $V_\zeta$ and translating the map $\widetilde f_{\epsilon_1,\mathrm{loc},\zeta}$ to points $\mathbf x\in[-\Lambda,\Lambda]^2$ like in the reconstruction formula \eqref{eq:chargerecwfe1}:
\begin{equation}\label{eq:chargewtfeps1zeta}\widetilde f_{\epsilon_1,\zeta}(\mathbf\Phi)(\mathbf x)=
\begin{cases}\widetilde f_{\epsilon_1,\mathrm{loc},\zeta}(P_{V_\zeta}R_{(-\mathbf x,1)}\mathbf\Phi),&\mathbf x\in[-\Lambda,\Lambda]^2\\
0, &\text{otherwise}.\end{cases}\end{equation}
We claim that if $\zeta$ is sufficiently large then for all $\mathbf \Phi\in K$
\begin{equation}\label{eq:chargefeps1z0}\|\widetilde f_{\epsilon_1,\zeta}(\mathbf\Phi)-P_\Lambda \widetilde{f}_{\epsilon_1}(\mathbf\Phi)\|<\frac{\epsilon}{10}.\end{equation}
Indeed,
\begin{equation}\label{eq:chargefeps1z}\|\widetilde f_{\epsilon_1,\zeta}(\mathbf\Phi)-P_\Lambda \widetilde{f}_{\epsilon_1}(\mathbf\Phi)\|\le 2\Lambda \sup_{\mathbf\Phi_1\in K_1}|\widetilde f_{\epsilon_1,\mathrm{loc}}(P_{V_\zeta}\mathbf\Phi_1)-\widetilde f_{\epsilon_1,\mathrm{loc}}(\mathbf\Phi_1)|,\end{equation}
where \begin{equation}\label{eq:chargek1}K_1=\{R_{(-\mathbf x,1)}\mathbf\Phi)|(\mathbf x, \mathbf\Phi)\in[-\Lambda,\Lambda]^2\times K\}\subset V.\end{equation}
The set $K_1$ is compact, by compactness of $K$ and strong continuity of $R$.  Then, by compactness of $K_1$, strong convergence $P_{V_\zeta}\mathbf\Phi_1\stackrel{\zeta\to \infty}{\longrightarrow} \mathbf\Phi_1$ and continuity of $\widetilde f_{\epsilon_1,\mathrm{loc}}$, the r.h.s. of \eqref{eq:chargefeps1z} becomes arbitrarily small as $\zeta\to\infty$. 

It follows from \eqref{eq:chargefeps1z0} that the problem of approximating $f$ reduces to approximating the map $\widetilde f_{\epsilon_1,\zeta}$ for a fixed finite $\zeta$.

\medskip
\noindent
\textbf{6. Polynomial approximation.} The map $\widetilde f_{\epsilon_1,\mathrm{loc},\zeta}:V_\zeta\to\mathbb R$ defined in \eqref{eq:chargeftildeepsloczeta} is a continuous $\mathrm{SO}(2)$-invariant map on the $\mathrm{SO}(2)$-module $V_\zeta$. By Lemma \ref{lm:chargeso2poly}, such a map can be approximated by invariant polynomials. Let $K_1\subset V$ be the compact set defined in Eq.\eqref{eq:chargek1}. Note that $P_{V_\zeta}K_1$ is then a compact subset of $V_\zeta$. Let $\widehat{f}_{\mathrm{loc}}:V_\zeta\to \mathbb R$ be an $\mathrm{SO}(2)$-invariant polynomial such that  for all $\mathbf\Phi_2\in P_{V_\zeta}K_1$
\begin{equation}\label{eq:chargehatfloc}|\widehat{f}_{\mathrm{loc}}(\mathbf\Phi_2)-\widetilde f_{\epsilon_1,\mathrm{loc},\zeta}(\mathbf\Phi_2)|\le \frac{\epsilon}{10\cdot 2\Lambda}.\end{equation}
Consider now the map $\widehat{f}_0:V\to U$ defined by  
\begin{equation}\label{eq:chargefhat0}
\widehat{f}_0(\mathbf\Phi)(\mathbf x)=\begin{cases} \widehat{f}_{\mathrm{loc}}(P_{V_\zeta}R_{(-\mathbf x,1)}\mathbf\Phi),& \mathbf x\in[-\Lambda,\Lambda]^2,\\
0,&\text{otherwise}. \end{cases} 
\end{equation}
Using Eqs.\eqref{eq:chargewtfeps1zeta} and \eqref{eq:chargehatfloc}, we have for all $\mathbf x\in[-\Lambda,\Lambda]^2$ and $\mathbf\Phi_2\in P_{V_\zeta}K_1$ 
$$|\widehat{f}_{0}(\mathbf\Phi_2)(\mathbf x)-\widetilde f_{\epsilon_1,\zeta}(\mathbf\Phi_2)(\mathbf x)|\le \frac{\epsilon}{10\cdot 2\Lambda}$$
and hence for all $\mathbf\Phi\in K$ \begin{equation}\label{eq:chargefhat0fteps1}\|\widehat{f}_{0}(\mathbf\Phi)-\widetilde f_{\epsilon_1,\zeta}(\mathbf\Phi)\|<\frac{\epsilon}{10}.\end{equation}

\medskip
\noindent
\textbf{7. Identification of convnet with $\lambda=0$.} We show now that the map $\widehat{f}_0$ given in \eqref{eq:chargefhat0} can be written as the scaling limit ($\lambda\to 0$) of a multi-resolution charge-conserving convnet. 

First note that the projector $P_{V_\zeta}$ can be written as
$$P_{V_\zeta}\mathbf\Phi=\sum_{a,b:a+b\le\zeta}\langle\Psi'_{a,b},\mathbf\Phi\rangle\Psi_{a,b},$$
where $\Psi'_{a,b}$ is the basis in $V_\zeta$ dual to the basis $\Psi_{a,b}.$ Let $V_{\zeta,\xi}$ denote the isotypic component in $V_{\zeta}$ spanned by vectors $\Psi_{a,b}$ with $a-b=\xi$. By Eq.\eqref{eq:charger0eiphipsiab}, this notation is consistent with the notation of Section \ref{sec:chargepolyinv} where the number $\xi$ is used to specify the characters \eqref{eq:charge1drep}. By unitarity of the representation $R$, different isotypic components are mutually orthogonal, so $\Psi'_{a,b}\in V_{\zeta,a-b}$ and we can expand
$$\Psi'_{a,b}=\sum_{\genfrac{}{}{0pt}{}{0\le a',b'\le \zeta}{a'-b'=a-b}}c_{a,b,a',b'}\Psi_{a',b'}$$ with some coefficients $c_{a,b,a',b'}$. Then we can write
\begin{align}\label{eq:chargepvzetar}
P_{V_\zeta}R_{(-\mathbf x,1)}\mathbf\Phi=& \sum_{a,b:a+b\le\zeta}\langle\Psi'_{a,b},R_{(-\mathbf x,1)}\mathbf\Phi\rangle\Psi_{a,b}\nonumber\\
=&\sum_{\xi=-\zeta}^{\zeta}\sum_{\genfrac{}{}{0pt}{}{a,b:a+b\le \zeta}{a-b=\xi}}\sum_{\genfrac{}{}{0pt}{}{a',b':a'+b'\le \zeta}{a'-b'=\xi}}\overline{c_{a,b,a',b'}}\langle\Psi_{a',b'},R_{(-\mathbf x,1)}\mathbf\Phi\rangle\Psi_{a,b}\nonumber\\
=&\sum_{\xi=-\zeta}^{\zeta}\sum_{\genfrac{}{}{0pt}{}{a,b:a+b\le \zeta}{a-b=\xi}}\sum_{\genfrac{}{}{0pt}{}{a',b':a'+b'\le \zeta}{a'-b'=\xi}}\overline{c_{a,b,a',b'}}\Big(\int_{\mathbb R^2}\mathbf\Phi(\mathbf x+\mathbf y)\overline{\Psi_{a',b'}(\mathbf y)}d^2\mathbf y\Big)\Psi_{a,b}\nonumber\\
=&\sum_{\xi=-\zeta}^{\zeta}\sum_{\genfrac{}{}{0pt}{}{a,b:a+b\le \zeta}{a-b=\xi}}\sum_{\genfrac{}{}{0pt}{}{a',b':a'+b'\le \zeta}{a'-b'=\xi}}\overline{c_{a,b,a',b'}}(-1)^{a'+b'}\Big(\int_{\mathbb R^2}\mathbf\Phi(\mathbf x-\mathbf y){\Psi_{b',a'}(\mathbf y)}d^2\mathbf y\Big)\Psi_{a,b}\nonumber\\
=& \sum_{\xi=-\zeta}^{\zeta}\sum_{\genfrac{}{}{0pt}{}{a,b:a+b\le \zeta}{a-b=\xi}}\sum_{\genfrac{}{}{0pt}{}{a',b':a'+b'\le \zeta}{a'-b'=\xi}}\overline{c_{a,b,a',b'}}(-1)^{a'+b'}(\mathcal L_{0}^{(b',a')}\mathbf\Phi(\mathbf x))\Psi_{a,b},
\end{align}
where in the last step we used definition \eqref{eq:chargel0ab} of $\mathcal L_{0}^{(a,b)}.$

We can now interpret the map $\widehat{f}_0$ given by \eqref{eq:chargefhat0} as the $\lambda\to 0$ limit of a convnet of Section \ref{sec:chargeconvnet} in the following way.

First, by the above expansion, the part $P_{V_\zeta}R_{(-\mathbf x,1)}$ of the map $\widehat{f}_0$ computes various convolutions $\mathcal L_{0}^{(b',a')}\mathbf\Phi$ with $a'+b'\le \zeta, a'-b'=\xi$ --- this corresponds to the $\lambda\to 0$ limit of smoothing and differentiation layers of Section \ref{sec:chargeconvnet} with $T_{\mathrm{diff}}=\zeta$. The global charge parameter $\mu$ appearing in the decomposition \eqref{eq:chargewdifftdecomp} of the target spaces $W_{\mathrm{diff},t}$ of differentiation layers corresponds to $-\xi(=b'-a')$ in the above formula, while the degree $s$ corresponds to $a'+b'$. The vectors $\Psi_{a,b}$ with $a-b=\xi$ over which we expand in \eqref{eq:chargepvzetar} serve as a particular basis in the $\mu=-\xi$ component of $W_{\mathrm{diff}, T_{\mathrm{diff}}}$. 

Now, the invariant polynomial $\widehat{f}_{\mathrm{loc}}$ appearing in \eqref{eq:chargefhat0} can be expressed as a polynomial in the variables associated with the isotypic components $V_{\zeta,\xi}$. These components are spanned by the vectors $\Psi_{a,b}$ with $a-b=\xi$. By Eq.\eqref{eq:chargepvzetar}, $\widehat{f}_{\mathrm{loc}}(P_{V_\zeta}R_{(-\mathbf x,1)}\mathbf\Phi)$ can then be viewed as an invariant polynomial in the variables $\mathcal L_{0}^{(b',a')}\mathbf\Phi(\mathbf x)$ that correspond to the isotypic components $V_{\zeta,\xi}$ with $\xi=a'-b'$.
 As shown in Section \ref{sec:chargepolyinv}, this invariant polynomial can then  be generated by the layerwise multiplication procedure \eqref{eq:chargeftmun} starting from the initial variables $\mathcal L_{0}^{(b',a')}\mathbf\Phi(\mathbf x)$. This procedure is reproduced in the definition \eqref{eq:chargemultweightsmu},\eqref{eq:chargemultweights0} of convnet multiplication layers. (The charge-conservation constraints are expressed in Eqs.\eqref{eq:chargemultweightsmu},\eqref{eq:chargemultweights0} in terms of $\mu$ rather than $\xi$, but $\mu=-\xi$, and the constraints are invariant with respect to changing the sign of all $\mu$'s.) Thus, if the number $T_{\mathrm{mult}}$ of multiplication layers and the dimensions $d_{\mathrm{mult}}$ of these layers are sufficiently large, then one can arrange the weights in these layers so as to exactly give the map $\mathbf\Phi\mapsto\widehat{f}_{\mathrm{loc}}(P_{V_\zeta}R_{(-\mathbf x,1)}\mathbf\Phi)$.

\medskip
\noindent
\textbf{8. Approximation by convnets with $\lambda>0$.} It remains to show that the scaling limit $\widehat{f}_{0}$ is approximated by the $\lambda>0$ convnets $\widehat{f}_{\lambda}$
in the sense that if $\lambda$ is sufficiently small then for all $\mathbf\Phi\in K$ \begin{equation}\label{eq:chargef0flam}\|\widehat{f}_{0}(\mathbf\Phi)-\widehat{f}_{\lambda}(\mathbf\Phi)\|<\frac{\epsilon}{10}.\end{equation}
We have already shown earlier in Eq.\eqref{eq:chargef0fll2} that for any $\mathbf\Phi\in V$ the signals $\widehat{f}_{\lambda}(\mathbf\Phi)$ converge to $\widehat{f}_{\lambda}(\mathbf\Phi)$ in the $L^2([-\Lambda,\Lambda]^2)$ sense. In fact, Lemma \ref{lm:chargeclt} implies that this convergence is uniform on any compact set  $K\subset V$, which proves Eq.\eqref{eq:chargef0flam}.

\medskip
\noindent
Summarizing all the above steps, we have constructed a multi-resolution charge-conserving convnet $\widehat{f}_{\lambda}$ such that, by the inequalities \eqref{eq:chargetfeps1fphi},\eqref{eq:chargepltfeps1},\eqref{eq:chargefeps1z0},\eqref{eq:chargefhat0fteps1} and \eqref{eq:chargef0flam}, we have $\sup_{\mathbf \Phi\in K}\|\widehat f_\lambda(\mathbf \Phi)-f(\mathbf \Phi)\|\le\epsilon$ for all sufficiently small $\lambda$. This completes the proof of the sufficiency part.
\end{proof}

\section{Discussion}\label{sec:discus}
We summarize and discuss the obtained results, and indicate potential directions of further research.

In Section \ref{sec:compact} we considered approximation of maps defined on finite-dimensional spaces and described universal and exactly invariant/equivariant extensions of the usual shallow neural network (Propositions \ref{th:invar}, \ref{th:equivar}). These extensions are obtained by adding to the network a special polynomial layer. This construction can be seen as an alternative to the symmetrization of the network (similarly to how constructing symmetric polynomials as functions of elementary symmetric polynomials is an alternative to symmetrizing non-symmetric polynomials). A drawback (inherited from the theory of invariant polynomials) of this construction  is that it requires us to know appropriate sets of generating polynomial invariants/equivariants, which is difficult in practice. This difficulty can be ameliorated using polarization if the modules in question are decomposed into multiple copies of a few basic modules  (Proposition \ref{th:constr}, \ref{th:constr_equiv}), but this approach still may be too complicated in general for practical applications. 

Nevertheless, in the case of the symmetric group $S_N$ we have derived an explicit complete $S_N$-invariant modification of the usual shallow neural network  (Theorem \ref{th:sn}). While complete and exactly $S_N$-invariant, this modification does not involve symmetrization over $S_N$. With its relatively small computational complexity, this modification thus presents a viable alternative to the symmetrization-based approach.

One can expect that further progress in the design of invariant/equivariant models may be achieved by using more advanced general constructions from the representation and invariant theories. In particular, in Section \ref{sec:compact} we have not considered \emph{products} of representations, but later in Section \ref{sec:charge} we essentially use them in the abelian $\mathrm{SO}(2)$ setting when defining multiplication layers in ``charge-conserving convnet'' . 

In Section \ref{sec:translations} we considered approximations of maps defined on the space $V=L^2(\mathbb R^\nu,\mathbb R^{d_V})$ of $d_V$-component signals on $\mathbb R^\nu$. The crucial feature of this setting is the infinite-dimensionality of the space $V$, which requires us to reconsider the notion of approximation. Inspired by classical finite-dimensional results \cite{pinkus1999approximation}, our approach in Section \ref{sec:translations} was to assume that a map $f$ is defined on the whole $L^2(\mathbb R^\nu,\mathbb R^{d_V})$ as a map $f:L^2(\mathbb R^\nu,\mathbb R^{d_V})\to L^2(\mathbb R^\nu,\mathbb R^{d_U})$ or $f:L^2(\mathbb R^\nu,\mathbb R^{d_V})\to \mathbb R$, and consider its approximation by finite models $\widehat f$ in a weak sense of comparison on compact subsets of $V$ (see Definitions \ref{def:basicconvnet} and \ref{def:convpool}). This approach has allowed us to prove reasonable universal approximation properties of standard convnets. Specifically, in Theorem \ref{th:convmain} we prove that a map $f:L^2(\mathbb R^\nu,\mathbb R^{d_V})\to L^2(\mathbb R^\nu,\mathbb R^{d_U})$ can be approximated by convnets without pooling if and only if $f$ is norm-continuous and $\mathbb R^\nu$-equivariant. In Theorem \ref{th:convpool} we prove that a map $f:L^2(\mathbb R^\nu,\mathbb R^{d_V})\to \mathbb R$ can be approximated by convnets with downsampling if and only if $f$ is norm-continuous.

In applications involving convnets (e.g., image recognition or segmentation), the approximated maps $f$ are considered only on small subsets of the full space $V$. Compact (or, more generally, precompact) subsets have properties that seem to make them a reasonable general abstraction for such subsets. In particular, a subset $K\subset V$ is precompact if, for example, it results from a continuous generative process involving finitely many bounded parameters; or if $K$ is a finite union of precompact subsets; or if for any $\epsilon>0$ the set $K$ can be covered by finitely many $\epsilon$-balls. From this perspective, it seems reasonable to consider restrictions of maps $f$ to compact sets, as we did in our weak notion of approximation in Section \ref{sec:translations}. At the same time, it would be interesting to refine the notion of model convergence by considering the structure of the sets $K$ in more detail and relate it quantitatively to the approximation accuracy (in partucular, paving the way to computing approximation rates). 

In Section  \ref{sec:charge} we consider the task of constructing finite universal approximators for maps $f:L^2(\mathbb R^2,\mathbb R^{d_V})\to L^2(\mathbb R^2,\mathbb R^{d_U})$ equivariant with respect to the group $\mathrm{SE}(2)$ of two-dimensional rigid planar motions. We introduce a particular convnet-like model -- ``charge-conserving convnet'' -- solving this task. We extend the topological framework of Section \ref{sec:translations} to rigorously formulate the properties of equivariance and completeness to be proved. Our main result, Theorem \ref{th:charge}, shows that a map $f:L^2(\mathbb R^2,\mathbb R^{d_V})\to L^2(\mathbb R^2,\mathbb R^{d_U})$ can be approximated in the small-scale limit by finite charge-conserving convnets if and only if $f$ is norm-continuous and $\mathrm{SE}(2)$-equivariant. 

The construction of this convnet is based on splitting the feature space into isotypic components characterized by a particular representation of the group $\mathrm{SO}(2)$ of proper 2D rotations. The information flow in the model is constrained by what can be interpreted as ``charge conservation'' (hence the name of the model). The model is essentially polynomial, only including elementary arithmetic operations ($+,-,*$) arranged so as to satisfy these constraints but otherwise achieve full expressivity. 

While in Sections \ref{sec:translations}, \ref{sec:charge} we have constructed intrinsically $\mathbb R^\nu$- and $\mathrm{SO}(2)$-equivariant and complete approximators for maps $f:L^2(\mathbb R^\nu,\mathbb R^{d_V})\to L^2(\mathbb R^2,\mathbb R^{d_U})$, we have not been able to similarly construct intrinsically $\mathbb R^\nu$-invariant approximators for maps $f:L^2(\mathbb R^\nu,\mathbb R^{d_V})\to \mathbb R$. As noted in Section \ref{sec:translations} and confirmed by Theorem \ref{th:convpool}, if we simply include pooling in the convnet, it completely destroys the $\mathbb R^\nu$-invariance in our continuum limit. It would be interesting to further explore this issue.

The convnets considered in Section \ref{sec:translations} have a rather conventional structure as sequences of linear convolutional layers equipped with a nonlinear activation function \citep{Goodfellow-et-al-2016}. In contrast, the charge-conserving convnets of Section \ref{sec:charge} have a special and somewhat artificial structure (three groups of layers of which the first two are linear and commuting; no arbitrary nonlinearities). This structure was essential for our proof of the main Theorem \ref{th:charge}, since these assumptions on the model allowed us to prove that the model is both $\mathrm{SE}(2)$-equivariant and complete. It would be interesting to extend this theorem to more general approximation models.

\bibliographystyle{plain}
\bibliography{main}

\begin{thebibliography}{44}
\providecommand{\natexlab}[1]{#1}
\providecommand{\url}[1]{\texttt{#1}}
\expandafter\ifx\csname urlstyle\endcsname\relax
  \providecommand{\doi}[1]{doi: #1}\else
  \providecommand{\doi}{doi: \begingroup \urlstyle{rm}\Url}\fi

\bibitem[Anselmi et~al.(2016)Anselmi, Rosasco, and
  Poggio]{anselmi2016invariance}
Fabio Anselmi, Lorenzo Rosasco, and Tomaso Poggio.
\newblock On invariance and selectivity in representation learning.
\newblock \emph{Information and Inference}, 5\penalty0 (2):\penalty0 134--158,
  2016.

\bibitem[Bruna and Mallat(2013)]{bruna2013invariant}
Joan Bruna and St{\'e}phane Mallat.
\newblock Invariant scattering convolution networks.
\newblock \emph{IEEE transactions on pattern analysis and machine
  intelligence}, 35\penalty0 (8):\penalty0 1872--1886, 2013.

\bibitem[Burkhardt and Siggelkow(2001)]{burkhardt2001invariant}
Hans Burkhardt and S~Siggelkow.
\newblock Invariant features in pattern recognition--fundamentals and
  applications.
\newblock \emph{Nonlinear model-based image/video processing and analysis},
  pages 269--307, 2001.

\bibitem[Cohen and Shashua(2016)]{cohen2016convolutional}
Nadav Cohen and Amnon Shashua.
\newblock Convolutional rectifier networks as generalized tensor
  decompositions.
\newblock In \emph{International Conference on Machine Learning}, pages
  955--963, 2016.

\bibitem[Cohen et~al.(2017)Cohen, Sharir, Levine, Tamari, Yakira, and
  Shashua]{cohen2017analysis}
Nadav Cohen, Or~Sharir, Yoav Levine, Ronen Tamari, David Yakira, and Amnon
  Shashua.
\newblock Analysis and design of convolutional networks via hierarchical tensor
  decompositions.
\newblock \emph{arXiv preprint arXiv:1705.02302}, 2017.

\bibitem[Cohen and Welling(2016)]{cohen2016group}
Taco Cohen and Max Welling.
\newblock Group equivariant convolutional networks.
\newblock In \emph{Proceedings of The 33rd International Conference on Machine
  Learning}, pages 2990--2999, 2016.

\bibitem[Cybenko(1989)]{cybenko1989approximation}
George Cybenko.
\newblock Approximation by superpositions of a sigmoidal function.
\newblock \emph{Mathematics of Control, Signals, and Systems (MCSS)},
  2\penalty0 (4):\penalty0 303--314, 1989.

\bibitem[Dieleman et~al.(2016)Dieleman, De~Fauw, and
  Kavukcuoglu]{dieleman2016exploiting}
Sander Dieleman, Jeffrey De~Fauw, and Koray Kavukcuoglu.
\newblock Exploiting cyclic symmetry in convolutional neural networks.
\newblock \emph{arXiv preprint arXiv:1602.02660}, 2016.

\bibitem[Esteves et~al.(2017)Esteves, Allen-Blanchette, Zhou, and
  Daniilidis]{esteves2017polar}
Carlos Esteves, Christine Allen-Blanchette, Xiaowei Zhou, and Kostas
  Daniilidis.
\newblock Polar transformer networks.
\newblock \emph{arXiv preprint arXiv:1709.01889}, 2017.

\bibitem[Funahashi(1989)]{funahashi1989approximate}
Ken-Ichi Funahashi.
\newblock On the approximate realization of continuous mappings by neural
  networks.
\newblock \emph{Neural networks}, 2\penalty0 (3):\penalty0 183--192, 1989.

\bibitem[Gens and Domingos(2014)]{gens2014deep}
Robert Gens and Pedro~M Domingos.
\newblock Deep symmetry networks.
\newblock In \emph{Advances in neural information processing systems}, pages
  2537--2545, 2014.

\bibitem[Goodfellow et~al.(2016)Goodfellow, Bengio, and
  Courville]{Goodfellow-et-al-2016}
Ian Goodfellow, Yoshua Bengio, and Aaron Courville.
\newblock \emph{Deep Learning}.
\newblock MIT Press, 2016.
\newblock \url{http://www.deeplearningbook.org}.

\bibitem[He et~al.(2016)He, Zhang, Ren, and Sun]{he2016deep}
Kaiming He, Xiangyu Zhang, Shaoqing Ren, and Jian Sun.
\newblock Deep residual learning for image recognition.
\newblock In \emph{Proceedings of the IEEE conference on computer vision and
  pattern recognition}, pages 770--778, 2016.

\bibitem[Henriques and Vedaldi(2016)]{henriques2016warped}
Jo{\~a}o~F Henriques and Andrea Vedaldi.
\newblock Warped convolutions: Efficient invariance to spatial transformations.
\newblock \emph{arXiv preprint arXiv:1609.04382}, 2016.

\bibitem[Hilbert(1890)]{hilbert1890theorie}
David Hilbert.
\newblock {{\"U}ber die Theorie der algebraischen Formen}.
\newblock \emph{Mathematische annalen}, 36\penalty0 (4):\penalty0 473--534,
  1890.

\bibitem[Hilbert(1893)]{hilbert1893vollen}
David Hilbert.
\newblock {{\"U}ber die vollen Invariantensysteme}.
\newblock \emph{Mathematische Annalen}, 42\penalty0 (3):\penalty0 313--373,
  1893.

\bibitem[Hornik(1993)]{hornik1993some}
Kurt Hornik.
\newblock Some new results on neural network approximation.
\newblock \emph{Neural networks}, 6\penalty0 (8):\penalty0 1069--1072, 1993.

\bibitem[Hornik et~al.(1989)Hornik, Stinchcombe, and
  White]{hornik1989multilayer}
Kurt Hornik, Maxwell Stinchcombe, and Halbert White.
\newblock Multilayer feedforward networks are universal approximators.
\newblock \emph{Neural networks}, 2\penalty0 (5):\penalty0 359--366, 1989.

\bibitem[Kraft and Procesi(2000)]{kraft2000classical}
Hanspeter Kraft and Claudio Procesi.
\newblock Classical invariant theory, a primer.
\newblock \emph{Lecture Notes}, 2000.

\bibitem[le~Cun(1989)]{lecun1989generalization}
Yann le~Cun.
\newblock Generalization and network design strategies.
\newblock In \emph{Connectionism in perspective}, pages 143--155. 1989.

\bibitem[LeCun et~al.(2015)LeCun, Bengio, and Hinton]{lecun2015deep}
Yann LeCun, Yoshua Bengio, and Geoffrey Hinton.
\newblock Deep learning.
\newblock \emph{Nature}, 521\penalty0 (7553):\penalty0 436--444, 2015.

\bibitem[Leshno et~al.(1993)Leshno, Lin, Pinkus, and
  Schocken]{leshno1993multilayer}
Moshe Leshno, Vladimir~Ya Lin, Allan Pinkus, and Shimon Schocken.
\newblock Multilayer feedforward networks with a nonpolynomial activation
  function can approximate any function.
\newblock \emph{{Neural Networks}}, 6\penalty0 (6):\penalty0 861--867, 1993.

\bibitem[Mallat(2012)]{mallat2012group}
St{\'e}phane Mallat.
\newblock Group invariant scattering.
\newblock \emph{Communications on Pure and Applied Mathematics}, 65\penalty0
  (10):\penalty0 1331--1398, 2012.

\bibitem[Mallat(2016)]{mallat2016understanding}
St{\'e}phane Mallat.
\newblock Understanding deep convolutional networks.
\newblock \emph{Phil. Trans. R. Soc. A}, 374\penalty0 (2065):\penalty0
  20150203, 2016.

\bibitem[Manay et~al.(2006)Manay, Cremers, Hong, Yezzi, and
  Soatto]{manay2006integral}
Siddharth Manay, Daniel Cremers, Byung-Woo Hong, Anthony~J Yezzi, and Stefano
  Soatto.
\newblock Integral invariants for shape matching.
\newblock \emph{IEEE Transactions on pattern analysis and machine
  intelligence}, 28\penalty0 (10):\penalty0 1602--1618, 2006.

\bibitem[Marcos et~al.(2016)Marcos, Volpi, Komodakis, and
  Tuia]{marcos2016rotation}
Diego Marcos, Michele Volpi, Nikos Komodakis, and Devis Tuia.
\newblock Rotation equivariant vector field networks.
\newblock \emph{arXiv preprint arXiv:1612.09346}, 2016.

\bibitem[Mhaskar and Micchelli(1992)]{mhaskar1992approximation}
Hrushikesh~N Mhaskar and Charles~A Micchelli.
\newblock Approximation by superposition of sigmoidal and radial basis
  functions.
\newblock \emph{Advances in Applied mathematics}, 13\penalty0 (3):\penalty0
  350--373, 1992.

\bibitem[Munkres(2000)]{munkres2000topology}
J.R. Munkres.
\newblock \emph{Topology}.
\newblock Featured Titles for Topology Series. Prentice Hall, Incorporated,
  2000.
\newblock ISBN 9780131816299.
\newblock URL \url{https://books.google.ru/books?id=XjoZAQAAIAAJ}.

\bibitem[Pinkus(1996)]{pinkus1996tdi}
Allan Pinkus.
\newblock {TDI-Subspaces of $C(\mathbb R^d)$ and Some Density Problems from
  Neural Networks}.
\newblock \emph{journal of approximation theory}, 85\penalty0 (3):\penalty0
  269--287, 1996.

\bibitem[Pinkus(1999)]{pinkus1999approximation}
Allan Pinkus.
\newblock Approximation theory of the mlp model in neural networks.
\newblock \emph{Acta Numerica}, 8:\penalty0 143--195, 1999.

\bibitem[Poggio et~al.(2017)Poggio, Mhaskar, Rosasco, Miranda, and
  Liao]{poggio2017and}
Tomaso Poggio, Hrushikesh Mhaskar, Lorenzo Rosasco, Brando Miranda, and Qianli
  Liao.
\newblock Why and when can deep-but not shallow-networks avoid the curse of
  dimensionality: A review.
\newblock \emph{International Journal of Automation and Computing}, pages
  1--17, 2017.

\bibitem[Reisert(2008)]{Reisert:2008}
Marco Reisert.
\newblock \emph{Group Integration Techniques in Pattern Analysis}.
\newblock {PhD thesis}, Albert-Ludwigs-University, 2008.

\bibitem[Schmid(1991)]{schmid1991finite}
Barbara~J Schmid.
\newblock Finite groups and invariant theory.
\newblock In \emph{Topics in invariant theory}, pages 35--66. Springer, 1991.

\bibitem[Schulz-Mirbach(1995)]{schulz1995invariant}
Hanns Schulz-Mirbach.
\newblock Invariant features for gray scale images.
\newblock In \emph{Mustererkennung 1995}, pages 1--14. Springer, 1995.

\bibitem[Serre(2012)]{serre2012linear}
Jean-Pierre Serre.
\newblock \emph{Linear representations of finite groups}, volume~42.
\newblock Springer Science \& Business Media, 2012.

\bibitem[Sifre and Mallat(2014)]{sifre2014rigid}
Laurent Sifre and St{\'e}phane Mallat.
\newblock Rigid-motion scattering for texture classification.
\newblock \emph{arXiv preprint arXiv:1403.1687}, 2014.

\bibitem[Simon(1996)]{simon1996representations}
Barry Simon.
\newblock \emph{Representations of finite and compact groups}.
\newblock Number~10. American Mathematical Soc., 1996.

\bibitem[Skibbe(2013)]{Skibbe:2013}
Henrik Skibbe.
\newblock \emph{Spherical Tensor Algebra for Biomedical Image Analysis}.
\newblock {PhD thesis}, Albert-Ludwigs-University, 2013.

\bibitem[Springenberg et~al.(2014)Springenberg, Dosovitskiy, Brox, and
  Riedmiller]{springenberg2014striving}
Jost~Tobias Springenberg, Alexey Dosovitskiy, Thomas Brox, and Martin
  Riedmiller.
\newblock Striving for simplicity: The all convolutional net.
\newblock \emph{arXiv preprint arXiv:1412.6806}, 2014.

\bibitem[Thoma(2017)]{Thoma:2017}
Martin Thoma.
\newblock Analysis and optimization of convolutional neural network
  architectures.
\newblock Masters’s thesis, Karlsruhe Institute of Technology, Karlsruhe,
  Germany, June 2017.
\newblock URL \url{https://martin-thoma.com/msthesis/}.

\bibitem[Vinberg(2012)]{vinberg2012linear}
Ernest~B Vinberg.
\newblock \emph{Linear representations of groups}.
\newblock Birkh{\"a}user, 2012.

\bibitem[Waibel et~al.(1989)Waibel, Hanazawa, Hinton, Shikano, and
  Lang]{Waibel89}
A.~Waibel, T.~Hanazawa, G.~Hinton, K.~Shikano, and K.~J. Lang.
\newblock Phoneme recognition using time-delay neural networks.
\newblock \emph{IEEE Transactions on Acoustics, Speech, and Signal Processing},
  37\penalty0 (3):\penalty0 328--339, 1989.

\bibitem[Weyl(1946)]{weyl1946classical}
H~Weyl.
\newblock The classical groups: their invariants and representations.
\newblock \emph{Princeton mathematical series}, \penalty0 (1), 1946.

\bibitem[Worfolk(1994)]{worfolk1994zeros}
Patrick~A Worfolk.
\newblock Zeros of equivariant vector fields: Algorithms for an invariant
  approach.
\newblock \emph{Journal of Symbolic Computation}, 17\penalty0 (6):\penalty0
  487--511, 1994.

\end{thebibliography}

\appendix

\section{Proof of Lemma \ref{lm:chargeclt}}\label{sec:clt}

The proof is a slight modification of the standard proof of Central Limit Theorem via Fourier transform (the CLT can be directly used to prove the lemma in the case $a=b=0$ when $\mathcal L_{\lambda}^{(a,b)}$ only includes diffusion factors). 

To simplify notation, assume without loss of generality that $d_V=1$ (in the general case the proof is essentially identical). We will use the appropriately discretized version of the Fourier transform (i.e., the Fourier series expansion). Given a discretized signal $ \Phi:(\lambda\mathbb Z)^2\to \mathbb C$, we define $\mathcal F_\lambda \Phi$ as a function on $[-\frac{\pi}{\lambda},\frac{\pi}{\lambda}]^2$ by 
$$\mathcal F_\lambda \Phi(\mathbf p)=\frac{\lambda^2}{2\pi}\sum_{{\gamma}\in(\lambda\mathbb Z)^2} \Phi({\gamma})e^{-i\mathbf p\cdot {\gamma}}.$$
Then, $\mathcal F_\lambda: L^2((\lambda\mathbb Z)^2,\mathbb C)\to L^2([-\frac{\pi}{\lambda},\frac{\pi}{\lambda}]^2,\mathbb C)$ is a unitary isomorphism, assuming that the scalar product in the input space is defined by $\langle \Phi,\Psi\rangle=\lambda^2\sum_{\gamma\in(\lambda\mathbb Z)^2}\overline{\Phi(\gamma)}\Psi(\gamma)$ and in the output space by $\langle \Phi,\Psi\rangle=\int_{[-\frac{\pi}{\lambda},\frac{\pi}{\lambda}]^2} \overline{\Phi(\mathbf p)}\Psi(\mathbf p) d^2\mathbf p$. Let $P_\lambda$ be the discretization projector \eqref{eq:ptau}. It is easy to check that $\mathcal F_\lambda P_\lambda$ strongly converges to the standard Fourier transform as $\lambda\to 0:$
$$\lim_{\lambda\to 0}\mathcal F_\lambda P_\lambda \Phi = \mathcal F_0 \Phi, \quad \Phi\in L^2(\mathbb R^2,\mathbb C),$$
where $$\mathcal F_0 \Phi(\mathbf p)=\frac{1}{2\pi}\int_{\mathbb R^2} \Phi({\gamma})e^{-i\mathbf p\cdot {\gamma}}d^2{\gamma}$$
and where we naturally embed $L^2([-\frac{\pi}{\lambda},\frac{\pi}{\lambda}]^2,\mathbb C)\subset L^2(\mathbb R^2,\mathbb C)$. Conversely, let $P_\lambda'$ denote the orthogonal projection onto the subspace $L^2([-\frac{\pi}{\lambda},\frac{\pi}{\lambda}]^2,\mathbb C)$ in $L^2(\mathbb R^2,\mathbb C):$
\begin{equation}\label{eq:chargep'}
P_\lambda':\Phi\mapsto \Phi|_{[-\frac{\pi}{\lambda},\frac{\pi}{\lambda}]^2}.
\end{equation} Then 
\begin{equation}\label{eq:chargefinvlim}
\lim_{\lambda\to 0}\mathcal F_\lambda^{-1} P'_\lambda \Phi = \mathcal F_0^{-1} \Phi, \quad \Phi\in L^2(\mathbb R^2).
\end{equation}
Fourier transform gives us the spectral representation of the discrete differential operators \eqref{eq:dzlam},\eqref{eq:dzovlam},\eqref{eq:laplacelam}  as operators of multiplication by function: 
\begin{align*}
\mathcal F_\lambda \partial_{z}^{(\lambda)} \Phi &=\Psi_{\partial_{z}^{(\lambda)}}\cdot\mathcal F_\lambda \Phi,\\
\mathcal F_\lambda \partial_{\overline{z}}^{(\lambda)} \Phi &=\Psi_{\partial_{\overline{z}}^{(\lambda)}}\cdot\mathcal F_\lambda  \Phi,\\
\mathcal F_\lambda \Delta^{(\lambda)} \Phi &=\Psi_{\Delta^{(\lambda)}}\cdot\mathcal F_\lambda  \Phi,
\end{align*}
where, denoting $\mathbf p=(p_x,p_y)$,
\begin{align*}\Psi_{\partial_{z}^{(\lambda)}}(p_x,p_y)&=\frac{i}{2\lambda}(\sin\lambda p_x-i\sin\lambda p_y),\\
\Psi_{\partial_{\overline{z}}^{(\lambda)}}(p_x,p_y)&=\frac{i}{2\lambda}(\sin\lambda p_x+i\sin\lambda p_y),\\
\Psi_{\Delta^{(\lambda)}}(p_x,p_y) &=-\frac{4}{\lambda^2}\Big(\sin^2\frac{\lambda p_x}{2}+\sin^2\frac{\lambda p_y}{2}\Big).
\end{align*}
The operator $\mathcal L_\lambda^{(a,b)}$ defined in \eqref{eq:chargedefltau} can then be written as  
\begin{equation*}
\mathcal F_\lambda \mathcal L_\lambda^{(a,b)} \Phi =\Psi_{\mathcal L_\lambda^{(a,b)}}\cdot\mathcal F_\lambda P_\lambda\Phi,
\end{equation*}
where the function $\Psi_{\mathcal L_\lambda^{(a,b)}}$ is given by
\begin{equation*}\Psi_{\mathcal L_\lambda^{(a,b)}}=(\Psi_{\partial_{z}^{(\lambda)}})^a (\Psi_{\partial_{\overline{z}}^{(\lambda)}})^b(1+\tfrac{\lambda^2}{8}\Psi_{\Delta^{(\lambda)}})^{\lceil 4/\lambda^2\rceil}.
\end{equation*}
We can then write $\mathcal L_\lambda^{(a,b)}\Phi$ as a convolution of $P_\lambda \Phi$ with the kernel $$\Psi_{a,b}^{(\lambda)}=\frac{1}{2\pi}\mathcal F_\lambda^{-1}\Psi_{\mathcal L_\lambda^{(a,b)}}$$ on the grid $(\lambda\mathbb Z)^2:$ 
\begin{equation}\label{eq:chargellabconv}\mathcal L_\lambda^{(a,b)}\Phi(\gamma)=\lambda^2\sum_{\theta\in(\lambda\mathbb Z)^2}P_\lambda\Phi(\gamma-\theta)\Psi_{a,b}^{(\lambda)}(\theta),\quad \gamma\in (\lambda\mathbb Z)^2.
\end{equation}


Now consider the operator $\mathcal L_0^{(a,b)}$ defined in \eqref{eq:chargel0ab}. At each $\mathbf x\in \mathbb R^2$, the value $\mathcal L_0^{(a,b)} \Phi(\mathbf x)$ can be written as a scalar product:
\begin{equation}\label{eq:cltl0ab}
\mathcal L_0^{(a,b)} \Phi(\mathbf x)=\int_{\mathbb R^2}\Phi(\mathbf x-\mathbf y)\Psi_{a,b}(\mathbf y)d^2\mathbf y=\langle R_{-\mathbf x}\widetilde{\Phi},\Psi_{a,b}\rangle_{L^2(\mathbb R^2)},\end{equation}
where  $\widetilde{\Phi}(\mathbf x)=\overline{\Phi}(-\mathbf x)$, $\Psi_{a,b}$ is defined by \eqref{eq:chargepsiab}, and $R_{\mathbf x}$ is our standard representation of the group $\mathbb R^2$, $R_{\mathbf x}\Phi(\mathbf y)=\Phi(\mathbf y-\mathbf x)$. For $\lambda>0$, we can write $\mathcal L_\lambda^{(a,b)}  \Phi(\mathbf x)$ in a similar form. Indeed, using \eqref{eq:chargellabconv} and naturally extending the discretized signal $\Psi_{a,b}^{(\lambda)}$ to the whole $\mathbb R^2$, we have
\begin{equation*}
\mathcal L_\lambda^{(a,b)}\Phi(\gamma)=\int_{\mathbb R^2}\Phi(\gamma-\mathbf y)\Psi_{a,b}^{(\lambda)}(\mathbf y)d^2\mathbf y=\langle R_{-\gamma}\widetilde{\Phi},\Psi_{a,b}^{(\lambda)}\rangle_{L^2(\mathbb R^2)}.
\end{equation*}
Then, for any $\mathbf x\in \mathbb R^2$ we can write 
\begin{equation}\label{eq:chargellabx}\mathcal L_\lambda^{(a,b)}\Phi(\mathbf x)=\langle R_{-\mathbf x+\delta \mathbf x}\widetilde{\Phi},\Psi_{a,b}^{(\lambda)}\rangle_{L^2(\mathbb R^2)},\end{equation}
where $-\mathbf x+\delta\mathbf x$ is the point of the grid $(\lambda\mathbb Z)^2$ nearest to $-\mathbf x$.

Now consider the formulas \eqref{eq:cltl0ab},\eqref{eq:chargellabx} and observe that, by Cauchy-Schwarz inequality and since $R$ is norm-preserving, to prove statement 1) of the lemma we only need to show that the functions $\Psi_{a,b},\Psi_{a,b}^{(\lambda)}$ have uniformly bounded $L^2$-norms. For $\lambda>0$ we have
\begin{align}\|\Psi_{a,b}^{(\lambda)}\|^2_{L^2(\mathbb R^2)}&=\Big\|\frac{1}{2\pi}\mathcal F_\lambda^{-1}\Psi_{\mathcal L_\lambda^{(a,b)}}\Big\|^2_{L^2(\mathbb R^2)}\nonumber\\
&=\frac{1}{4\pi^2}\|\Psi_{\mathcal L_\lambda^{(a,b)}}\|^2_{L^2(\mathbb R^2)}\nonumber\\
&=\frac{1}{4\pi^2}\big\|(\Psi_{\partial_{z}^{(\lambda)}})^a (\Psi_{\partial_{\overline{z}}^{(\lambda)}})^b(1+\tfrac{\lambda^2}{8}\Psi_{\Delta^{(\lambda)}})^{\lceil 4/\lambda^2\rceil}\big\|_{L^2(\mathbb R^2)}^2\nonumber\\
&\le  \frac{1}{4\pi^2}\int_{-\pi/\lambda}^{\pi/\lambda}\int_{-\pi/\lambda}^{\pi/\lambda}
\big(\tfrac{|p_x|+|p_y|}{2}\big)^{2(a+b)}\exp\big(-\lceil 4/\lambda^2\rceil(\sin^2\tfrac{\lambda p_x}{2}+\sin^2\tfrac{\lambda p_y}{2})\big)
dp_xdp_y\nonumber\\
&\le \frac{1}{4\pi^2}\int_{-\infty}^{\infty}\int_{-\infty}^{\infty}
\big(\tfrac{|p_x|+|p_y|}{2}\big)^{2(a+b)}\exp(-\tfrac{4}{\pi^2}(p_x^2+p_y^2))dp_xdp_y\label{eq:chargepsiabbound}\\
&<\infty,\nonumber
\end{align}
where we used the inequalities
\begin{align*}&|\sin t| \le |t|,\\
&|1+t|\le e^t,\quad t>-1,\\
&|\sin t|\ge \tfrac{2|t|}{\pi},\quad t\in [-\tfrac{\pi}{2},\tfrac{\pi}{2}].
\end{align*}
Expression \eqref{eq:chargepsiabbound} provides a finite bound, uniform in $\lambda$, for the squared norms $\|\Psi_{a,b}^{(\lambda)}\|^2$. This bound also holds for $\|\Psi_{a,b}\|^2$.

Next, observe that to establish the strong convergence in statement 2) of the lemma, it suffices to show that 
\begin{equation}\label{eq:chargepsitpsi0}
\lim_{\lambda\to 0}\|\Psi_{a,b}^{(\lambda)}-\Psi_{a,b}\|_{L^2(\mathbb R^2)}=0.\end{equation}
Indeed, by \eqref{eq:cltl0ab},\eqref{eq:chargellabx}, we would then have 
\begin{align*}\|\mathcal L_\lambda^{(a,b)} \Phi-\mathcal L_0^{(a,b)} \Phi \|_\infty
&=\sup_{\mathbf x\in\mathbb R^2}|\langle R_{-\mathbf x+\delta\mathbf x}\widetilde{ \Phi},\Psi_{a,b}^{(\lambda)}\rangle-\langle R_{-\mathbf x}\widetilde{ \Phi},\Psi_{a,b}\rangle|\\
&=\sup_{\mathbf x\in\mathbb R^2}|\langle R_{-\mathbf x}(R_{\delta\mathbf x}-1)\widetilde{ \Phi},\Psi_{a,b}^{(\lambda)}\rangle+\langle R_{-\mathbf x}\widetilde{ \Phi},\Psi_{a,b}^{(\lambda)}-\Psi_{a,b}\rangle|\\
&\le \sup_{\|\delta\mathbf x\|\le\lambda }\|R_{\delta\mathbf x}\widetilde{ \Phi}-\widetilde{ \Phi}\|_2\sup_{\lambda}\|\Psi_{a,b}^{(\lambda)}\|_2+\|\widetilde{ \Phi}\|_2\|\Psi_{a,b}^{(\lambda)}-\Psi_{a,b}\|_2\\
&\stackrel{\lambda\to 0}{\longrightarrow}0
\end{align*}
thanks to the unitarity of $R$, convergence $\lim_{\delta\mathbf x\to 0}\|R_{\delta\mathbf x}\widetilde{ \Phi}-\widetilde{ \Phi}\|_2=0,$ uniform boundedness of $\|\Psi_{a,b}^{(\lambda)}\|_2$ and convergence \eqref{eq:chargepsitpsi0}.

To establish \eqref{eq:chargepsitpsi0}, we write $$\Psi_{a,b}^{(\lambda)}-\Psi_{a,b}=\frac{1}{2\pi}(\mathcal F_\lambda^{-1}\Psi_{\mathcal L_\lambda^{(a,b)}}-\mathcal F_0^{-1}\Psi_{\mathcal L_0^{(a,b)}}),$$
where $\Psi_{\mathcal L_0^{(a,b)}}=2\pi \mathcal F_\lambda \Psi_{a,b}.$ By definition \eqref{eq:chargepsiab} of $\Psi_{a,b}$ and standard properties of Fourier transform, the explicit form of the function $\Psi_{\mathcal L_0^{(a,b)}}$ is
$$\Psi_{\mathcal L_0^{(a,b)}}(p_x,p_y)=\big(\tfrac{i(p_x-ip_y)}{2}\big)^a\big(\tfrac{i(p_x+ip_y)}{2}\big)^b \exp\big(-\tfrac{p_x^2+p_y^2}{2}\big).$$
  
Observe that the function $\Psi_{\mathcal L_0^{(a,b)}}$ is the pointwise limit of the functions $\Psi_{\mathcal L_\lambda^{(a,b)}}$ as $\lambda\to 0$. The functions $|\Psi_{\mathcal L_\lambda^{(a,b)}}|^2$ are bounded uniformly in $\lambda$ by the integrable function appearing in the integral \eqref{eq:chargepsiabbound}. Therefore we can use the dominated convergence theorem and conclude that 
\begin{equation}\label{eq:chargelimpsilt}
\lim_{\lambda\to 0}\big\|\Psi_{\mathcal L_\lambda^{(a,b)}}-P'_\lambda\Psi_{\mathcal L_0^{(a,b)}}\big\|_2=0,
\end{equation}
where $P_\lambda'$ is the cut-off projector \eqref{eq:chargep'}.  
We then have 
\begin{align*}
\|\Psi_{a,b}^{(\lambda)}-\Psi_{a,b}\|_2={}&\frac{1}{2\pi}
\big\|\mathcal F_\lambda^{-1}\Psi_{\mathcal L_\lambda^{(a,b)}}-\mathcal F_0^{-1}\Psi_{\mathcal L_0^{(a,b)}}\big\|_2\\
&\le \frac{1}{2\pi}\big\|\mathcal F_\lambda^{-1}(\Psi_{\mathcal L_\lambda^{(a,b)}}-P'_\lambda\Psi_{\mathcal L_0^{(a,b)}})\big\|_2+ 
\frac{1}{2\pi}\big\|(\mathcal F_\lambda^{-1}P'_\lambda-\mathcal F_0^{-1})\Psi_{\mathcal L_0^{(a,b)}}\big\|_2\\
&\stackrel{\lambda\to 0}{\longrightarrow}0
\end{align*}
by \eqref{eq:chargelimpsilt} and \eqref{eq:chargefinvlim}. We have thus proved \eqref{eq:chargepsitpsi0}.

It remains to show that the convergence $\mathcal L_\lambda^{(a,b)} \Phi\to \mathcal L_0^{(a,b)} \Phi$ is uniform on compact sets $K\subset V$. This follows by a version of continuity argument. For any $\epsilon>0$, we can choose finitely many $\Phi_n,n=1,\ldots,N,$ such that for any $\Phi\in K$ there is some $\Phi_n$ for which $\|\Phi-\Phi_n\|<\epsilon.$ Then $\|\mathcal L_\lambda^{(a,b)} \Phi- \mathcal L_0^{(a,b)} \Phi\|\le \|\mathcal L_\lambda^{(a,b)} \Phi_n- \mathcal L_0^{(a,b)} \Phi_n\|+2\sup_{\lambda\ge 0} \|\mathcal L_\lambda^{(a,b)}\|\epsilon$. Since $\sup_{\lambda\ge 0} \|\mathcal L_\lambda^{(a,b)}\|<\infty$ by statement 1) of the lemma, the desired uniform convergence for  $\Phi\in K$ follows from the convergence for $\Phi_n,n=1,\ldots,N$. 



\end{document}